\documentclass{article}


\PassOptionsToPackage{numbers}{natbib}


    \usepackage[final]{neurips_2022}


\usepackage[utf8]{inputenc} 
\usepackage[T1]{fontenc}    
\usepackage{hyperref}       
\usepackage{url}            
\usepackage{booktabs}       
\usepackage{amsfonts}       
\usepackage{nicefrac}       
\usepackage{microtype}      
\usepackage{xcolor}         

\usepackage{bm}
\usepackage{subfigure}
\usepackage{graphicx}
\usepackage{algorithm}
\usepackage{algorithmic}

\usepackage{amsmath}
\usepackage{amssymb}
\usepackage{mathtools}
\usepackage{amsthm}

\newcommand{\songl}[1]{#1}
\newcommand{\rbt}[1]{#1}
\usepackage{multirow}
\theoremstyle{plain}
\newtheorem{theorem}{Theorem}[section]

\newtheorem{lemma}[theorem]{Lemma}

\theoremstyle{definition}

\newtheorem{assumption}[theorem]{Assumption}
\theoremstyle{remark}

\title{Monte Carlo Tree Search based Variable Selection\\ for High Dimensional Bayesian Optimization}

%

\author{%
 Lei Song\thanks{Equal Contribution}, \ Ke Xue\footnotemark[1], \ Xiaobin Huang, Chao Qian\thanks{Corresponding Author} \\
State Key Laboratory for Novel Software Technology,\\
Nanjing University, Nanjing 210023, China\\
  \texttt{\{songl, xuek, huangxb, qianc\}@lamda.nju.edu.cn} \\
}

\begin{document}

\maketitle

\begin{abstract}
Bayesian optimization (BO) is a class of popular methods for expensive black-box optimization, and has been widely applied to many scenarios. However, BO suffers from the curse of dimensionality, and scaling it to high-dimensional problems is still a challenge. 
In this paper, we propose a variable selection method MCTS-VS based on Monte Carlo tree search (MCTS), to iteratively select and optimize a subset of variables. That is, MCTS-VS constructs a low-dimensional subspace via MCTS and optimizes in the subspace with any BO algorithm. We give a theoretical analysis of the general variable selection method to reveal how it can work. Experiments on high-dimensional synthetic functions and real-world problems (i.e., NAS-bench problems and MuJoCo locomotion tasks) show that MCTS-VS equipped with a proper BO optimizer can achieve state-of-the-art performance.
\end{abstract}

\section{Introduction}

In many real-world tasks such as neural architecture search (NAS)~\citep{lanas} and policy search in reinforcement learning (RL)~\citep{borl}, one often needs to solve the expensive black-box optimization problems. Bayesian optimization (BO)~\citep{telosurvey,bosurvey2,bosurvey3,bosurvey1} is a sample-efficient algorithm for solving such problems. It iteratively fits a surrogate model, typically Gaussian process (GP), and maximizes an acquisition function to obtain the next point to evaluate. While BO has been employed in a wide variety of settings, successful applications are often limited to low-dimensional problems.

Recently, scaling BO to high-dimensional problems has received a lot of interest. Decomposition-based methods~\citep{addtreestructure,factorgraph,addgpucb,addoverlapping2,addoverlapping1} assume that the high-dimensional function to be optimized has a certain structure, typically the additive structure. By decomposing the original high-dimensional function into the sum of several low-dimensional functions, they optimize each low-dimensional function to obtain the point in the high-dimensional space. However, it is not easy to decide whether a decomposition exists as well as to learn the decomposition. 

Other methods often assume that the original high-dimensional function with dimension $D$ has a low-dimensional subspace with dimension $d\ll D$, and then perform the optimization in the low-dimensional subspace and project the low-dimensional point back for evaluation. For example, embedding-based methods~\citep{alebo,hesbo,rembo} use a random matrix to embed the original space into the low-dimensional subspace. Another way is to select a subset of variables directly, which can even avoid the time-consuming matrix operations of embedding-based methods. For example, Dropout~\citep{dropout} selects $d$ variables randomly in each iteration. Note that for both embedding and variable selection methods, the parameter $d$ can have a large influence on the performance, which is, however, difficult to set in real-world problems.


In this paper, we propose a new Variable Selection method using Monte Carlo Tree Search (MCTS), called MCTS-VS. MCTS is employed to partition the variables into important and unimportant ones, and only those selected important variables are optimized via any black-box optimization algorithm, e.g., vanilla BO~\citep{bosurvey1} or TuRBO~\citep{turbo}. The values of unimportant variables are sampled using historical information. Compared with Dropout-BO, MCTS-VS can select important variables automatically.

We also provide regret and computational complexity analyses of general variable selection methods, showing that variable selection can reduce the computational complexity while increasing the cumulative regret. Our regret bound generalizes that of GP-UCB~\cite{gpucb} which always selects all variables, as well as that of Dropout~\cite{dropout} which selects $d$ variables randomly in each iteration. The results suggest that a good variable selection method should select as important variables as possible.

Experiments on high-dimensional synthetic functions and real-world problems (i.e., NAS and RL problems) show that MCTS-VS is better than the previous variable selection method Dropout~\cite{dropout}, and can also achieve the competitive performance to state-of-the-art BO algorithms. Furthermore, its running time is small due to the advantage of variable selection. We also observe that MCTS-VS can select important variables, explaining its good performance based on our theoretical analysis.  

\section{Background}

\subsection{Bayesian Optimization}


We consider the problem $\max_{\bm x \in \mathcal X} f(\bm x)$, where $f$ is a black-box function and $\mathcal X \subseteq \mathbb R^D$ is the domain. The basic framework of BO contains two critical components: a surrogate model and an acquisition function. GP is the most popular surrogate model. Given the sampled data points $\{(\bm x^i, y^i)\}_{i=1}^{t-1}$, where $y^i=f(\bm x^i) + \epsilon^i$ and $\epsilon^i\sim \mathcal{N}(0, \eta^2)$ is the observation noise, GP at iteration $t$ seeks to infer $f\sim \mathcal{GP}(\mu(\cdot), k(\cdot, \cdot)+\eta^2 \mathbf{I})$, specified by the mean $\mu(\cdot)$ and covariance kernel $k(\cdot, \cdot)$, where $\mathbf I$ is the identity matrix of size $D$. After that, an acquisition function, e.g., \rbt{Probability of Improvement (PI)~\cite{Krushner64PI}, Expected Improvement (EI)~\citep{ei2} or Upper Confidence Bound (UCB)~\citep{gpucb}}, is used to determine the next query point $\bm x^t$ while balancing exploitation and exploration.

\subsection{High-dimensional Bayesian Optimization}

Scaling BO to high-dimensional problems is a challenge due to the curse of dimensionality and the computation cost. As the dimension increases, the search space increases exponentially, requiring more samples, and thus more expensive evaluations, to find a good solution. Furthermore, the \rbt{computation} cost of updating the GP model \rbt{and optimizing the acquisition function} will be very time-consuming~\cite{gp}. There have been a few common approaches to tackle high-dimensional BO with different assumptions.

\textbf{Decomposition.} Assuming that the function can be decomposed into the sum of low-dimensional functions with disjoint subspaces,~\citet{addgpucb} proposed the Add-GP-UCB algorithm to optimize those low-dimensional functions separately, which was further generalized to overlapping subspaces~\cite{addoverlapping2,addoverlapping1}.~\citet{ebo} proposed ensemble BO that uses an ensemble of additive GP models for scalability.~\citet{addtreestructure} constrained the dependency graphs of decomposition to tree structures to facilitate the decomposition learning and optimization. For most problems, however, the decomposition is unknown, and also difficult to learn.



\textbf{Embedding.} Assuming that only a few dimensions affect the high-dimensional function significantly, embedding-based methods embed the high-dimensional space into a low-dimensional subspace, and optimize in the subspace while projecting the point back for evaluation. REMBO and its variants use a random matrix to embed the search space into a low-dimensional subspace~\cite{rembophi,rembogamma,rembo}.~\citet{hesbo} used a hash-based method for embedding.~\citet{alebo} proposed ALEBO, focusing on several misconceptions in REMBO to improve the performance. The VAE-based approaches were also employed to project a structured input space (e.g., graphs and images) to a low-dimensional subspace~\cite{vae1,vae2}. 


\textbf{Variable Selection.} Based on the same assumption as embedding, variable selection methods iteratively select a subset of variables to build a low-dimensional subspace and optimize through BO. The selected variables can be viewed as important variables that are valuable for exploitation, or having high uncertainty that are valuable for exploration. A classical method is Dropout~\cite{dropout}, which randomly chooses $d$ variables in each iteration.~\citet{hsicbo} uses Hilbert Schmidt Independence criterion to guide variable selection. When evaluating the sampled point, the values of those unselected variables are obtained by random sampling or using historical information. VS-BO~\cite{vsbo} selects variables with larger estimated gradients and uses CMA-ES~\cite{cmaes} to obtain the values of unselected variables. Note that variable selection can be faster than embedding, because the embedding cost (e.g., matrix inversion) is time-consuming for high-dimensional optimization.


Both embedding and variable selection methods need to specify the parameter $d$, i.e., the dimension of low-dimensional subspace, which will affect the performance significantly, but is not easy to set. There are also some methods to improve the basic components of BO directly for high-dimensional problems. For example, DNGO~\cite{dngo} uses the neural network as an alternative of GP to speed up inference; \songl{BO-PP~\cite{bopp} generates pseudo-points (i.e., data points whose objective values are not evaluated) to improve the GP model; SAASBO~\cite{saas} uses sparsity-inducing prior to perform variable selection implicitly, making the coefficients of unimportant variables near to zero and thus restraining over-exploration on these variables. Note that different from Dropout and our proposed MCTS-VS, SAASBO still optimizes all variables, and also due to its high computational cost of inference, it is very time-consuming as reported in~\cite{saas}.} These methods can be combined with the above-mentioned dimensionality reduction methods, which may bring further improvement.

\subsection{Monte Carlo Tree Search}

MCTS~\citep{mctssurvey} is a tree search algorithm based on random sampling, and has shown great success in high-dimensional tasks, such as Go~\citep{mctsgo,mctsgo1}. A tree node represents a state, describing the current situation, e.g., the position in path planning. Each tree node $X$ stores a value $v_X$ representing its goodness, and the number $n_X$ that it has been visited. They are used to calculate UCB~\citep{ucb1}, i.e.,\vspace{-0.1em}
\begin{equation}
\label{eq-MCTS-UCB}v_X + 2C_p\sqrt{2(\log n_p)/n_X},
\vspace{-0.1em}
\end{equation}
where $C_p$ is a hyper-parameter, and $n_p$ is the number of visits of the parent of $X$. UCB considers both exploitation and exploration, and will be used for node selection.

MCTS iteratively selects a leaf node of the tree for expansion. Each iteration can be divided into four steps: \emph{selection}, \emph{expansion}, \emph{simulation} and \emph{back-propagation}. Starting from the root node, selection is to recursively select a node with larger UCB until a leaf node, denoted as $X$. Expansion is to execute a certain action in the state represented by $X$ and transfer to the next state, e.g., move forward and arrive at a new position in path planning. We use the child node $Y$ of $X$ to represent the next state. Simulation is to obtain the value $v_Y$ via random sampling. Back-propagation is to update the value and the number of visits of $Y$'s ancestors. 

To tackle high-dimensional optimization,~\citet{lamcts} proposed LA-MCTS, which applies MCTS to iteratively partition the search space into small sub-regions, and optimizes only in the good sub-regions. That is, the root of the tree represents the entire search space $\Omega$, and each tree node $X$ represents a sub-region $\Omega_X$. The value $v_X$ is measured by the average objective value of the sampled points in the sub-region $\Omega_X$. In each iteration, after selecting a leaf node $X$, LA-MCTS performs the optimization in $\Omega_X$ by vanilla BO~\citep{bosurvey1} or TuRBO~\citep{turbo}, and the sampled points are used for clustering and classification to bifurcate $\Omega_X$ into two disjoint sub-regions, which are ``good'' and ``bad'', respectively. Note that the sub-regions are generated by dividing the range of variables, and their dimensionality does not decrease, which is still the number of all variables.~\citet{lamcts} have empirically shown the good performance of LA-MCTS. However, as the dimension increases, the search space increases exponentially, and more partitions and evaluations are required to find a good solution, making the application of LA-MCTS to high-dimensional optimization still limited.

\section{MCTS-VS Method}

In this section, we propose a Variable Selection method based on MCTS for high-dimensional BO, briefly called MCTS-VS. The main idea is to apply MCTS to iteratively partition all variables into important and unimportant ones, and perform BO only for those important variables. Let $[D]=\{1,2,\ldots,D\}$ denote the indexes of all variables $\bm{x}$, and $\bm{x}_{\mathbb{M}}$ denote the subset of variables indexed by $\mathbb{M} \subseteq [D]$.

We first introduce a $D$-dimensional vector named \emph{variable score}, which is a key component of MCTS-VS. Its $i$-th element represents the importance of the $i$-th variable $x_i$. During the running process of MCTS-VS, after optimizing a subset $\bm{x}_{\mathbb{M}}$ of variables where $\mathbb{M} \subseteq [D]$ denotes the indexes of the variables, a set $\mathcal D$ of sampled points will be generated, and the pair $(\mathbb{M},\mathcal{D})$ will be recorded into a set $\mathbb{D}$, called \emph{information set}. The variable score vector is based on $\mathbb{D}$, and calculated as
\begin{equation}
\label{eq-score}
\bm{s}=\left(\sum_{(\mathbb M, \mathcal D)\in \mathbb D} \sum_{(\bm x^i,y^i) \in \mathcal D} y^i \cdot g(\mathbb M) \right) \rbt{ \big / \left( \sum_{(\mathbb M, \mathcal D)\in \mathbb D} |\mathcal D| \cdot g(\mathbb M) \right) },
\end{equation}
where the function $g:2^{[D]} \rightarrow \mathbb \{0,1\}^D$ gives the Boolean vector representation of a variable index subset $\mathbb{M} \subseteq [D]$ (i.e., the $i$-th element of $g(\mathbb M)$ is $1$ if $i\in \mathbb M$, and 0 otherwise), and $/$ is the element-wise division. Each dimension of $\sum_{(\mathbb M, \mathcal D)\in \mathbb D} \sum_{(\bm x^i,y^i) \in \mathcal D} y^i \cdot g(\mathbb M)$ is the sum of query evaluations using each variable, and each dimension of $\sum_{(\mathbb M, \mathcal D)\in \mathbb D} |\mathcal D| \cdot g(\mathbb M)$ is the number of queries using each variable.
Thus, the $i$-th element of variable score $\bm{s}$, representing the importance of the $i$-th variable $x_i$, is actually measured by the \rbt{average} goodness of all the sampled points that are generated by optimizing a subset of variables containing $x_i$. The variable score $\bm{s}$ will be used to define the value of each tree node of MCTS as well as for node expansion.

In MCTS-VS, the root of the tree represents all variables. A tree node $X$ represents a subset of variables, whose index set is denoted by $\mathbb A_X \subseteq \mathbb [D]$, and it stores the value $v_X$ and the number $n_X$ of visits, which are used to calculate the value of UCB as in Eq.~(\refeq{eq-MCTS-UCB}). The value $v_X$ is defined as the average score (i.e., importance) of the variables contained by $X$, which can be calculated by $\bm s\cdot g(\mathbb A_X)/|\mathbb A_X|$, where $g(\mathbb A_X)$ is the Boolean vector representation of $\mathbb{A}_X$ and $|\mathbb A_X|$ is the size of $\mathbb A_X$, i.e., the number of variables in node $X$.

At each iteration, MCTS-VS first recursively selects a node with larger UCB until a leaf node (denoted as $X$), which is regarded as containing important variables. Note that if we optimize the subset $\bm{x}_{\mathbb A_X}$ of variables represented by the leaf $X$ directly, the variables in $\bm{x}_{\mathbb A_X}$ will have the same score (because they are optimized together), and their relative importance cannot be further distinguished. Thus, MCTS-VS uniformly selects a variable index subset $\mathbb M$ from $\mathbb A_X$ at random, and employs BO to optimize $\bm{x}_{\mathbb M}$ as well as $\bm{x}_{\mathbb{A}_X \setminus \mathbb{M}}$; this process is repeated for several times. After that, the information set $\mathbb{D}$ will be augmented by the pairs of the selected variable index subset $\mathbb M$ (or $\mathbb{A}_X \setminus \mathbb{M}$) and the corresponding sampled points generated by BO. The variable score vector $\bm{s}$ will be updated using this new $\mathbb{D}$. Based on $\bm{s}$, the variable index set $\mathbb{A}_X$ represented by the leaf $X$ will be divided into two disjoint subsets, containing variables with larger and smaller scores (i.e., important and unimportant variables), respectively, and the leaf $X$ will be bifurcated into two child nodes accordingly. Finally, the $v$ values of these two children will be calculated using the variable score vector $\bm{s}$, and back-propagation will be performed to update the $v$ value and the number of visits of the nodes along the current path of the tree. 

MCTS-VS can be equipped with any specific BO optimizer, resulting in the concrete algorithm MCTS-VS-BO, where BO is used to optimize the selected subsets of variables during the running of MCTS-VS. Compared with LA-MCTS~\citep{lamcts}, MCTS-VS applies MCTS to partition the variables instead of the search space, and thus can be more scalable. Compared with the previous variable selection method Dropout~\citep{dropout}, MCTS-VS can select important variables automatically instead of randomly selecting a fixed number of variables in each iteration. Next we introduce it in detail.

\begin{algorithm}[t!]
\caption{MCTS-VS}
\label{lamcts_vs}
{\textbf{Parameters}:} batch size $N_v$ of variable index subset, sample batch size $N_{s}$, total number $N_e$ of evaluations, threshold $N_{bad}$ for re-initializing a tree and $N_{split}$ for splitting a node, \rbt{hyper-parameter $k$ for the best-$k$ strategy}\\
{\textbf{Process}:}
\begin{algorithmic}[1]
\STATE Initialize the information set $\mathbb{D} = \{(\mathbb M_i, \mathcal D_{i}), ( \mathbb{\bar M}_i, \mathcal D_{\bar i})\}_{i=1}^{N_v}$;
\STATE Store the best $k$ sampled points in $\mathbb{D}$;
\STATE Calculate the variable score $\bm s$ using $\mathbb{D}$ as in Eq.~(\refeq{eq-score});
\STATE Initialize the Monte Carlo tree;
\STATE Set $t = 2\times N_v\times N_{s}$ and $n_{bad} = 0$;
\WHILE{$t < N_e$}
\IF{$n_{bad} > N_{bad}$}
\STATE Initialize the Monte Carlo tree and set $n_{bad} = 0$
\ENDIF
\STATE $X \leftarrow$ the leaf node selected by UCB;
\STATE Let $\mathbb A_X$ denote the indexes of the subset of variables represented by $X$;
\STATE \rbt{Increase $n_{bad}$ by 1 once visiting a right child node on the path from the root node to $X$;}
\FOR {$j=1 : N_v$}
\STATE Sample a variable index subset $\mathbb M$ from $\mathbb A_X$ uniformly at random;
\STATE Fit a GP model using the points $\{(\bm x_\mathbb M^i, y^i)\}_{i=1}^t$ sampled-so-far, where only the variables indexed by $\mathbb M$ are used; 
\STATE Generate $\{\bm x_{\mathbb {M}}^{t+i}\}_{i=1}^{N_s}$ by maximizing an acquisition function;
\STATE Determine $\{\bm x_{[D]\setminus \mathbb M}^{t+i}\}_{i=1}^{N_s}$ by the ``fill-in'' strategy;
\STATE Evaluate $\bm x^{t+i} = [\bm x_{\mathbb M}^{t+i}, \bm x_{[D]\setminus \mathbb M}^{t+i}]$ to obtain $y^{t+i}$ for $i=1,2,\ldots,N_s$; 
\STATE $\mathbb D= \mathbb D\cup \{(\mathbb M, \{(\bm{x}^{t+i}, y^{t+i})\}_{i=1}^{N_s})\}$;
\STATE \rbt{Store} the best $k$ points sampled-so-far;
\STATE $t = t +  N_s$;
\STATE Repeat lines~15--21 for $\bar{\mathbb M} = \mathbb A_X \setminus \mathbb M$
\ENDFOR
\STATE Calculate the variable score $\bm s$ using $\mathbb{D}$ as in Eq.~(\refeq{eq-score});
\IF{$|\mathbb A_{X}| > N_{split}$}
\STATE Bifurcate the leaf node $X$ into two child nodes, whose $v$ value and number of visits are calculated by $\bm s$ and set to 0, respectively 
\ENDIF
\STATE Back-propagate to update the $v$ value and number of visits of the nodes on the path from the root to $X$ 
\ENDWHILE
\end{algorithmic}
\end{algorithm}

\subsection{Details of MCTS-VS}
\label{sec:details}

The procedure of MCTS-VS is described in Algorithm~\ref{lamcts_vs}. In line~1, it first initializes the information set $\mathbb D$. In particular, a variable index subset $\mathbb M_i$ is randomly sampled from $[D]$, and the Latin hypercube sampling~\citep{lhs} is used to generate two sets (denoted as $\mathcal D_{i}$ and $\mathcal D_{\bar{i}}$) of $N_s$ points to form the two pairs of $(\mathbb M_i, \mathcal D_{i})$ and $(\bar{\mathbb M}_i, \mathcal D_{\bar{i}})$, where $\bar {\mathbb M}_i = [D] \setminus \mathbb M_i$. This process will be repeated for $N_v$ times, resulting in the initial $\mathbb D=\{(\mathbb M_i, \mathcal D_{i}), ( \mathbb{\bar M}_i, \mathcal D_{\bar i})\}_{i=1}^{N_v}$. The variable score vector $\bm{s}$ is calculated using this initial $\mathbb D$ in line~3, and the Monte Carlo tree is initialized in line~4 by adding only a root node, whose $v$ value is calculated according to $\bm{s}$ and number of visits is 0. MCTS-VS uses the variable $t$ to record the number of evaluations it has performed, and thus $t$ is set to $2\times N_v \times N_s$ in line~5 as the initial $\mathbb D$ contains $2\times N_v \times N_s$ sampled points in total.

In each iteration (i.e., lines~7--28) of MCTS-VS, it selects a leaf node $X$ by UCB in line~10, and optimizes the variables (i.e., $\bm{x}_{\mathbb{A}_X}$) represented by $X$ in lines~13--23. Note that to measure the relative importance of variables in $\bm{x}_{\mathbb{A}_X}$, MCTS-VS optimizes different subsets of variables of $\bm{x}_{\mathbb{A}_X}$ instead of $\bm{x}_{\mathbb{A}_X}$ directly. That is, a variable index subset $\mathbb{M}$ is randomly sampled from $\mathbb{A}_X$ in line~14, and the corresponding subset $\bm{x}_\mathbb{M}$ of variables is optimized by BO in lines~15--16. The data points $\{(\bm x_\mathbb M^i, y^i)\}_{i=1}^t$ sampled-so-far is used to fit a GP model, and $N_s$ (called \emph{sample batch size}) new points $\{\bm x_{\mathbb {M}}^{t+i}\}_{i=1}^{N_s}$ are generated by maximizing an acquisition function. Note that this is a standard BO procedure, which can be replaced by any other variant. To evaluate $\bm x_{\mathbb {M}}^{t+i}$, we need to fill in the values of the other variables $\bm x_{[D]\setminus \mathbb M}^{t+i}$, which will be explained later. After evaluating $\bm{x}^{t+i}=[\bm x_{\mathbb M}^{t+i}, \bm x_{[D]\setminus \mathbb M}^{t+i}]$ in line~18, the information set $\mathbb{D}$ is augmented with the new pair of $(\mathbb{M},\{(\bm{x}^{t+i}, y^{t+i})\}_{i=1}^{N_s})$ in line~19, and $t$ is increased by $N_s$ accordingly in line~21. For fairness, the complement subset $\bm{x}_{\bar{\mathbb M}}$ of variables, where $\bar{\mathbb M} = \mathbb{A}_X \setminus \mathbb M$, is also optimized by the same way, i.e., lines~15--21 of Algorithm~\ref{lamcts_vs} is repeated for $\bar{\mathbb M}$. The whole process of optimizing $\bm{x}_{\mathbb M}$ and $\bm{x}_{\bar{\mathbb M}}$ in lines~14--22 will be repeated for $N_v$ times, which is called \emph{batch size of variable index subset}.

To fill in the values of the un-optimized variables in line~17, we employ the \emph{best-$k$} strategy, which utilizes the best $k$ data points sampled-so-far, denoted as $\{(\bm{x}^{*j}, y^{*j})\}_{j=1}^k$. That is, $\{y^{*j}\}_{j=1}^k$ are the $k$ largest objective values observed-so-far. If the $i$-th variable is un-optimized, its value will be uniformly selected from $\{x_i^{*j}\}_{j=1}^k$ at random. Thus, MCTS-VS needs to store the best $k$ data points in line~2 after initializing the information set $\mathbb{D}$, and update them in line~20 after augmenting $\mathbb{D}$. Other direct ``fill-in'' strategies include sampling the value randomly, or using the average variable value of the best $k$ data points. The superiority of the employed best-$k$ strategy will be shown in the experiments in Appendix~\ref{appendix:ablation}.

After optimizing the variables $\bm{x}_{\mathbb{A}_X}$ represented by the selected leaf $X$, the variable score vector $\bm s$ measuring the importance of each variable will be updated using the augmented $\mathbb{D}$ in line~24. If the number $|\mathbb A_{X}|$ of variables in the leaf $X$ is larger than a threshold $N_{split}$ (i.e., line~25), $\mathbb{A}_X$ will be divided into two subsets. One contains those ``important'' variable indexes with score larger than the average score of $\bm{x}_{\mathbb{A}_X}$, and the other contains the remaining ``unimportant'' ones. The leaf $X$ will be bifurcated into a left child $Y$ and a right child $Z$ in line~26, containing those important and unimportant variables, respectively. Meanwhile, $v_Y$ and $v_Z$ will be calculated according to $\bm{s}$, and the number of visits is 0, i.e., $n_Y=n_Z=0$. Finally, MCTS-VS performs back-propagation in line~28 to re-calculate the $v$ value and increase the number of visits by 1 for each ancestor of $Y$ and $Z$. 

MCTS-VS will run until the number $t$ of performed evaluations reaches the budget $N_e$. Note that as the Monte Carlo tree may be built improperly, we use a variable $n_{bad}$ to record the number of visiting a right child node (regarded as containing unimportant variables), measuring the goodness of the tree. In line~5 of Algorithm~\ref{lamcts_vs}, $n_{bad}$ is initialized as 0. During the procedure of selecting a leaf node by UCB in line~10, $n_{bad}$ will be increased by 1 once visiting a right child node, which is updated in line~12. If $n_{bad}$ is larger than a threshold $N_{bad}$ (i.e., line~7), the current tree is regarded as bad, and will be re-initialized in line~8. \rbt{Furthermore, the frequency of re-initialization can be used to indicate whether MCTS-VS can do a good variable selection for the current problem.} For ease of understanding, we also provide an example illustration of MCTS-VS in Appendix~\ref{sec:example illustration}.

\section{Theoretical Analysis}
\label{sec:theory_analysis}

Although it is difficult to analyze the regret of MCTS-VS directly, we can theoretically analyze the influence of general variable selection by adopting the acquisition function GP-UCB. The considered general variable selection framework is as follows: after selecting a subset of variables at each iteration, the corresponding observation data (i.e., the data points sampled-so-far where only the selected variables are used) is used to build a GP model, and the next data point is sampled by maximizing GP-UCB. We use $\mathbb M_t$ to denote the sampled variable index subset at iteration $t$, and let $|\mathbb M_t| = d_t$.

\textbf{Regret Analysis.} \rbt{Let $\bm x^*$ denote an optimal solution. We analyze the cumulative regret $R_T=\sum_{t=1}^T(f(\bm x^*) - f(\bm x^t))$,} i.e., the sum of the gap between the optimum and the function values of the selected points by iteration $T$. To derive an upper bound on $R_T$, we pessimistically assume that the worst function value, i.e., $\min_{\bm x_{[D] \setminus \mathbb M_t}} f([\bm x_{\mathbb M_t}, \bm x_{[D] \setminus\mathbb M_t}])$, given $\bm x_{\mathbb M_t}$ is returned in evaluation. As in~\cite{dropout,gpucb}, we assume that $\mathcal X\subset [0, r]^D$ is convex and compact, and $f$ satisfies the following Lipschitz assumption. 
  
\begin{assumption} \label{ass:1}
The function $f$ is a GP sample path. For some $a, b>0$, given $L>0$, the partial derivatives of $f$ satisfy that $\forall i \in [D]$, $\exists \alpha_i \geq 0$, 
\vspace{-0.5em}
\begin{equation}
\label{eq-assump1}
P\left(\sup\nolimits_{\bm{x}\in \mathcal X} \left|\partial f/\partial x_i\right|<\alpha_i L \right)\ge 1 - ae^{-(L/b)^2}.
\end{equation}\vspace{-2em}
\end{assumption}
Based on Assumption~\ref{ass:1}, we define $\alpha^*_{i}$ to be the minimum value of $\alpha_{i}$ such that Eq.~(\refeq{eq-assump1}) holds, which characterizes the importance of the $i$-th variable $x_i$. The larger $\alpha^*_{i}$, the greater influence of $x_i$ on the function $f$. Let $\alpha_{\max} = \max\nolimits_{i \in [D]} \alpha^*_{i}$. 

Theorem~\ref{the:regret} gives an upper bound on the cumulative regret $R_T$ with high probability for general variable selection methods. The proof is inspired by that of GP-UCB without variable selection~\citep{gpucb} and provided in Appendix~\ref{sec:theory:theorem}. If we select all variables each time (i.e., $\forall t: \mathbb{M}_t=[D]$) and assume $\forall i: \alpha^*_i \leq 1$, the regret bound Eq.~(\refeq{eq-main-theo}) becomes $R_T \le \sqrt{C_1T\beta^*_{T}\gamma_T}+2$, which is consistent with~\citep{gpucb}. Note that $\forall t: |\mathbb M_t|=d_t=D$ in this case, which implies that $\beta_t$ increases with $t$, leading to $\beta^*_T=\beta_T$. We can see that using variable selection will increase $R_T$ by $2\sum_{t=1}^{T}\sum_{i\in [D]\setminus \mathbb M_t}\alpha^*_{i}Lr$, related to the importance (i.e., $\alpha^*_i$) of unselected variables at each iteration. The more important variables unselected, the larger $R_T$. Meanwhile, the term $\sqrt{C_1T\beta_{T}^*\gamma_T}$ will decrease as $\beta_{T}^*$ relies on the number $d_t$ of selected variables positively. Ideally, if the unselected variables at each iteration are always unrelated (i.e., $\alpha^*_{i}\!=\!0$), the regret bound will be better than that of using all variables~\citep{gpucb}.

\begin{theorem}
$\forall \delta \in (0,1)$, let $\beta_t = 2\log(4\pi_{t}/\delta)+2d_t\log(d_tt^2br\sqrt{\log(4Da/\delta)})$ and $L=b\sqrt{\log(4Da/\delta)}$, where \rbt{$r$ is the upper bound on each variable,} and $\{\pi_t\}_{t\ge 1}$ satisfies $\sum_{t\geq 1} \pi^{-1}_t=1$ and $\pi_t >0$. Let $\beta_{T}^* = \max\nolimits_{1\le i\le T} \beta_{t}$. At iteration $T$, the cumulative regret\vspace{-0.3em}
\begin{equation}
\begin{aligned}
\label{eq-main-theo}
R_T \le \sqrt{C_1T\beta_{T}^*\gamma_T}+2\alpha_{\max}+2\sum\nolimits_{t=1}^{T}\sum\nolimits_{i\in [D]\setminus \mathbb M_t}\alpha^*_{i}Lr
\end{aligned}\vspace{-0.5em}
\end{equation}
holds with probability at least $1\!-\!\delta$, where $C_1$ is a constant, $\gamma_{T}\! =\! \max\nolimits_{|\mathcal D| = T} I(\bm{y}_{\mathcal D},\bm{f}_{\mathcal D})$, $I(\cdot,\cdot)$ is the information gain, and $\bm{y}_{\mathcal D}$ and $\bm{f}_{\mathcal D}$ are the noisy and true observations of a set $\mathcal{D}$ of points, respectively.
\label{the:regret}\vspace{-1em}
\end{theorem}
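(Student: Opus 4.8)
The plan is to adapt the discretization-based cumulative-regret proof of GP-UCB~\citep{gpucb} to the variable-selection setting. The key new idea is that, because the point $\bm x^t$ evaluated at iteration $t$ is obtained by running GP-UCB only over the coordinates $\mathbb M_t$ (with the pessimistic worst-case completion of the others), each instantaneous regret $f(\bm x^*)-f(\bm x^t)$ splits into a \emph{within-subspace} part, controlled by the GP confidence intervals exactly as in~\citep{gpucb}, and a \emph{cross-subspace} part, controlled by the partial-derivative bound in Assumption~\ref{ass:1}. Concretely, writing $\overline{\mathbb M}_t=[D]\setminus\mathbb M_t$ and $g_t(\bm u):=\min_{\bm z}f([\bm u,\bm z])$ (so that the pessimistic convention gives $f(\bm x^t)=g_t(\bm x^t_{\mathbb M_t})$), I would use the decomposition
\begin{equation}
f(\bm x^*)-f(\bm x^t)=\big(f(\bm x^*)-g_t(\bm x^*_{\mathbb M_t})\big)+\big(g_t(\bm x^*_{\mathbb M_t})-g_t(\bm x^t_{\mathbb M_t})\big).
\end{equation}

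For the cross-subspace part, I would first take a union bound of Eq.~\eqref{eq-assump1} over all $D$ coordinates with $L=b\sqrt{\log(4Da/\delta)}$, so that with probability at least $1-\delta/4$, simultaneously $\sup_{\bm x\in\mathcal X}|\partial f/\partial x_i|\le\alpha^*_iL$ for every $i$. On this event the mean value theorem bounds the oscillation of $f$ when only the $\overline{\mathbb M}_t$ coordinates are varied over $[0,r]^{|\overline{\mathbb M}_t|}$ by $\sum_{i\in\overline{\mathbb M}_t}\alpha^*_iLr$; applied once to $f(\bm x^*)-g_t(\bm x^*_{\mathbb M_t})$ and once more inside the within-subspace step (to pass between $g_t$ and a slice of $f$ on which the GP-UCB analysis is run), this contributes the term $2\sum_{t=1}^T\sum_{i\in\overline{\mathbb M}_t}\alpha^*_iLr$ of Eq.~\eqref{eq-main-theo} after summing over $t$.

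For the within-subspace part I would follow~\citep{gpucb} in the $d_t$-dimensional cube. For each $t$, pick a uniform grid of $\tau_t^{d_t}$ points with $\tau_t=d_tt^2br\sqrt{\log(4Da/\delta)}$, so the grid oscillation of a slice of $f$ is at most $\alpha_{\max}/t^2$; a Gaussian-tail union bound over the grid and over $t$ --- whose cardinality and the choice $\sum_t\pi_t^{-1}=1$ produce exactly the two pieces of $\beta_t$ --- yields, with probability at least $1-\delta/2$, the confidence bound $|{\cdot}-\mu_{t-1}|\le\beta_t^{1/2}\sigma_{t-1}$ on the grid, hence (after the grid-oscillation correction and the $g_t$-versus-slice correction above) valid confidence intervals for $g_t$. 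The standard UCB argument --- the posterior UCB at $\bm x^t_{\mathbb M_t}$ dominates that at the grid point nearest $\bm x^*_{\mathbb M_t}$ --- then bounds $g_t(\bm x^*_{\mathbb M_t})-g_t(\bm x^t_{\mathbb M_t})$ by $2\beta_t^{1/2}\sigma_{t-1}(\bm x^t_{\mathbb M_t})+\mathcal O(\alpha_{\max}/t^2)$ plus the extra cross-subspace copy. Summing: $\sum_t\alpha_{\max}/t^2\le2\alpha_{\max}$ gives the middle term of Eq.~\eqref{eq-main-theo}; Cauchy--Schwarz together with $\beta_t\le\beta_T^*$ gives $\sum_{t=1}^T 2\beta_t^{1/2}\sigma_{t-1}(\bm x^t_{\mathbb M_t})\le\sqrt{4T\beta_T^*\sum_t\sigma_{t-1}^2(\bm x^t_{\mathbb M_t})}$, and bounding $\sum_t\sigma_{t-1}^2$ by $C_1\gamma_T/4$ via the usual information-gain inequality (the per-iteration subspace being a subset of $\mathcal X$, its maximal information gain is at most $\gamma_T$) yields $\sqrt{C_1T\beta_T^*\gamma_T}$. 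The three failure probabilities add to $\delta$.

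The step I expect to be the real obstacle is justifying the within-subspace confidence intervals: at iteration $t$ the algorithm fits its GP on the data \emph{projected} onto $\mathbb M_t$, and since $\mathbb M_t$ varies, the posterior $(\mu_{t-1},\sigma_{t-1})$ is not the posterior of any single fixed GP, so the clean Gaussian-conditioning argument of~\citep{gpucb} does not apply off the shelf. My intended fix is to argue, at each $t$, about the slice of $f$ obtained by freezing the $\overline{\mathbb M}_t$ coordinates at the fill-in values used --- a genuine GP sample path on the $d_t$-dimensional subspace with the restricted kernel --- prove the confidence bound for that slice, and then pay the $\sum_{i\in\overline{\mathbb M}_t}\alpha^*_iLr$ price (again via Assumption~\ref{ass:1}) to transfer it to $g_t$. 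Reconciling the projected observations with this slice, and lining up the exact constants in $\beta_t$ with the $\delta$-bookkeeping, is the delicate remaining work.
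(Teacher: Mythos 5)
Your proposal follows essentially the same route as the paper's own proof: the identical decomposition of $f(\bm x^*)-f(\bm x^t_{\mathbb M_t})$ into a cross-subspace Lipschitz term paid twice via Assumption~\ref{ass:1} with $L=b\sqrt{\log(4Da/\delta)}$, a per-iteration discretization with $\tau_t=d_tt^2br\sqrt{\log(4Da/\delta)}$ contributing $\alpha_{\max}/t^2$, the standard GP-UCB confidence-interval and information-gain argument, and the same $\delta/4+\delta/2+\delta/4$ bookkeeping. The step you flag as the real obstacle --- that the posterior is fit on data projected onto a varying $\mathbb M_t$ and hence is not the posterior of one fixed GP --- is passed over silently in the paper's Lemmas~\ref{lem:1} and~\ref{lem:2}, which simply assert the Gaussian conditional law, so your slice-based repair is, if anything, more careful than the published argument.
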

By selecting $d$ variables randomly at each iteration and assuming that $r=1$ and $\forall i: \alpha^*_i \leq 1$, it has been proved~\citep{dropout} that the cumulative regret of Dropout satisfies
\begin{equation}
\label{eq-dropout}
R_T \leq \sqrt{C_1T\beta_{T}\gamma_T}+2+2TL(D-d).
\vspace{-0.2em}
\end{equation}
In this case, we have $d_t=|\mathbb{M}_t|=d$, $r=1$ and $\forall i: \alpha^*_i \leq 1$. Thus, Eq.~(\refeq{eq-main-theo}) becomes 
\begin{equation}
  R_T \leq \sqrt{C_1T\beta_{T}^*\gamma_T}+2+ 2TL(D-d).
\vspace{-0.2em}
\end{equation}
Note that $\beta_{T}^*=\beta_T$ here, as $\beta_t$ increases with $t$ given $d_t=d$. This implies that our bound Eq.~(\refeq{eq-main-theo}) for general variable selection is a generalization of Eq.~(\refeq{eq-dropout}) for Dropout~\citep{dropout}. In~\cite{vsbo}, a regret bound analysis has also been performed for variable selection, by optimizing over $d$ fixed important variables and using a common parameter $\alpha$ to characterize the importance of all the other $D-d$ variables.


\textbf{Computational Complexity Analysis.} The computational complexity of one iteration of BO depends on three critical components: fitting a GP surrogate model, maximizing an acquisition function and evaluating a sampled point. If using the squared exponential kernel, the computational complexity of fitting a GP model at iteration $t$ is $\mathcal O(t^3+t^2d_t)$. Maximizing an acquisition function is related to the optimization algorithm. If we use the Quasi-Newton method to optimize GP-UCB, the computational complexity is $\mathcal O(m(t^2 + td_t + d^2_t))$~\citep{NoceWrig06}, where $m$ denotes the Quasi-Newton's running rounds. The cost of evaluating a sampled point is fixed. Thus, by selecting only a subset of variables, instead of all variables, to optimize, the computational complexity can be decreased significantly. The detailed analysis is provided in Appendix~\ref{sec:complexity}. 

\textbf{Insight.} The above regret and computational complexity analyses have shown that variable selection can reduce the computational complexity while increasing the regret. Given the number $d_t$ of variables to be selected, a good variable selection method should select as important variables as possible, i.e., variables with as large $\alpha^*_i$ as possible, which may help to design and evaluate variable selection methods. The experiments in Section~\ref{sec:synthetic} will show that MCTS-VS can select a good subset of variables while maintaining a small computational complexity.


\section{Experiment}

To examine the performance of MCTS-VS, we conduct experiments on different tasks, including synthetic functions, NAS-bench problems and MuJoCo locomotion tasks, to compare MCTS-VS with other black-box optimization methods. For MCTS-VS, we use the same hyper-parameters except $C_p$, which is used for calculating UCB in Eq.~(\ref{eq-MCTS-UCB}). For Dropout and embedding-based methods, we set the parameter $d$ to the number of valid dimensions for synthetic functions, and a reasonable value for real-world problems. The hyper-parameters of the same components of different methods are set to the same. We use five identical random seeds (2021--2025) for all problems and methods. More details about the settings can be found in Appendix~\ref{appendix:hp}. \songl{Our code is available at \url{https://github.com/lamda-bbo/MCTS-VS}}.

\subsection{Synthetic Functions}
\label{sec:synthetic}

We use Hartmann ($d=6$) and Levy ($d=10$) as the synthetic benchmark functions, and extend them to high dimensions by adding unrelated variables as~\cite{alebo,hesbo,rembo}. For example, Hartmann$6$\_$300$ has the dimension $D=300$, and is generated by appending $294$ unrelated dimensions to Hartmann. The variables affecting the value of $f$ are called \emph{valid variables}. 

\textbf{Effectiveness of Variable Selection.} Dropout~\citep{dropout} is the previous variable selection method which randomly selects $d$ variables in each iteration, while our proposed MCTS-VS applies MCTS to automatically select important variables. We compare them against vanilla BO \citep{bosurvey1} without variable selection. The first two subfigures in Figure~\ref{fig:synthetic1} show that Dropout-BO and MCTS-VS-BO are better than vanilla BO, implying the effectiveness of variable selection. We can also see that MCTS-VS-BO performs the best, implying the superiority of MCTS-based variable selection over random selection.

We also equip MCTS-VS and Dropout with the advanced BO algorithm TuRBO~\citep{turbo}, resulting in MCTS-VS-TuRBO and Dropout-TuRBO. The last two subfigures in Figure~\ref{fig:synthetic1} show the similar results except that MCTS-VS-TuRBO needs more evaluations to be better than Dropout-TuRBO. This is because TuRBO costs more evaluations than BO on the same selected variables, and thus needs more evaluations to generate sufficient samples for an accurate estimation of the variable score in Eq.~(\refeq{eq-score}).\vspace{-0.7em}

\begin{figure*}[htbp!]
    \centering
    \hspace{-1em}\subfigure{\includegraphics[width=0.45\textwidth]{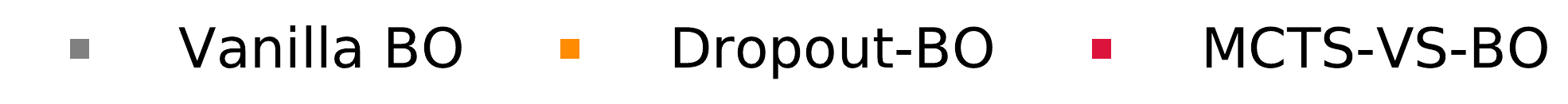}}\hspace{-0.5em}
    \subfigure{\includegraphics[width=0.45\textwidth]{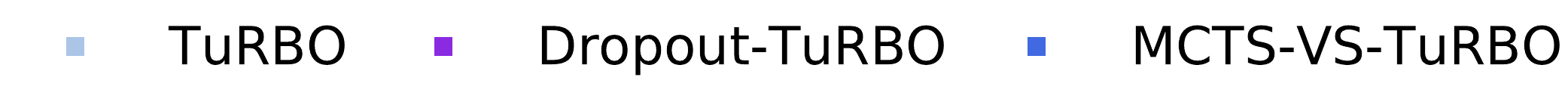}}\\\vspace{-1em}
    \centering
    \subfigure{\includegraphics[width=0.24\textwidth]{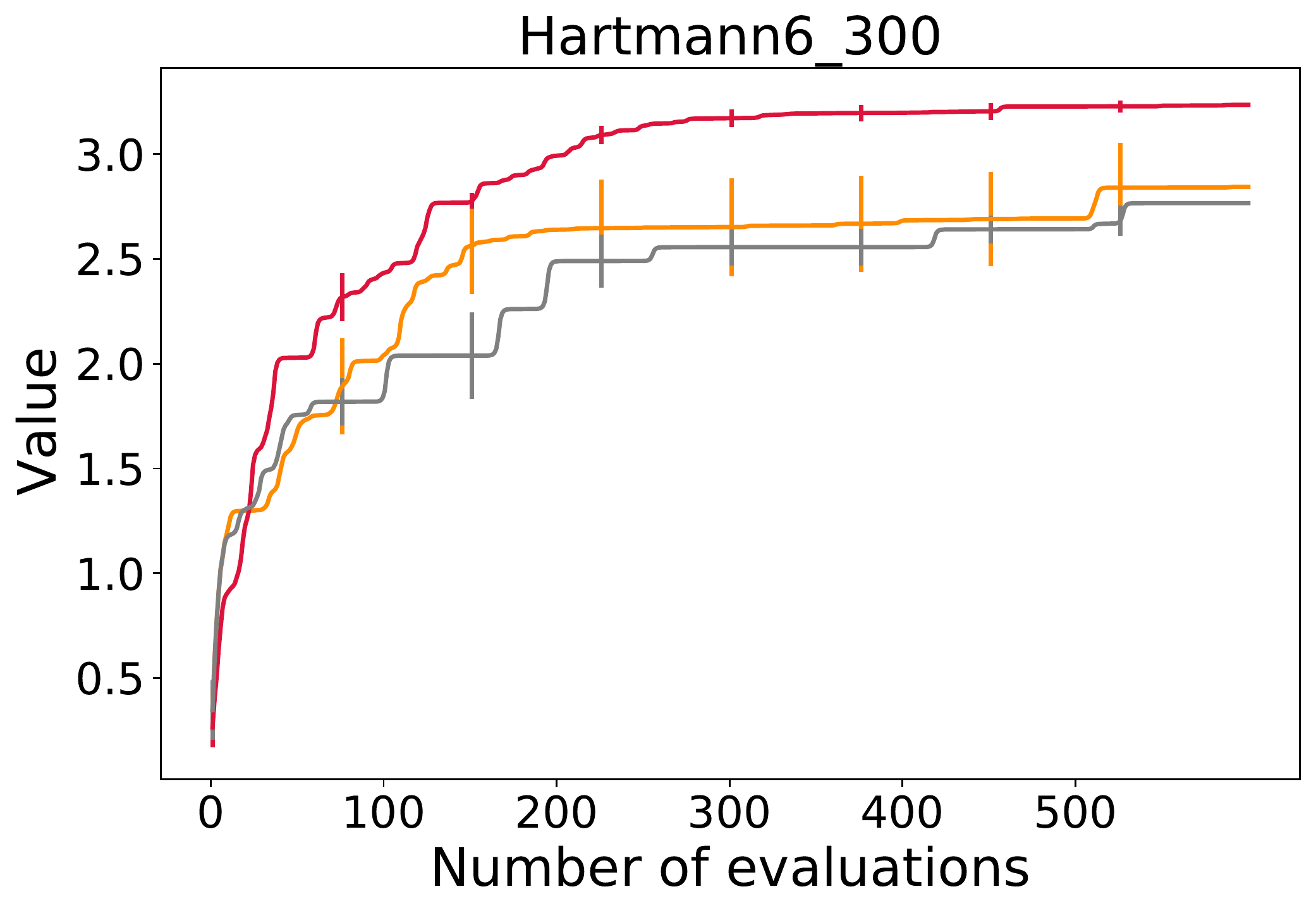}}
    \subfigure{\includegraphics[width=0.24\textwidth]{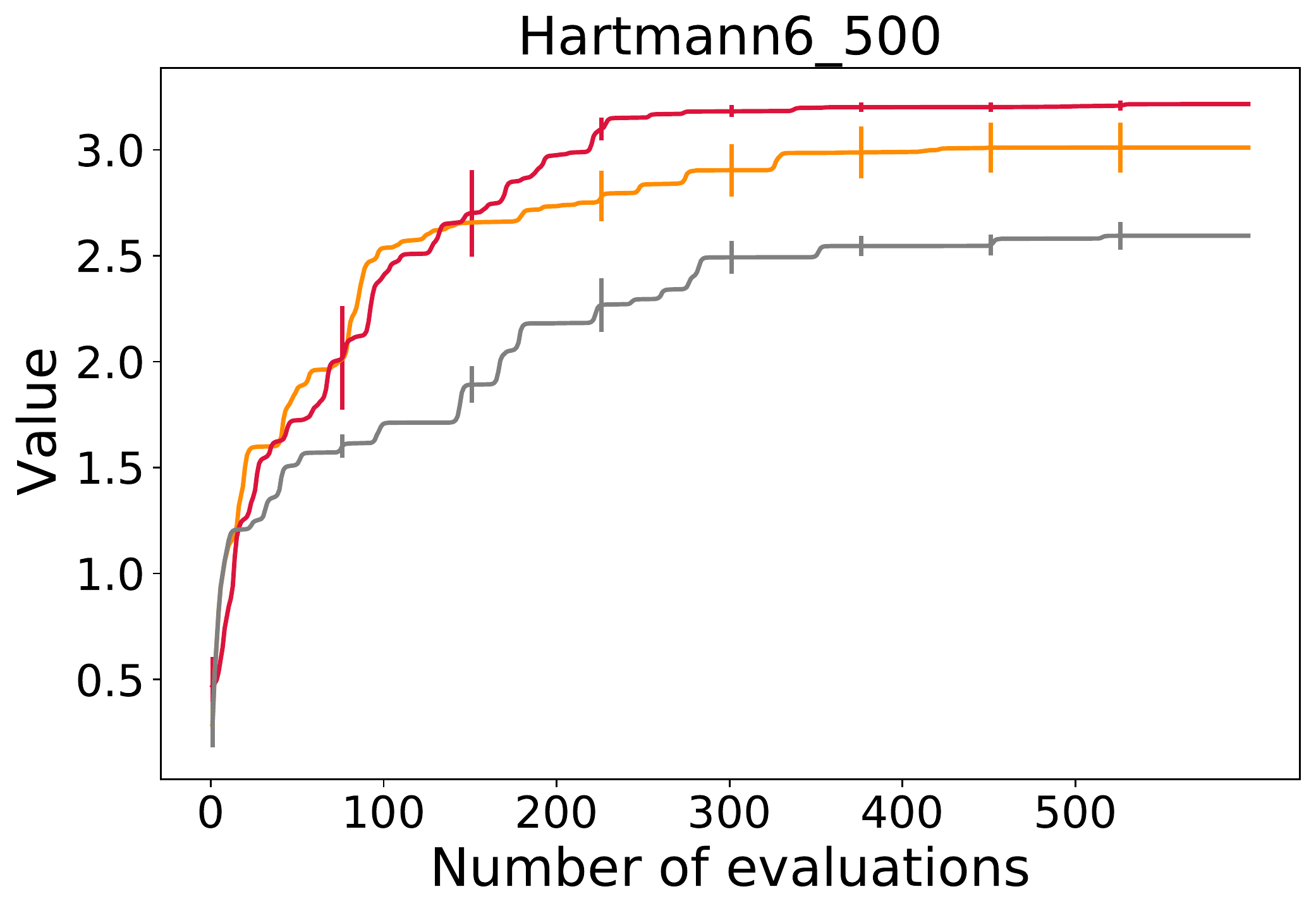}}
    \subfigure{\includegraphics[width=0.24\textwidth]{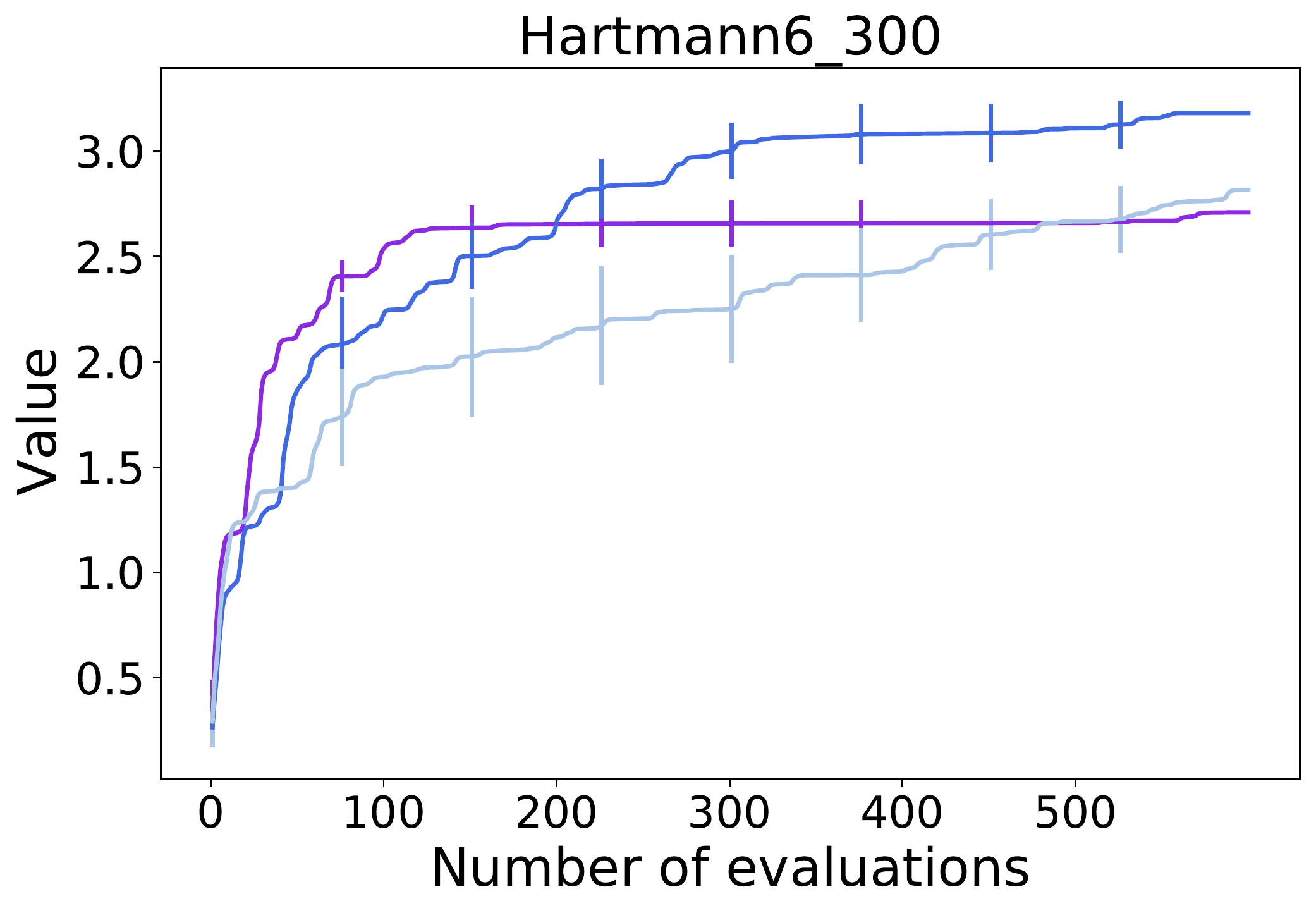}}
    \subfigure{\includegraphics[width=0.24\textwidth]{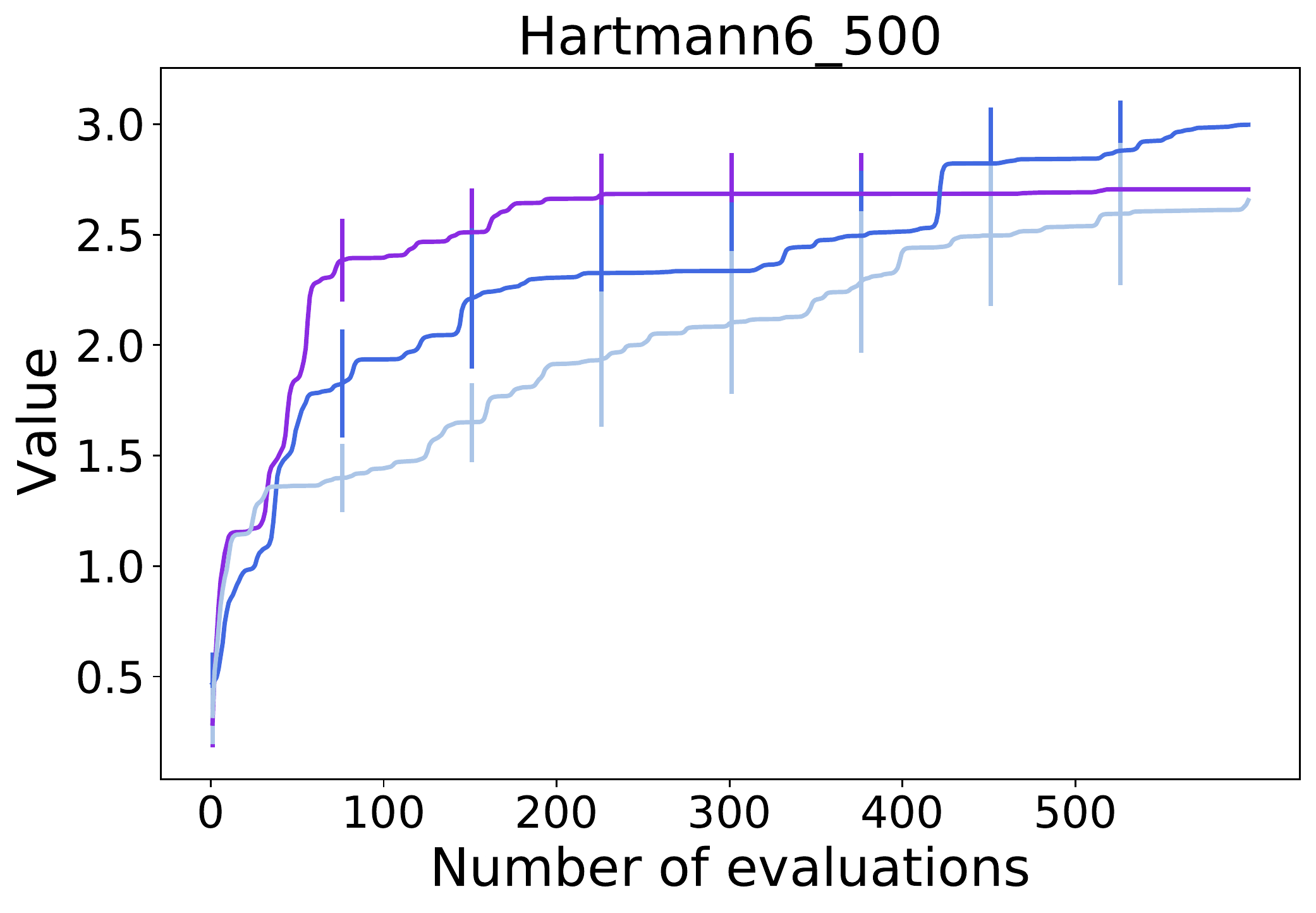}}\vspace{-0.8em}
    \caption{Performance comparison among the two variable selection methods (i.e., MCTS-VS and Dropout) and the BO methods (i.e., Vanilla BO and TuRBO) on \rbt{two} synthetic functions.}
    \label{fig:synthetic1}\vspace{-0.5em}
\end{figure*}

\textbf{Comparison with State-of-The-Art Methods.} We compare MCTS-VS with the state-of-the-art methods, including TuRBO~\cite{turbo}, LA-MCTS-TuRBO~\cite{lamcts}, \songl{SAASBO~\cite{saas}}, HeSBO~\cite{hesbo}, ALEBO~\cite{alebo} and CMA-ES~\citep{cmaes}. TuRBO fits a collection of local models to optimize in the trust regions for overcoming the homogeneity of the global model and over-exploration. LA-MCTS-TuRBO applies MCTS to partition the search space and uses TuRBO to optimize in a small sub-region. \songl{SAASBO uses sparsity-inducing prior to select variables implicitly.} HeSBO and ALEBO are state-of-the-art embedding methods. CMA-ES is a popular evolutionary algorithm. We also implement VAE-BO by combining VAE~\cite{Kingma2014AutoEncodingVB} with vanilla BO directly, as a baseline of learning-based embedding. For MCTS-VS, we implement the two versions of MCTS-VS-BO and MCTS-VS-TuRBO, i.e., MCTS-VS equipped with vanilla BO and TuRBO.

As shown in Figure~\ref{fig:synthetic2}, MCTS-VS can achieve the best performance except on Levy$10$\_$100$, where it is a little worse than TuRBO. For low-dimensional functions (e.g., $D=100$ for Levy$10$\_$100$), TuRBO can adjust the trust region quickly while MCTS-VS needs samples to estimate the variable score. But as the dimension increases, the search space increases exponentially and it becomes difficult for TuRBO to adjust the trust region; while the number of variables only increases linearly, making MCTS-VS more scalable. \songl{SAASBO has similar performance to MCTS-VS due to the advantage of sparsity-inducing prior.} HeSBO is not stable, which has a moderate performance on Hartmann but a relatively good performance on Levy. \songl{Note that we only run SAASBO and ALEBO for $200$ evaluations on Hartmann functions because it has already taken more than hours to finish one iteration when the number of samples is large. More details about runtime are shown in Table~\ref{tab:time}.} VAE-BO has the worst performance, suggesting that the learning algorithm in high-dimensional BO needs to be designed carefully. We also conduct experiments on extremely low and high dimensional variants of Hartmann (i.e., Hartmann$6$\_$100$ and Hartmann$6$\_$1000$), showing that MCTS-VS still performs well, and perform the significance test by running each method more times. Please see Appendix~\ref{appendix:additionalexperiments}.\vspace{-1em}

\begin{figure*}[h!]
    \centering
    \subfigure{\includegraphics[width=0.65\textwidth]{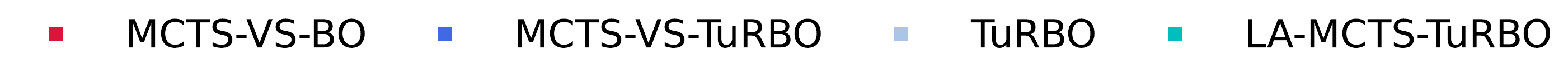}}\\
    \vspace{-1.3em}
    \subfigure{\includegraphics[width=0.6\textwidth]{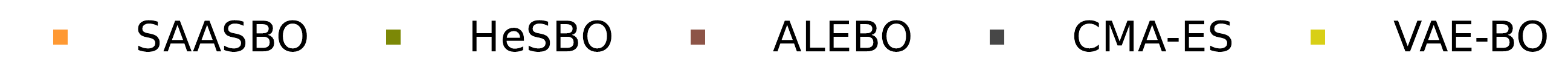}}\\
    \vspace{-1em}
    \centering
    \subfigure{\includegraphics[width=0.24\textwidth]{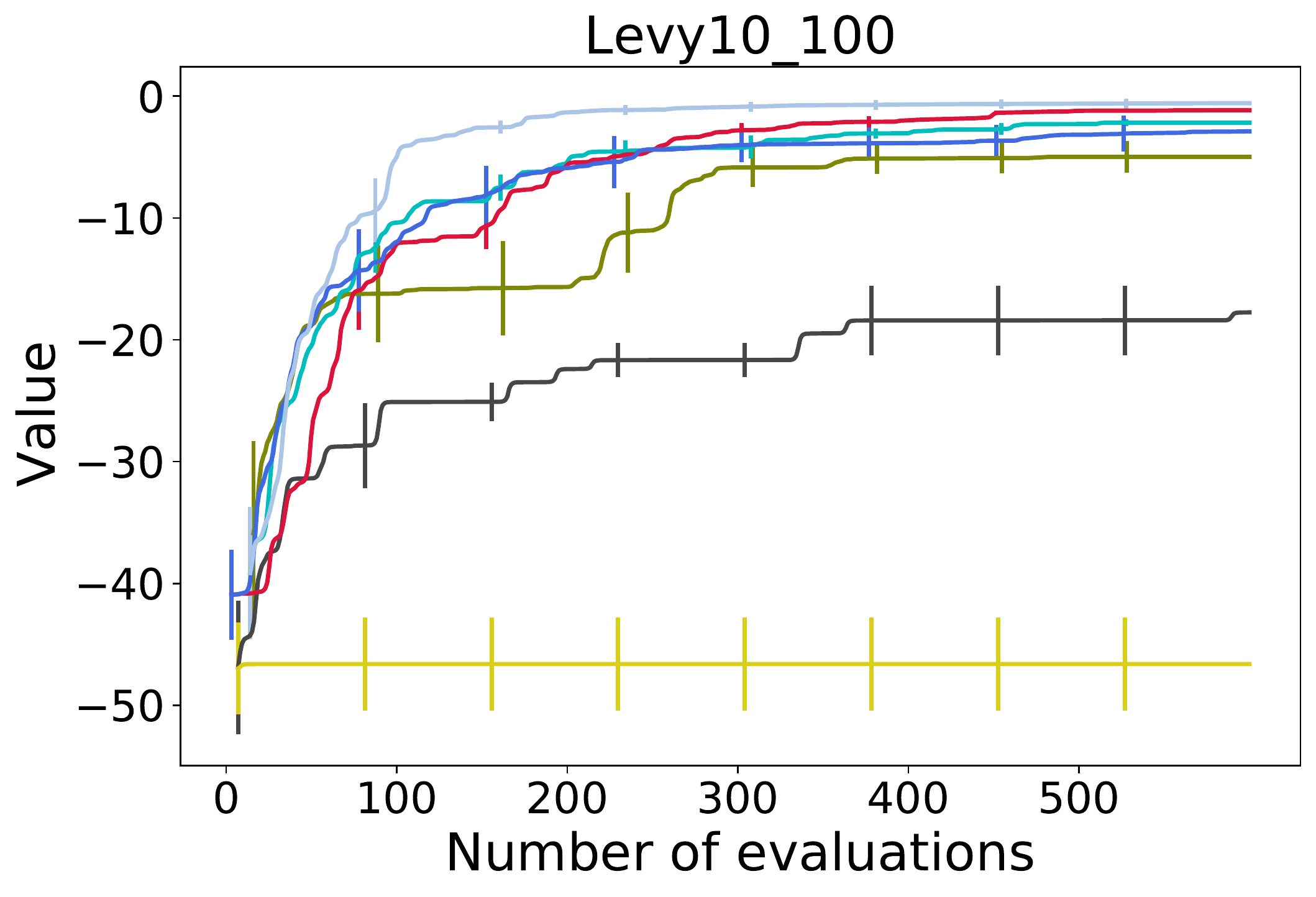}}
    \subfigure{\includegraphics[width=0.24\textwidth]{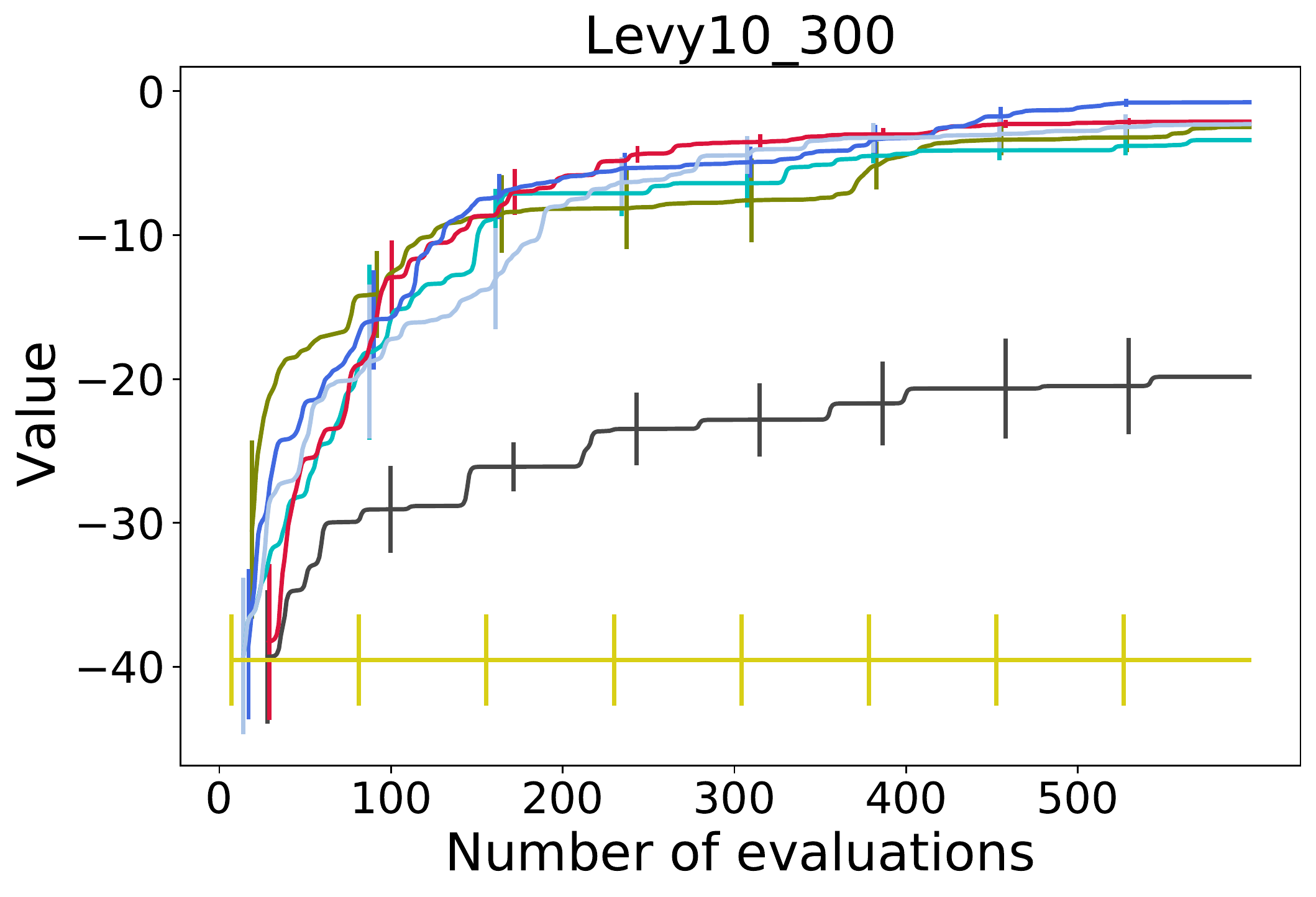}}
    \subfigure{\includegraphics[width=0.24\textwidth]{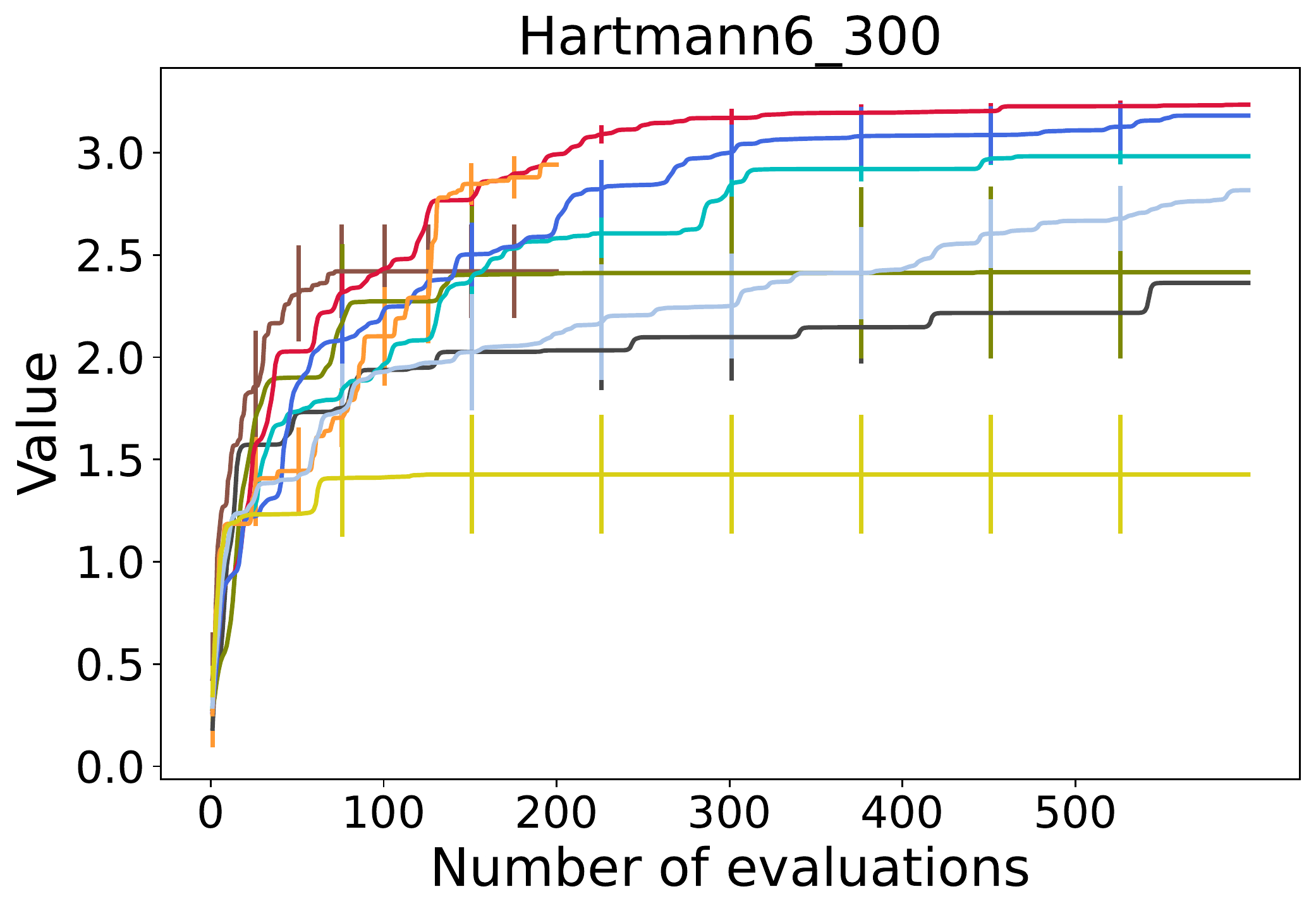}}
    \subfigure{\includegraphics[width=0.24\textwidth]{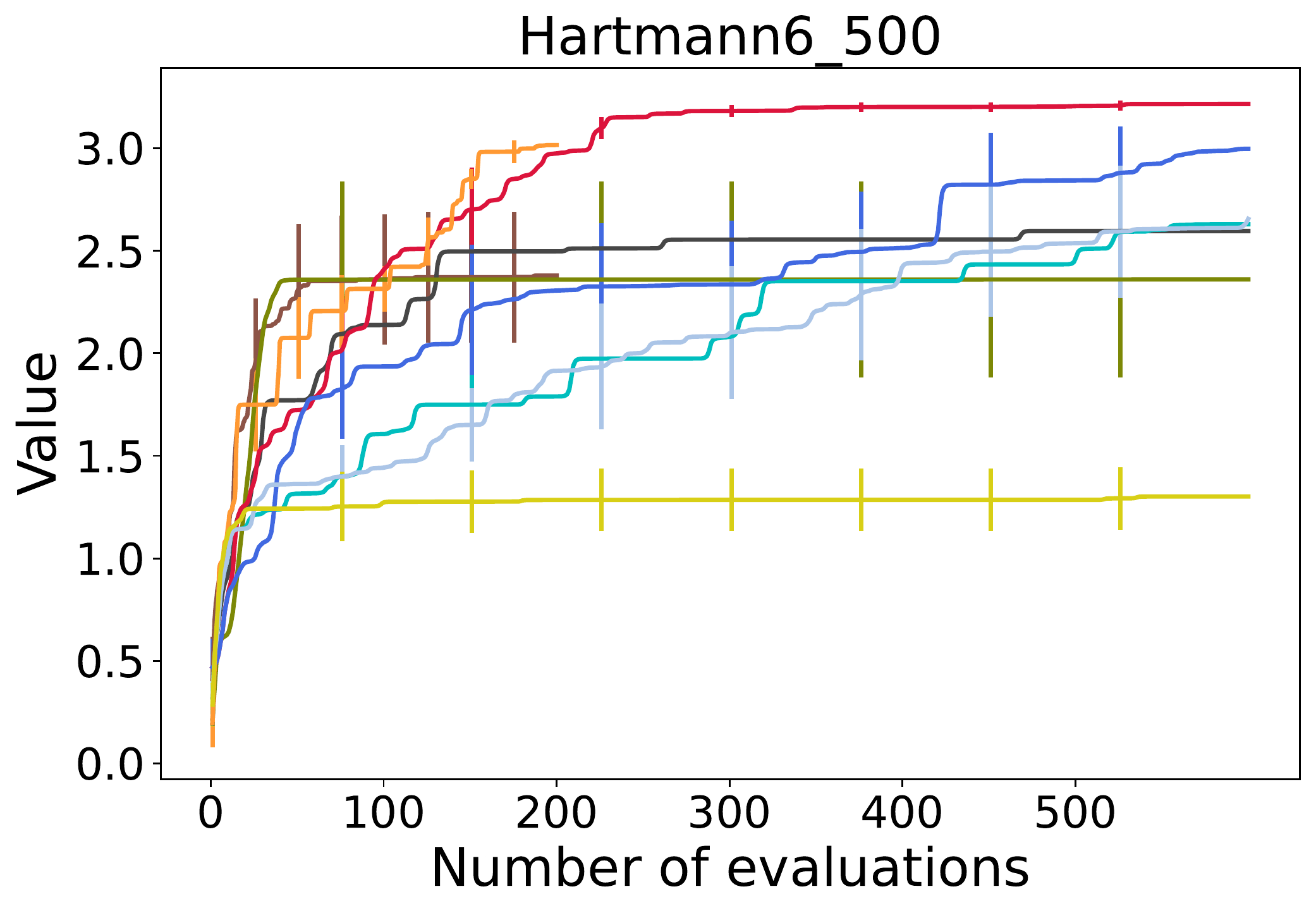}}\vspace{-0.8em}
    \caption{Comparison among MCTS-VS and state-of-the-art methods on synthetic functions.}
    \label{fig:synthetic2}\vspace{-0.5em}
\end{figure*}

Next, we compare the practical running overheads of these methods. We run each method for $100$ evaluations independently using $30$ different random seeds, and calculate the average wall clock time. The results are shown in Table~\ref{tab:time}. As expected, when using variable selection (i.e., Dropout and MCTS-VS), the time is less than that of Vanilla BO or TuRBO, because we only optimize a subset of variables. MCTS-VS is a little slower than Dropout, which is because MCTS-VS needs to build the search tree and calculate the variable score, while Dropout only randomly selects variables. MCTS-VS is much faster than LA-MCTS-TuRBO, showing the advantage of partitioning the variables to partitioning the search space. \songl{SAASBO optimizes all variables instead of only a subset of variables and uses No-U-Turn sampler (NUTS) to inference, consuming $\times 500 \text{ --} \times 1000$ time.} HeSBO and ALEBO consume $\times 10 \text{ --} \times 500$ time compared with the variable selection methods. CMA-ES is very fast because it does not need to fit a GP model or optimize an acquisition function. The reasons for the small running overhead of MCTS-VS can be summarized as follows: 1)~it only optimizes a selected subset of variables; 2)~the depth of the search tree is shallow, i.e., $O(\log D)$ in expectation and less than $D$ in the worse case; 3)~the variable score vector in Eq.~(\refeq{eq-score}) is easy to calculate for bifurcating a tree node.

\textbf{Why MCTS-VS Can Perform Well.} The theoretical results have suggested that a good variable selection method should select as important variables as possible. Thus, we compare the quality of the variables selected by MCTS-VS and Dropout (i.e., random selection), measured by the recall $d^*_t/d$, where $d$ is the number of valid variables, and $d^*_t$ is the number of valid variables selected at iteration $t$. Dropout randomly selects $d$ variables at each iteration, and thus, the recall is $d/D$ in expectation. For MCTS-VS, we run MCTS-VS-BO for $600$ evaluations on five different random seeds, and calculate the average recall. As shown in Table~\ref{tab:theory}, the average recall of MCTS-VS is much larger than that of Dropout, implying that MCTS-VS can select better variables than random selection, and thus achieve a good performance as shown before. Meanwhile, the recall between $0.35$ and $0.433$ of MCTS-VS also implies that the variable selection method could be further improved.


\subsection{Real-World Problems}

We further compare MCTS-VS with the baselines on real-world problems, including NAS-Bench-101~\citep{nas101}, NAS-Bench-201~\citep{Dong2020NAS-Bench-201}, Hopper and Walker2d. NAS-Bench problems are popular benchmarks in high-dimensional BO. Hopper and Walker2d are robot locomotion tasks in MuJoCo~\citep{mujoco}, which is a popular black-box optimization benchmark and much more difficult than NAS-Bench problems. The experimental results on more real-world problems can refer to Appendix~\ref{appendix:additionalexperiments}.

\begin{table*}[t!]
\caption{Wall clock time (in seconds) comparison among different methods.}
\vskip 0.05in
\begin{center}
\begin{small}
\begin{sc}
\begin{tabular}{lcccc}
\toprule
Method & Levy10\_100 & Levy10\_300 & Hartmann6\_300 & Hartmann6\_500 \\
\midrule
Vanilla BO    & 3.190    & 4.140    & 4.844   & 5.540    \\
Dropout-BO    & 2.707    & 3.225    & 3.237   & 3.685 \\ 
MCTS-VS-BO    & 2.683    & 3.753    & 3.711   & 4.590 \\
\midrule
TuRBO         & 8.621    & 9.206    & 9.201   & 9.754 \\
LA-MCTS-TuRBO & 14.431   & 22.165   & 25.853  & 34.381 \\
MCTS-VS-TuRBO & 4.912    & 5.616    & 5.613   & 5.893 \\
\midrule
SAASBO        & \songl{/}  & \songl{/}  & 2185.678& 4163.121 \\
HeSBO         & 220.459  & 185.092  & 51.678  & 55.699 \\
ALEBO         & \songl{/} & \songl{/} & 470.714 & 512.641 \\
CMA-ES        & 0.030    & 0.043    & 0.043   & 0.045 \\
\bottomrule
\end{tabular}
\end{sc}
\end{small}
\end{center}
\vskip -0.2in
\label{tab:time}
\vspace{-0.5em}
\end{table*}

\begin{table*}[t!]
\caption{Recall comparison between MCTS-VS and Dropout.}
\vskip 0.05in
\begin{center}
\begin{small}
\begin{sc}
\begin{tabular}{lcccc}
\toprule
Method & Levy10\_100 & Levy10\_300 & Hartmann6\_300 & Hartmann6\_500  \\
\midrule
Dropout & 0.100 & 0.030 & 0.020 & 0.012  \\ 
MCTS-VS & 0.429 & 0.433 & 0.352 & 0.350  \\ 
\bottomrule
\end{tabular}
\end{sc}
\end{small}
\end{center}
\vskip -0.2in
\label{tab:theory}\vspace{-0.5em}
\end{table*}


\textbf{NAS-Bench Problems.} NAS-Bench-101 is a tabular data set that maps convolutional neural network architectures to their trained and evaluated performance on CIFAR-10, and we create a constrained problem with $D=36$ in the same way as~\citep{alebo}. NAS-Bench-201 is an extension to NAS-Bench-101, leading to a problem with $D=30$ but without constraints. Figure~\ref{fig:nas} shows the results with the wall clock time as the $x$-axis, where the gray dashed line denotes the optimum. The results using the number of evaluations as the $x$-axis are provided in Appendix~\ref{appendix:additionalexperiments}, showing that the performance of BO-style methods is similar, as already observed in~\citep{alebo}. This may be because there are many structures whose objective values are close to the optimum. But when considering the actual runtime, MCTS-VS-BO is still clearly better as shown in Figure~\ref{fig:nas}, due to the advantage of variable selection. \songl{We also provide results on more NAS-Bench problems, including NAS-Bench-1Shot1~\cite{Zela2020NAS-Bench-1Shot1}, TransNAS-Bench-101~\cite{transnas} and NAS-Bench-ASR~\cite{mehrotra2021nasbenchasr} in Appendix~\ref{appendix:additionalexperiments}.}

\textbf{MuJoCo Locomotion Tasks.} Next we turn to the more difficult MuJoCo tasks in RL. The goal is to find the parameters of a linear policy maximizing the accumulative reward. Different from previous problems, the objective $f$ (i.e., the accumulative reward) is highly stochastic here, making it difficult to solve. We use the mean of three independent evaluations to estimate $f$, and limit the evaluation budget to $1200$ due to expensive evaluation. Note that we do not run \songl{SAASBO,} ALEBO, and VAE-BO because \songl{SAASBO} and ALEBO are extremely time-consuming, and VAE-BO behaves badly in previous experiments. The results are shown in Figure~\ref{fig:rl}. TuRBO behaves well on Hopper with a low dimension $D=33$, and MCTS-VS-TuRBO, combining the advantage of variable selection and TuRBO, achieves better performance, outperforming all the other baselines. On Walker2d with a high dimension $D=102$, MCTS-VS-BO performs the best, because of the good scalability. Most methods have large variance due to the randomness of $f$. For HeSBO, we have little knowledge about the parameter $d$, and use $10$ and $20$ for Hopper and Walker2d, respectively. Its performance may be improved by choosing a better $d$, which, however, requires running the experiment many times, and is time-consuming. Note that on the two MuJoCo tasks, Hopper and Walker2d, each variable is valid. The good performance of MCTS-VS may be because optimizing only a subset of variables is sufficient for
achieving the goal and MCTS-VS can select them. For example, the Walker2D robot consists of four main body parts: a torso, two thighs, two legs and two feet, where adjacent ones are connected by two hinges. The goal is to move forward by optimizing the hinges, each of which is valid. But even locking the hinges between legs and feet, the robot can still move forward by optimizing the other hinges. This is similar to that when the ankles are fixed, a person can still walk.

\begin{figure}[t!]
    \centering
    \subfigure{\includegraphics[width=0.65\textwidth]{final-version/legend/exp2_legend_1.pdf}}\\
    \vspace{-1.3em}
    \subfigure{\includegraphics[width=0.6\textwidth]{final-version/legend/exp2_legend_2.pdf}}\\
    \vspace{-0.5em}
    \centering
    \begin{minipage}[t]{0.495\textwidth}
    \includegraphics[width=0.495\textwidth]{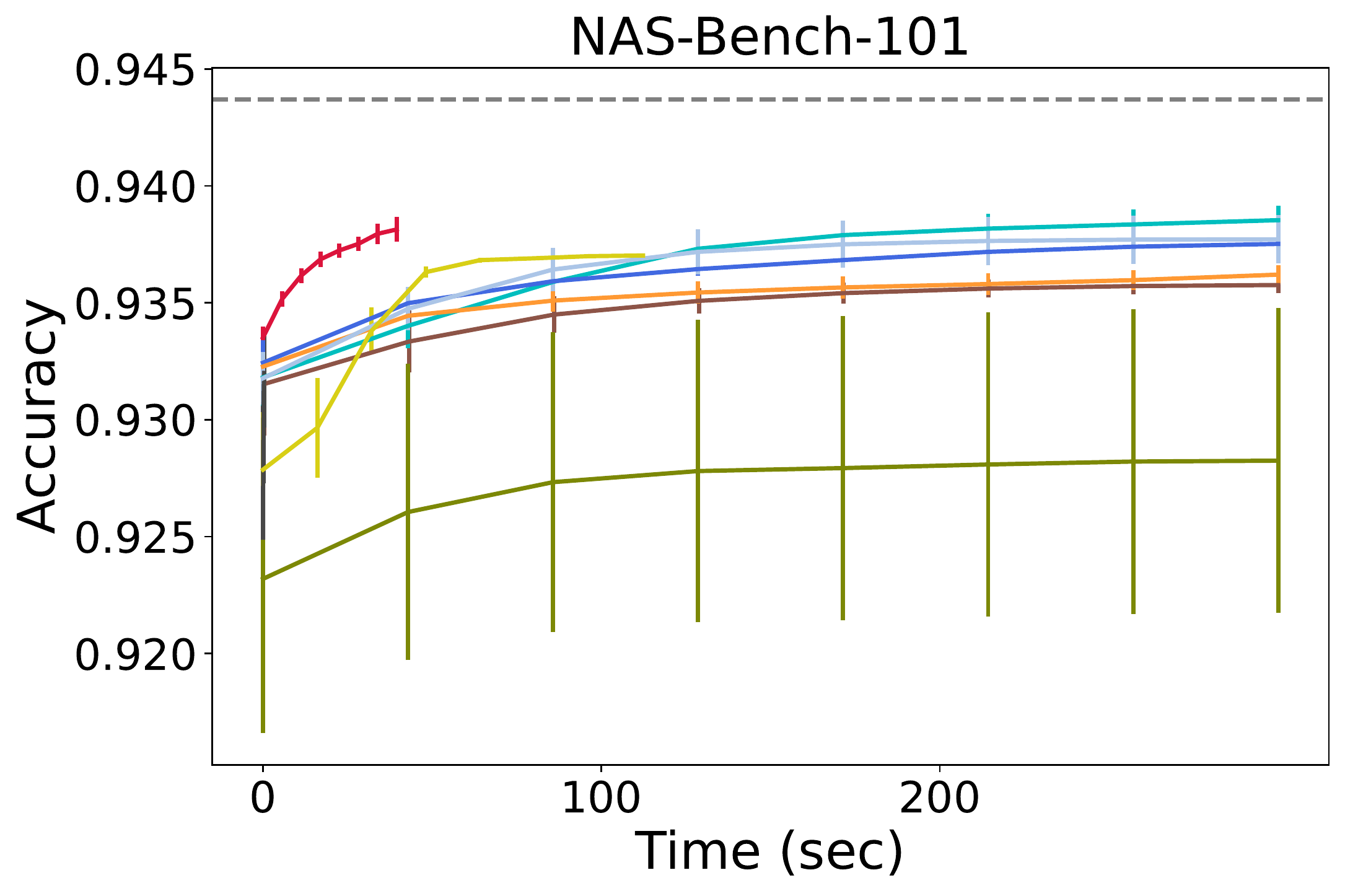}
    \includegraphics[width=0.495\textwidth]{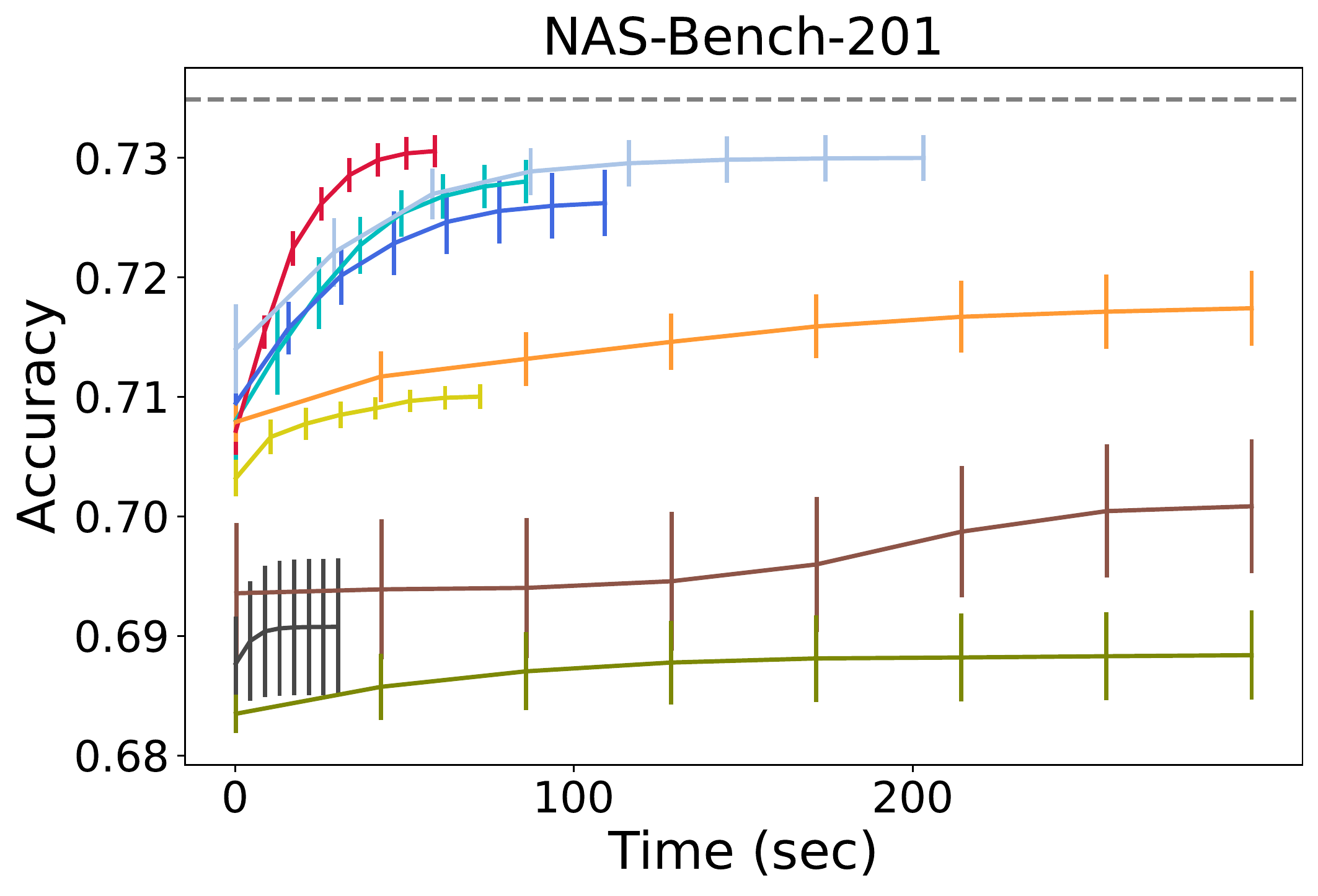}
    \vspace{-1.6em}
    \caption{Comparison on NAS-Bench.}
    \label{fig:nas}
    \end{minipage}
    \vspace{-1em}
    \begin{minipage}[t]{0.495\textwidth}
    \includegraphics[width=0.495\textwidth]{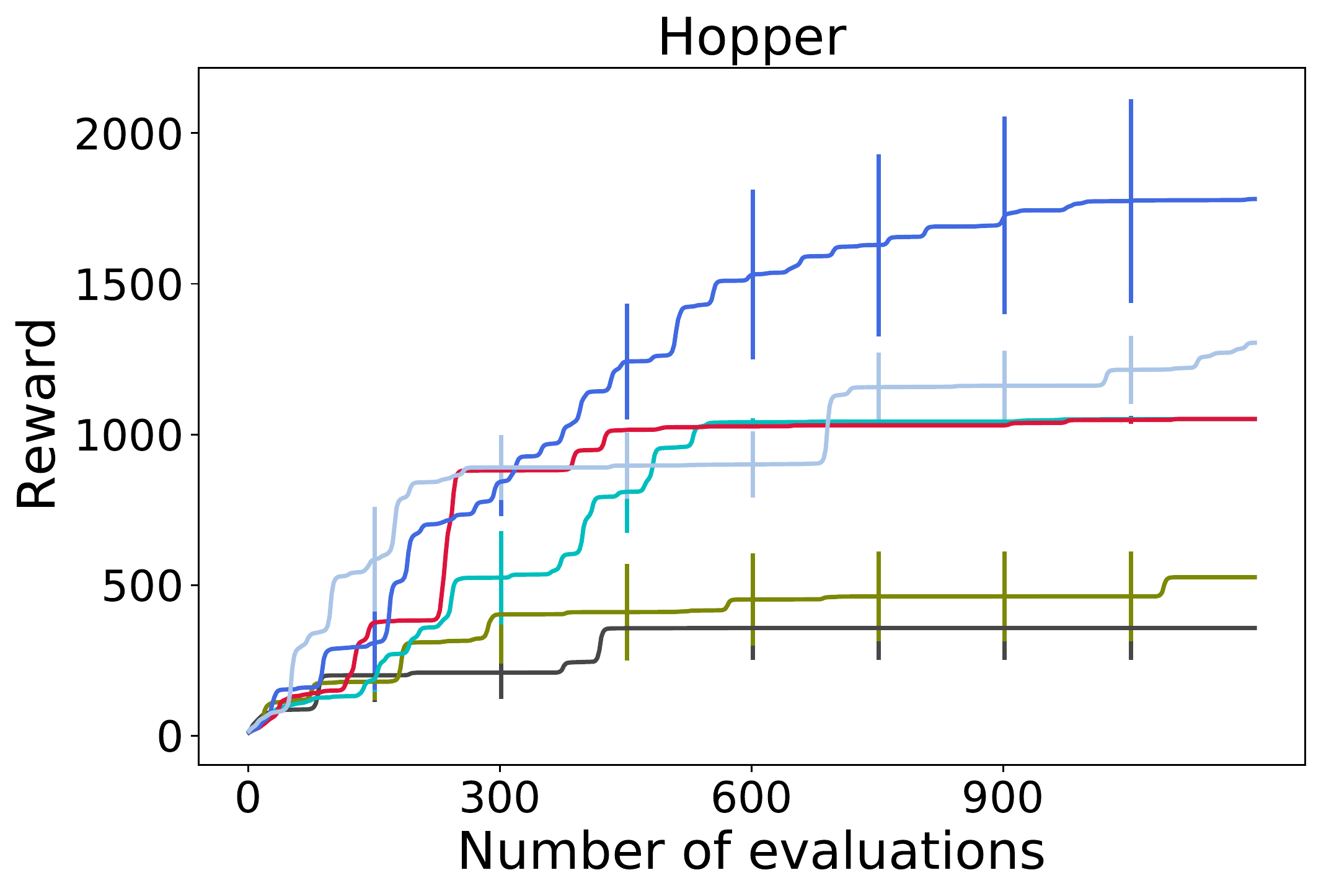}
    \includegraphics[width=0.495\textwidth]{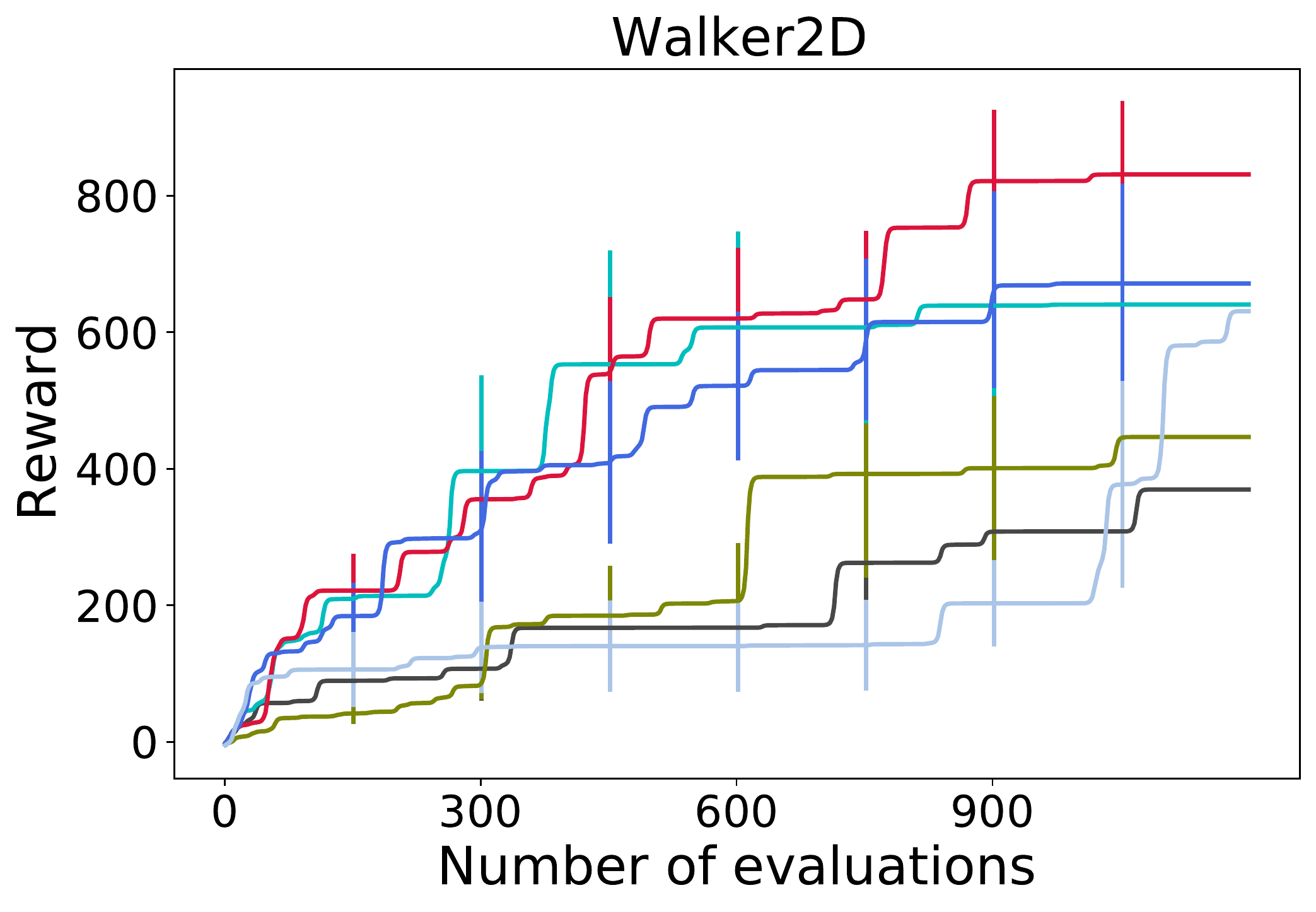}
    \vspace{-1.6em}
    \caption{Comparison on MuJoCo.}
    \label{fig:rl}
    \end{minipage}
    \vspace{-1em}
\end{figure}


\textbf{Further Studies.} We further perform sensitivity analysis about the hyper-parameters of MCTS-VS, including the employed optimizer, ``fill-in'' strategy, $C_p$ for calculating UCB in Eq.~(\ref{eq-MCTS-UCB}), number $2\times N_v\times N_s$ of sampled data in each iteration, threshold $N_{bad}$ for re-initializing a tree and $N_{split}$ for splitting a tree node. Please see Appendix~\ref{appendix:ablation}. \songl{We conduct additional experiments in Appendix~\ref{appendix:additionalexperiments}, including experiments on synthetic functions depending on a subset of variables to various extent and with increasing ratio of valid variables, examination of combining MCTS-VS with SAASBO (which can be viewed as a hierarchical variable selection method), and comparison with other variable selection methods (e.g., LASSO).}

\section{Conclusion}

In this paper, we propose the MCTS-VS method for variable selection in high-dimensional BO, which uses MCTS to recursively partition the variables into important and unimportant ones, and only optimizes those important variables. Theoretical results suggest selecting as important variables as possible, which may be of independent interest for variable selection. Comprehensive experiments on synthetic, NAS-bench and MuJoCo problems demonstrate the effectiveness of MCTS-VS. 

However, MCTS-VS relies on the assumption of low effective dimensionality, and might not work well if the percentage of valid variables is high. The amount of hyper-parameters might be another limitation, though our sensitivity analysis has shown that the performance of MCTS-VS is not sensitive to most hyper-parameters. The current theoretical analysis is for general variable selection, while it will be very interesting to perform specific theoretical analysis for MCTS-VS.


\subsection*{Acknowledgement}
The authors would like to thank reviewers for their helpful comments and suggestions. This work was supported by the NSFC (62022039, 62276124) and the Fundamental Research Funds for the Central Universities (0221-14380014).

\bibliography{ref}

\begin{thebibliography}{46}
\providecommand{\natexlab}[1]{#1}
\providecommand{\url}[1]{\texttt{#1}}
\expandafter\ifx\csname urlstyle\endcsname\relax
  \providecommand{\doi}[1]{doi: #1}\else
  \providecommand{\doi}{doi: \begingroup \urlstyle{rm}\Url}\fi

\bibitem[Auer et~al.(2002)Auer, Cesa-Bianchi, and Fischer]{ucb1}
P.~Auer, N.~Cesa-Bianchi, and P.~Fischer.
\newblock Finite-time analysis of the multiarmed bandit problem.
\newblock \emph{Machine learning}, 47\penalty0 (2):\penalty0 235--256, 2002.

\bibitem[Binois and Wycoff(2022)]{telosurvey}
M.~Binois and N.~Wycoff.
\newblock A survey on high-dimensional {G}aussian process modeling with
  application to {B}ayesian optimization.
\newblock \emph{{ACM} Transactions on Evolutionary Learning and Optimization},
  2\penalty0 (2):\penalty0 1--26, 2022.

\bibitem[Binois et~al.(2015)Binois, Ginsbourger, and Roustant]{rembophi}
M.~Binois, D.~Ginsbourger, and O.~Roustant.
\newblock A warped kernel improving robustness in {B}ayesian optimization via
  random embeddings.
\newblock In \emph{Proceedings of the 9th International Conference on Learning
  and Intelligent Optimization (LION'15)}, pages 281--286, Lille, France, 2015.

\bibitem[Binois et~al.(2020)Binois, Ginsbourger, and Roustant]{rembogamma}
M.~Binois, D.~Ginsbourger, and O.~Roustant.
\newblock On the choice of the low-dimensional domain for global optimization
  via random embeddings.
\newblock \emph{Journal of Global Optimization}, 76\penalty0 (1):\penalty0
  69--90, 2020.

\bibitem[Browne et~al.(2012)Browne, Powley, Whitehouse, Lucas, Cowling,
  Rohlfshagen, Tavener, Liebana, Samothrakis, and Colton]{mctssurvey}
C.~Browne, E.~J. Powley, D.~Whitehouse, S.~M.~M. Lucas, P.~I. Cowling,
  P.~Rohlfshagen, S.~Tavener, D.~P. Liebana, S.~Samothrakis, and S.~Colton.
\newblock A survey of {M}onte {C}arlo tree search methods.
\newblock \emph{IEEE Transactions on Computational Intelligence and AI in
  Games}, 4\penalty0 (1):\penalty0 1--43, 2012.

\bibitem[Calandra et~al.(2015)Calandra, Seyfarth, Peters, and Deisenroth]{borl}
R.~Calandra, A.~Seyfarth, J.~Peters, and M.~P. Deisenroth.
\newblock Bayesian optimization for learning gaits under uncertainty.
\newblock \emph{Annals of Mathematics and Artificial Intelligence}, 76\penalty0
  (1):\penalty0 5--23, 2015.

\bibitem[Dong and Yang(2020)]{Dong2020NAS-Bench-201}
X.~Dong and Y.~Yang.
\newblock {NAS}-{B}ench-201: {E}xtending the scope of reproducible neural
  architecture search.
\newblock In \emph{Proceedings of the 8th International Conference on Learning
  Representations (ICLR'20)}, Addis Ababa, Ethiopia, 2020.

\bibitem[Duan et~al.(2021)Duan, Chen, Xu, Chen, Liang, Zhang, and Li]{transnas}
Y.~Duan, X.~Chen, H.~Xu, Z.~Chen, X.~Liang, T.~Zhang, and Z.~Li.
\newblock {T}rans{NAS}-{B}ench-101: {I}mproving transferability and
  generalizability of cross-task neural architecture search.
\newblock \emph{CoRR abs/2105.11871}, 2021.

\bibitem[Eriksson and Jankowiak(2021)]{saas}
D.~Eriksson and M.~Jankowiak.
\newblock High-dimensional {B}ayesian optimization with sparse axis-aligned
  subspaces.
\newblock In \emph{Proceedings of the 37th Conference on Uncertainty in
  Artificial Intelligence (UAI'21)}, pages 493--503, Virtual, 2021.

\bibitem[Eriksson et~al.(2019)Eriksson, Pearce, Gardner, Turner, and
  Poloczek]{turbo}
D.~Eriksson, M.~Pearce, J.~R. Gardner, R.~D. Turner, and M.~Poloczek.
\newblock Scalable global optimization via local {B}ayesian optimization.
\newblock In \emph{Advances in Neural Information Processing Systems 32
  (NeurIPS'19)}, pages 5497--5508, Vancouver, Canada, 2019.

\bibitem[Frazier(2018)]{bosurvey2}
P.~I. Frazier.
\newblock A tutorial on {B}ayesian optimization.
\newblock \emph{CoRR abs/1807.02811}, 2018.

\bibitem[G{\'o}mez-Bombarelli et~al.(2018)G{\'o}mez-Bombarelli, Duvenaud,
  Hern{\'a}ndez-Lobato, Aguilera-Iparraguirre, Hirzel, Adams, and
  Aspuru-Guzik]{vae1}
R.~G{\'o}mez-Bombarelli, D.~K. Duvenaud, J.~M. Hern{\'a}ndez-Lobato,
  J.~Aguilera-Iparraguirre, T.D. Hirzel, R.~P. Adams, and A.~Aspuru-Guzik.
\newblock Automatic chemical design using a data-driven continuous
  representation of molecules.
\newblock \emph{ACS Central Science}, 4\penalty0 (2):\penalty0 268 -- 276,
  2018.

\bibitem[Han et~al.(2021)Han, Arora, and Scarlett]{addtreestructure}
E.~Han, I.~Arora, and J.~Scarlett.
\newblock High-dimensional {B}ayesian optimization via tree-structured additive
  models.
\newblock In \emph{Proceedings of the 35th Association for the Advancement of
  Artificial Intelligence (AAAI'21)}, pages 7630--7638, Virtual, 2021.

\bibitem[Hansen(2016)]{cmaes}
N.~Hansen.
\newblock The {CMA} evolution strategy: {A} tutorial.
\newblock \emph{CoRR abs/1604.00772}, 2016.

\bibitem[Hoang et~al.(2018)Hoang, Hoang, Ouyang, and Low]{factorgraph}
T.~N. Hoang, Q.~M. Hoang, R.~Ouyang, and K.~H. Low.
\newblock Decentralized high-dimensional {B}ayesian optimization with factor
  graphs.
\newblock In \emph{Proceedings of the 32nd Association for the Advancement of
  Artificial Intelligence (AAAI'18)}, pages 3231--3239, New Orleans, LA, 2018.

\bibitem[Jones et~al.(1998)Jones, Schonlau, and Welch]{ei2}
D.~R. Jones, M.~Schonlau, and W.~J. Welch.
\newblock Efficient global optimization of expensive black-box functions.
\newblock \emph{Journal of Global Optimization}, 13\penalty0 (4):\penalty0
  455--492, 1998.

\bibitem[Kandasamy et~al.(2015)Kandasamy, Schneider, and P{\'o}czos]{addgpucb}
K.~Kandasamy, J.~G. Schneider, and B.~P{\'o}czos.
\newblock High dimensional {B}ayesian optimisation and bandits via additive
  models.
\newblock In \emph{Proceedings of the 32nd International Conference on Machine
  Learning (ICML'15)}, pages 295--304, Lille, France, 2015.

\bibitem[Kingma and Welling(2014)]{Kingma2014AutoEncodingVB}
D.~P. Kingma and M.~Welling.
\newblock Auto-encoding variational {B}ayes.
\newblock \emph{CoRR abs/1312.6114}, 2014.

\bibitem[Kushner(1964)]{Krushner64PI}
H.~J. Kushner.
\newblock A new method of locating the maximum point of an arbitrary multipeak
  curve in the presence of noise.
\newblock \emph{Journal of Basic Engineering}, 86\penalty0 (1):\penalty0
  97--106, 1964.

\bibitem[Letham et~al.(2020)Letham, Calandra, Rai, and Bakshy]{alebo}
B.~Letham, R.~Calandra, A.~Rai, and E.~Bakshy.
\newblock Re-examining linear embeddings for high-dimensional {B}ayesian
  optimization.
\newblock In \emph{Advances in Neural Information Processing Systems 33
  (NeurIPS'20)}, pages 1546--1558, Vancouver, Canada, 2020.

\bibitem[Li et~al.(2017)Li, Gupta, Rana, Nguyen, Venkatesh, and
  Shilton]{dropout}
C.~Li, S.~Gupta, S.~Rana, V.~Nguyen, S.~Venkatesh, and A.~Shilton.
\newblock High dimensional {B}ayesian optimization using dropout.
\newblock In \emph{Proceedings of the 26th International Joint Conference on
  Artificial Intelligence (IJCAI'17)}, pages 2096--2102, Melbourne, Australia,
  2017.

\bibitem[Lu et~al.(2018)Lu, Gonz{\'a}lez, Dai, and Lawrence]{vae2}
X.~Lu, J.~I. Gonz{\'a}lez, Z.~Dai, and N.~D. Lawrence.
\newblock Structured variationally auto-encoded optimization.
\newblock In \emph{Proceedings of the 35th International Conference on Machine
  Learning (ICML'18)}, pages 3306--3314, Stockholm, Sweden, 2018.

\bibitem[Malu et~al.(2021)Malu, Dasarathy, and Spanias]{bosurvey3}
M.~Malu, G.~Dasarathy, and A.~Spanias.
\newblock {B}ayesian optimization in high-dimensional spaces: {A} brief survey.
\newblock In \emph{Proceedings of the 12th International Conference on
  Information, Intelligence, Systems \& Applications (IISA'21)}, pages 1--8,
  Virtual, 2021.

\bibitem[McKay et~al.(1979)McKay, Beckman, and Conover]{lhs}
M.~D. McKay, R.~J. Beckman, and W.~J. Conover.
\newblock A comparison of three methods for selecting values of input variables
  in the analysis of output from a computer code.
\newblock \emph{Technometrics}, 21\penalty0 (2):\penalty0 239--245, 1979.

\bibitem[Mehrotra et~al.(2021)Mehrotra, Ramos, Bhattacharya, Dudziak, Vipperla,
  Chau, Abdelfattah, Ishtiaq, and Lane]{mehrotra2021nasbenchasr}
A.~Mehrotra, A.~G. C.~P. Ramos, S.~Bhattacharya, {\L}.~Dudziak, R.~Vipperla,
  T.~Chau, M.~S. Abdelfattah, S.~Ishtiaq, and N.~D. Lane.
\newblock {NAS}-{B}ench-{ASR}: {R}eproducible neural architecture search for
  speech recognition.
\newblock In \emph{Proceedings of the 9th International Conference on Learning
  Representations (ICLR'21)}, Virtual, 2021.

\bibitem[Mutn{\'y} and Krause(2018)]{addoverlapping2}
M.~Mutn{\'y} and A.~Krause.
\newblock Efficient high dimensional {B}ayesian optimization with additivity
  and quadrature {F}ourier features.
\newblock In \emph{Advances in Neural Information Processing Systems 31
  (NeurIPS'18)}, pages 9005--9016, Montreal, Canada, 2018.

\bibitem[Nayebi et~al.(2019)Nayebi, Munteanu, and Poloczek]{hesbo}
A.~Nayebi, A.~Munteanu, and M.~Poloczek.
\newblock A framework for {B}ayesian optimization in embedded subspaces.
\newblock In \emph{Proceedings of the 36th International Conference on Machine
  LearninG (ICML'19)}, pages 4752--4761, Long Beach, CA, 2019.

\bibitem[Nocedal and Wright(2006)]{NoceWrig06}
J.~Nocedal and S.~J. Wright.
\newblock \emph{{N}umerical {O}ptimization}.
\newblock Springer, New York, NY, second edition edition, 2006.

\bibitem[Qian et~al.(2020)Qian, Xiong, and Xue]{bopp}
C.~Qian, H.~Xiong, and K.~Xue.
\newblock Bayesian optimization using pseudo-points.
\newblock In \emph{Proceedings of the 29th International Joint Conference on
  Artificial Intelligence (IJCAI'20)}, pages 3044--3050, Yokohama, Japan, 2020.

\bibitem[Rasmussen and Williams(2006)]{gp}
C.~E. Rasmussen and C.~K.~I. Williams.
\newblock \emph{{G}aussian {P}rocesses for {M}achine {L}earning}.
\newblock The MIT Press, Cambridge, MA, 2006.

\bibitem[Rolland et~al.(2018)Rolland, Scarlett, Bogunovic, and
  Cevher]{addoverlapping1}
P.~Rolland, J.~Scarlett, I.~Bogunovic, and V.~Cevher.
\newblock High-dimensional {B}ayesian optimization via additive models with
  overlapping groups.
\newblock In \emph{Proceedings of the 21st International Conference on
  Artificial Intelligence and Statistics (AISTATS'18)}, pages 298--307, Playa
  Blanca, Spain, 2018.

\bibitem[Shahriari et~al.(2015)Shahriari, Swersky, Wang, Adams, and
  De~Freitas]{bosurvey1}
B.~Shahriari, K.~Swersky, Z.~Wang, R.~P. Adams, and N.~De~Freitas.
\newblock Taking the human out of the loop: {A} review of {B}ayesian
  optimization.
\newblock \emph{Proceedings of the IEEE}, 104\penalty0 (1):\penalty0 148--175,
  2015.

\bibitem[Shen and Kingsford(2021)]{vsbo}
Y.~Shen and C.~Kingsford.
\newblock Computationally efficient high-dimensional {B}ayesian optimization
  via variable selection.
\newblock \emph{CoRR abs/2109.09264}, 2021.

\bibitem[Silver et~al.(2016)Silver, Huang, Maddison, Guez, Sifre, van~den
  Driessche, Schrittwieser, Antonoglou, Panneershelvam, Lanctot, Dieleman,
  Grewe, Nham, Kalchbrenner, Sutskever, Lillicrap, Leach, Kavukcuoglu, Graepel,
  and Hassabis]{mctsgo}
D.~Silver, A.~Huang, C.~J. Maddison, A.~Guez, L.~Sifre, G.~van~den Driessche,
  J.~Schrittwieser, I.~Antonoglou, V.~Panneershelvam, M.~Lanctot, S.~Dieleman,
  D.~Grewe, J.~Nham, N.~Kalchbrenner, I.~Sutskever, T.~P. Lillicrap, M.~Leach,
  K.~Kavukcuoglu, T.~Graepel, and D.~Hassabis.
\newblock Mastering the game of {G}o with deep neural networks and tree search.
\newblock \emph{Nature}, 529\penalty0 (7587):\penalty0 484--489, 2016.

\bibitem[Silver et~al.(2017)Silver, Schrittwieser, Simonyan, Antonoglou, Huang,
  Guez, Hubert, baker, Lai, Bolton, Chen, Lillicrap, Hui, Sifre, van~den
  Driessche, Graepel, and Hassabis]{mctsgo1}
D.~Silver, J.~Schrittwieser, K.~Simonyan, I.~Antonoglou, A.~Huang, A.~Guez,
  T.~Hubert, L.~baker, M.~Lai, A.~Bolton, Y.~Chen, T.~P. Lillicrap, F.~Hui,
  L.~Sifre, G.~van~den Driessche, T.~Graepel, and D.~Hassabis.
\newblock Mastering the game of {G}o without human knowledge.
\newblock \emph{Nature}, 550\penalty0 (7676):\penalty0 354--359, 2017.

\bibitem[Snoek et~al.(2015)Snoek, Rippel, Swersky, Kiros, Satish, Sundaram,
  Patwary, Prabhat, and Adams]{dngo}
J.~Snoek, O.~Rippel, K.~Swersky, R.~Kiros, N.~Satish, N.~Sundaram, M.~M.~A.
  Patwary, Prabhat, and R.~P. Adams.
\newblock Scalable {B}ayesian optimization using deep neural networks.
\newblock In \emph{Proceedings of the 32nd International Conference on Machine
  Learning (ICML'15)}, pages 2171--2180, Lille, France, 2015.

\bibitem[Spagnol et~al.(2019)Spagnol, Riche, and Veiga]{hsicbo}
A.~Spagnol, R.~L. Riche, and S.~D. Veiga.
\newblock Bayesian optimization in effective dimensions via kernel-based
  sensitivity indices.
\newblock In \emph{Proceedings of the 13th International Conference on
  Applications of Statistics and Probability in Civil Engineering (ICASP'13)},
  Seoul, Korea, 2019.

\bibitem[Srinivas et~al.(2012)Srinivas, Krause, Kakade, and Seeger]{gpucb}
N.~Srinivas, A.~Krause, S.~M. Kakade, and M.~W. Seeger.
\newblock Information-theoretic regret bounds for {G}aussian process
  optimization in the bandit setting.
\newblock \emph{IEEE Transactions on Information Theory}, 58\penalty0
  (5):\penalty0 3250--3265, 2012.

\bibitem[Todorov et~al.(2012)Todorov, Erez, and Tassa]{mujoco}
E.~Todorov, E.~Erez, and Y.~Tassa.
\newblock Mu{J}o{C}o: {A} physics engine for model-based control.
\newblock \emph{IEEE/RSJ International Conference on Intelligent Robots and
  Systems}, pages 5026--5033, 2012.

\bibitem[Wang et~al.(2020)Wang, Fonseca, and Tian]{lamcts}
L.~Wang, R.~Fonseca, and Y.~Tian.
\newblock Learning search space partition for black-box optimization using
  {M}onte {C}arlo tree search.
\newblock In \emph{Advances in Neural Information Processing Systems 33
  (NeurIPS'20)}, pages 19511--19522, Vancouver, Canada, 2020.

\bibitem[Wang et~al.(2021)Wang, Xie, Li, Fonseca, and Tian]{lanas}
L.~Wang, S.~Xie, T.~Li, R.~Fonseca, and Y.~Tian.
\newblock Sample-efficient neural architecture search by learning actions for
  {M}onte {C}arlo tree search.
\newblock \emph{IEEE Transactions on Pattern Analysis and Machine
  Intelligence}, 2021.

\bibitem[Wang et~al.(2016)Wang, Hutter, Zoghi, Matheson, and de~Feitas]{rembo}
Z.~Wang, F.~Hutter, M.~Zoghi, D.~Matheson, and N.~de~Feitas.
\newblock Bayesian optimization in a billion dimensions via random embeddings.
\newblock \emph{Journal of Artificial Intelligence Research}, 55\penalty0
  (1):\penalty0 361--387, 2016.

\bibitem[Wang et~al.(2018)Wang, Gehring, Kohli, and Jegelka]{ebo}
Z.~Wang, C.~Gehring, P.~Kohli, and S.~Jegelka.
\newblock Batched large-scale {B}ayesian optimization in high-dimensional
  spaces.
\newblock In \emph{Proceedings of the 21st International Conference on
  Artificial Intelligence and Statistics (AISTATS'18)}, pages 745--754, Playa
  Blanca, Spain, 2018.

\bibitem[Wilson et~al.(2017)Wilson, Moriconi, Hutter, and Deisenroth]{qei}
J.~T. Wilson, R.~Moriconi, F.~Hutter, and M.~P. Deisenroth.
\newblock The reparameterization trick for acquisition functions.
\newblock \emph{CoRR abs/1712.00424}, 2017.

\bibitem[Ying et~al.(2019)Ying, Klein, Christiansen, Real, Murphy, and
  Hutter]{}
C.~Ying, A.~Klein, E.~Christiansen, E.~Real, K.~Murphy, and F.~Hutter.
\newblock {NAS}-bench-101: {T}owards reproducible neural architecture search.
\newblock In \emph{Proceedings of the 36th International Conference on Machine
  Learning (ICML'19)}, pages 7105--7114, Long Beach, CA, 2019.

\bibitem[Zela et~al.(2020)Zela, Siems, and Hutter]{Zela2020NAS-Bench-1Shot1}
A.~Zela, J.~Siems, and F.~Hutter.
\newblock {NAS}-{B}ench-1{S}hot1: {B}enchmarking and dissecting one-shot neural
  architecture search.
\newblock In \emph{Proceedings of the 8th International Conference on Learning
  Representations (ICLR'20)}, Addis Ababa, Ethiopia, 2020.

\end{thebibliography}
\bibliographystyle{plainnat}

\section*{Checklist}


\begin{enumerate}

\item For all authors...
\begin{enumerate}
  \item Do the main claims made in the abstract and introduction accurately reflect the paper's contributions and scope?
    \answerYes{}
  \item Did you describe the limitations of your work?
    \answerYes{See the end of Section~\ref{sec:synthetic} \rbt{and the last paragraph of the paper}.}
  \item Did you discuss any potential negative societal impacts of your work?
    \answerNA{}
  \item Have you read the ethics review guidelines and ensured that your paper conforms to them?
    \answerYes{}
\end{enumerate}

\item If you are including theoretical results...
\begin{enumerate}
  \item Did you state the full set of assumptions of all theoretical results?
    \answerYes{See Section~\ref{sec:theory_analysis}.}
        \item Did you include complete proofs of all theoretical results?
    \answerYes{See Appendix~\ref{sec:theory}.}
\end{enumerate}

\item If you ran experiments...
\begin{enumerate}
  \item Did you include the code, data, and instructions needed to reproduce the main experimental results (either in the supplemental material or as a URL)?
    \answerYes{See Appendix~\ref{appendix:hp}, and the code is provided in GitHub.}
  \item Did you specify all the training details (e.g., data splits, hyperparameters, how they were chosen)?
    \answerYes{See Appendix~\ref{appendix:hp}.}
        \item Did you report error bars (e.g., with respect to the random seed after running experiments multiple times)?
    \answerYes{We show error bars by the length of vertical bars in the figures.}
        \item Did you include the total amount of compute and the type of resources used (e.g., type of GPUs, internal cluster, or cloud provider)?
    \answerYes{}
\end{enumerate}

\item If you are using existing assets (e.g., code, data, models) or curating/releasing new assets...
\begin{enumerate}
  \item If your work uses existing assets, did you cite the creators?
    \answerYes{}
  \item Did you mention the license of the assets?
    \answerYes{}
  \item Did you include any new assets either in the supplemental material or as a URL?
    \answerYes{}
  \item Did you discuss whether and how consent was obtained from people whose data you're using/curating?
    \answerYes{}
  \item Did you discuss whether the data you are using/curating contains personally identifiable information or offensive content?
    \answerNA{}
\end{enumerate}

\item If you used crowdsourcing or conducted research with human subjects...
\begin{enumerate}
  \item Did you include the full text of instructions given to participants and screenshots, if applicable?
    \answerNA{}
  \item Did you describe any potential participant risks, with links to Institutional Review Board (IRB) approvals, if applicable?
    \answerNA{}
  \item Did you include the estimated hourly wage paid to participants and the total amount spent on participant compensation?
    \answerNA{}
\end{enumerate}

\end{enumerate}


\newpage

\appendix

\section{Example Illustration of MCTS-VS}
\label{sec:example illustration}

\begin{figure}[htbp]
    \centering
    \includegraphics[width=0.95\textwidth]{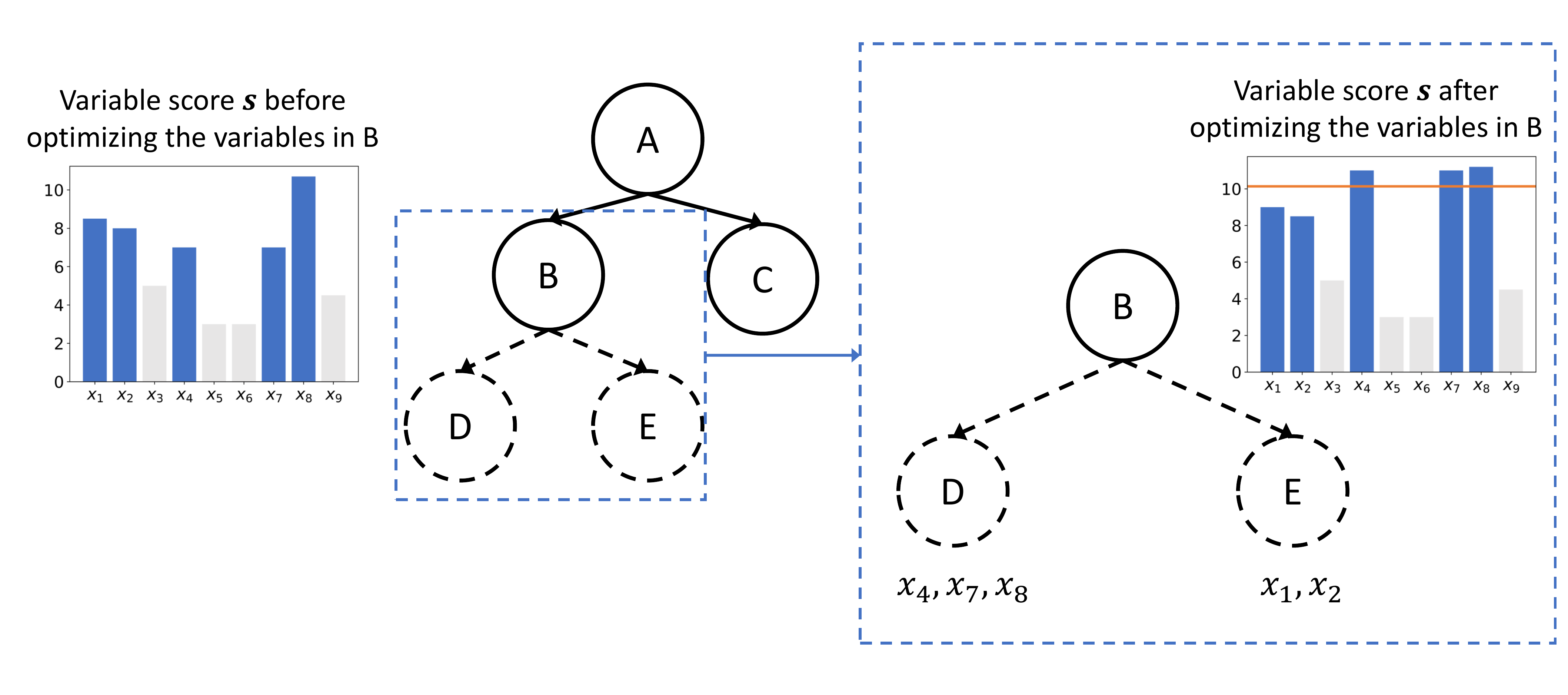}
    \caption{Example illustration of how MCTS-VS bifurcates a leaf node.}
    \label{fig:example_mcts_vs}
\end{figure}

Figure~\ref{fig:example_mcts_vs} gives an example of how MCTS-VS bifurcates a leaf node. Assume that we are to optimize a problem with dimension $D=9$, and the variables are denoted as $x_1,x_2,\ldots,x_9$. The Monte Carlo tree shown in the middle of Figure~\ref{fig:example_mcts_vs} now has three nodes, i.e., $A$, $B$ and $C$, denoted as the solid circles. The root $A$ contains all the nine variables. The current variable score vector $\bm s=[8.5, 8, 5, 7, 3, 3, 7, 10.7, 4.5]$, which is represented by the bar graph as shown in the left of Figure~\ref{fig:example_mcts_vs}. For each $i$, the value of $s_i$ represents the importance of the corresponding variable $x_i$. The blue and gray bars denote the important and unimportant variables, respectively, which are contained by the leaf nodes $B$ and $C$, respectively. That is, the leaf $B$ contains $x_1, x_2, x_4, x_7, x_8$, and $C$ contains the remaining $x_3,x_5,x_6,x_9$. The current $v$ values (i.e., the average scores of the contained variables) of the three nodes $A$, $B$ and $C$ are $v_A = (8.5 + 8 + 5 + 7 + 3 + 3 + 7 + 10.7 + 4.5) / 9 = 6.3$, $v_B = (8.5 + 8 + 7 + 7 + 10.7) / 5 = 8.24$ and $v_C = (5 + 3 + 3 + 4.5) / 4 = 3.875$, respectively. For the number that they have been visited, we have $n_A=1$, $n_B=0$ and $n_C=0$.

MCTS-VS starts from the root node $A$ at one iteration and recursively selects a node with a larger UCB value until a leaf node. According to the way of calculating UCB in Eq.~(\ref{eq-MCTS-UCB}), the UCB values of the leaf nodes $B$ and $C$ are both $\infty$ as $n_B=n_C=0$. In this case, MCTS-VS will select $B$ and $C$ randomly. Assume that $B$ is selected. The variables (i.e., $x_1, x_2, x_4, x_7$ and $x_8$) contained by $B$ will then be optimized by BO with $\mathbb A_B=\{1,2,4,7,8\}$, as in lines~13--23 in Algorithm~\ref{lamcts_vs}. After that, the variable score vector $\bm s$ will be re-calculated, which is assumed to be $[9, 8.5, 5, 11, 3, 3, 11, 11.2, 4.5]$, as shown in the right of Figure~\ref{fig:example_mcts_vs}. The average score of the five variables in $B$ is denoted as the orange horizontal line, calculated by $(9 + 8.5 + 11 + 11 + 11.2) / 5 = 10.14$. We can see that the variables $x_4, x_7$ and $x_8$ have score larger than the average value $10.14$, which are regarded as more important variables in $B$. We use the left child $D$ to represent these variables. The scores of variables $x_1$ and $x_2$ are smaller than the average, which are regarded as less important variables in $B$. We use the right child $E$ to represent them. Thus, the node $B$ has been partitioned into two children $D$ and $E$, denoted as the dashed circles in Figure~\ref{fig:example_mcts_vs}. The $v$ values and the number of visits of these two new leaf nodes are then calculated. The $v$ value of a node is the average score of the contained variables. Thus, $v_D$ is the average score of $x_4, x_7$ and $x_8$, i.e., $(11 + 11 + 11.2) / 3 = 11.067$, and $v_E$ is the average score of $x_1$ and $x_2$, i.e., $(9 + 8.5) / 2 = 8.75$. For the number of visits, obviously $n_D=n_E=0$. Finally, back-propagation is performed to update the $v$ value and the number of visits of the nodes along the path from the root $A$ to the node $B$. $v_A$ is the average score of all the variables, i.e., $(9 + 8.5 + 5 + 11 + 3 + 3 + 11 + 11.2 + 4.5) / 9 = 7.356$. $v_B$ is the average score of $x_1, x_2, x_4, x_7$ and $x_8$, i.e., $(9 + 8.5 + 11 + 11 + 11.2) / 5 = 10.14$. Their number of visits will be increased by 1. That is, $n_A=2$ and $n_B=1$. By far, one iteration of MCTS-VS has been finished, and this process will be performed iteratively.

\section{Details of Theoretical Analysis}
\label{sec:theory}

\subsection{Detailed Proof of Theorem~\ref{the:regret}}\label{sec:theory:theorem}

The proof is inspired by~\citep{gpucb}. To prove the upper bound on the cumulative regret $R_T$ in Theorem~\ref{the:regret}, we analyze the instantaneous regret $r_t = f(\bm{x}^*) - f(\bm{x}_{\mathbb{M}_t}^t)$, i.e., the gap between the function values of the optimal point $\bm{x}^*$ and the sampled point $\bm{x}_{\mathbb{M}_t}^t$ at iteration $t$. Note that $R_T=\sum^T_{t=1} r_t$. Let $\mu_{t-1}(\cdot)$ and $\sigma_{t-1}^2(\cdot)$ denote the posterior mean and variance after running $t-1$ iterations, respectively. Lemma~\ref{lem:1} gives a confidence bound \rbt{on} $f(\bm x_{\mathbb M_t}^t)$, leading to a lower bound on $f(\bm{x}_{\mathbb{M}_t}^t)$, i.e., $f(\bm{x}_{\mathbb{M}_t}^t) \ge \mu_{t-1}(\bm{x}_{\mathbb{M}_t}^t)-\beta_{t}^{1/2}\sigma_{t-1}(\bm{x}_{\mathbb{M}_t}^t)$. Note that $\mathbb M_t$ denotes the sampled variable index subset at iteration $t$, and $|\mathbb M_t| = d_t$.

\begin{lemma}
$\forall \delta \in (0,1), \forall t\geq1$, let $\beta_t = 2\log(\pi_{t}/\delta)$, where $\sum_{t\geq 1} \pi^{-1}_t=1, \pi_t >0$. Then, $\forall t\geq1$,
\[ |f(\bm x_{\mathbb M_t}^t)-\mu_{t-1}(\bm x_{\mathbb M_t}^t)| \leq \beta_{t}^{1/2}\sigma_{t-1}(\bm x_{\mathbb M_t}^t) \]holds with probability at least $1-\delta$, where $\bm x_{\mathbb M_t}^t$ is the point obtained at iteration $t$. 
\label{lem:1}
\end{lemma}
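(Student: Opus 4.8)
The plan is to reduce the two-sided inequality to a one-dimensional Gaussian tail estimate evaluated at a data-dependent point, and then to combine the per-iteration estimates by a union bound calibrated through the weights $\pi_t$. First I would set up the natural filtration $\mathcal F_{t-1}$ generated by the evaluated points and noisy observations $\{(\bm x^i,y^i)\}_{i=1}^{t-1}$, together with any exogenous randomness used to draw the variable index subset $\mathbb M_t$. The decisive observation is that $\bm x_{\mathbb M_t}^t$, being obtained by maximizing the acquisition function built from $\mu_{t-1}(\cdot)$ and $\sigma_{t-1}(\cdot)$, is $\mathcal F_{t-1}$-measurable: conditioned on $\mathcal F_{t-1}$ it is a fixed location of the domain rather than a random one.

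Next I would invoke the defining property of GP regression: under the GP prior with observation noise $\mathcal N(0,\eta^2)$, the posterior marginal of $f$ at any \emph{fixed} location $\bm z$ is $\mathcal N(\mu_{t-1}(\bm z),\sigma_{t-1}^2(\bm z))$. Applying this with $\bm z=\bm x_{\mathbb M_t}^t$ --- legitimate precisely because that point is $\mathcal F_{t-1}$-measurable --- gives that, conditionally on $\mathcal F_{t-1}$,
\[
\frac{f(\bm x_{\mathbb M_t}^t)-\mu_{t-1}(\bm x_{\mathbb M_t}^t)}{\sigma_{t-1}(\bm x_{\mathbb M_t}^t)}\sim\mathcal N(0,1).
\]
Then the standard sub-Gaussian tail bound $P(|Z|>c)\le e^{-c^2/2}$ for $Z\sim\mathcal N(0,1)$ and $c>0$, evaluated at $c=\beta_t^{1/2}$, yields the conditional estimate $P\bigl(|f(\bm x_{\mathbb M_t}^t)-\mu_{t-1}(\bm x_{\mathbb M_t}^t)|>\beta_t^{1/2}\sigma_{t-1}(\bm x_{\mathbb M_t}^t)\,\big|\,\mathcal F_{t-1}\bigr)\le e^{-\beta_t/2}$. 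Substituting $\beta_t=2\log(\pi_t/\delta)$ makes the right-hand side exactly $\delta/\pi_t$, and taking expectations removes the conditioning, giving the same unconditional bound for each fixed $t$.

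Finally I would take a union bound over all $t\ge1$: the probability that the confidence interval fails at some iteration is at most $\sum_{t\ge1}\delta/\pi_t=\delta\sum_{t\ge1}\pi_t^{-1}=\delta$, using the hypothesis $\sum_{t\ge1}\pi_t^{-1}=1$. Passing to the complement, with probability at least $1-\delta$ the stated two-sided bound holds simultaneously for every $t\ge1$, which is exactly the lemma.

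The step I expect to be the main obstacle --- or at least the one demanding the most care --- is the measurability/conditioning argument underlying the first two paragraphs: one must argue cleanly that evaluating the posterior Gaussian law at the acquisition maximizer is valid even though that maximizer depends on the very data that define the posterior. This is what allows $\beta_t$ to avoid any discretization or cardinality factor, in contrast with the finite-domain union bound in the original GP-UCB analysis~\citep{gpucb}, and it is where the variable-selection twist (the extra randomness in choosing $\mathbb M_t$, and the fact that the optimization is carried out only over the coordinates in $\mathbb M_t$) must be folded into the conditioning without disturbing the conditional Gaussianity of $f$ at the selected point.
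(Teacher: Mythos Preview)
Your proposal is correct and follows essentially the same route as the paper: standardize using the posterior Gaussianity of $f(\bm x_{\mathbb M_t}^t)$, apply a Gaussian tail bound to get the per-iteration failure probability $\delta/\pi_t$, and union-bound over $t\ge 1$. The paper carries out the tail estimate by an explicit integral manipulation rather than quoting the sub-Gaussian inequality, and it glosses over the measurability/conditioning issue that you (rightly) flag as the delicate step, but otherwise the arguments coincide.
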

\begin{proof}
At iteration $t$, $f(\bm x_{\mathbb M_t}^t) \sim \mathcal{N}(\mu_{t-1}(\bm x_{\mathbb M_t}^t),\sigma_{t-1}^2(\bm x_{\mathbb M_t}^t))$, and thus, $Y=\frac{f(\bm x_{\mathbb M_t}^t)-\mu_{t-1}(\bm x_{\mathbb M_t}^t)}{\sigma_{t-1}(\bm x_{\mathbb M_t}^t)} \sim \mathcal{N}(0,1)$. We have
\begin{flalign*}
& P\left(|f(\bm x_{\mathbb M_t}^t)-\mu_{t-1}(\bm x_{\mathbb M_t}^t)| > \beta_{t}^{1/2}\sigma_{t-1}(\bm x_{\mathbb M_t}^t)\right) \\
& = P\left(|Y| > \beta_{t}^{1/2}\right)
= 2\int_{\beta_{t}^{1/2}}^{\infty}(2\pi)^{(-1/2)}\exp\left(-\frac{y^2}{2}\right)dy \\
& = 2\exp\left(-\frac{\beta_{t}}{2}\right)\int_{\beta_{t}^{1/2}}^{\infty}(2\pi)^{(-1/2)} \exp\left(-\frac{(y-\beta_{t}^{1/2})^2}{2}\right) \exp\left(-\beta_{t}^{1/2}(y-\beta_{t}^{1/2})\right)dy \\
& \leq 2\exp\left(-\frac{\beta_{t}}{2}\right)P(Y>0) 
\leq \exp\left(-\frac{\beta_{t}}{2}\right) = \frac{\delta}{\pi_t}.
\end{flalign*}
Using the union bound for all $t \in \mathbb{N}$, we have 
\begin{flalign*}
& P\left(\forall t \geq 1: |f(\bm x_{\mathbb M_t}^t)-\mu_{t-1}(\bm x_{\mathbb M_t}^t)| \leq  \beta_{t}^{1/2}\sigma_{t-1}(\bm x_{\mathbb M_t}^t)\right) \geq 1- \sum\limits_{t\geq 1}\frac{\delta}{\pi_t}=1-\delta,
\end{flalign*}
where the equality holds by $\sum_{t\geq 1} \pi^{-1}_t=1$. Thus, the lemma holds.
\end{proof}

Next we are to analyze the upper bound on $f(\bm{x}^*)$, which can be represented as $(f(\bm{x}^*) - f(\bm{x}_{\mathbb M_t}^*)) + f(\bm x_{\mathbb M_t}^*)$, where $\bm{x}^*_{\mathbb{M}_t}$ denotes the point obtained by projecting $\bm{x}^*$ onto $\mathbb{M}_t$. The first term $f(\bm{x}^*) - f(\bm{x}_{\mathbb M_t}^*)$ can be upper bounded by Assumption~\ref{ass:1}. To upper bound the second term $f(\bm x_{\mathbb M_t}^*)$, we need to discretize the decision space $\mathcal X_{\mathbb M_t}$ at iteration $t$ into $\tilde{\mathcal X}_{\mathbb M_t}$, where $|\tilde{\mathcal X}_{\mathbb M_t}|=(\tau_t)^{d_t}$, i.e., we divide each variable of $\mathcal X_{\mathbb M_t}$ into $\tau_t$ parts equally. Let $\tilde{\bm{x}}_{\mathbb{M}_t}^*$ denote the point closest to $\bm{x}^*_{\mathbb{M}_t}$ in the discretized space $\tilde{\mathcal X}_{\mathbb M_t}$. Then, we can write $f(\bm x_{\mathbb M_t}^*)$ as $(f(\bm x_{\mathbb M_t}^*) -f(\tilde{\bm{x}}_{\mathbb M_t}^*))+f(\tilde{\bm{x}}_{\mathbb M_t}^*)$. The first term $f(\bm x_{\mathbb M_t}^*) -f(\tilde{\bm{x}}_{\mathbb M_t}^*)$ again can be upper bounded by Assumption~\ref{ass:1}. Lemma~\ref{lem:2} gives a confidence bound \rbt{on} $f(\tilde{\bm x}_{\mathbb M_t})$ for any discretized point $\tilde{\bm x}_{\mathbb M_t} \in \tilde{\mathcal X}_{\mathbb M_t}$, leading to an upper bound on $f(\tilde{\bm{x}}_{\mathbb M_t}^*)$, i.e., $f(\tilde{\bm{x}}_{\mathbb M_t}^*) \leq \mu_{t-1}(\tilde{\bm{x}}_{\mathbb M_t}^*)+\beta_{t}^{1/2}\sigma_{t-1}(\tilde{\bm{x}}_{\mathbb M_t}^*)$.

\begin{lemma}
$\forall \delta \in (0,1), \forall t \geq 1$, let $\beta_t = 2\log(|\tilde{\mathcal X}_{\mathbb M_t}|\pi_{t}/\delta)$, where $\sum_{t\geq 1} \pi^{-1}_t=1, \pi_t >0$. Then, $ \forall t \geq 1, \forall \tilde{\bm{x}}_{\mathbb M_t}\in \tilde{\mathcal X}_{\mathbb M_t}$,
\[ |f(\tilde{\bm{x}}_{\mathbb M_t})-\mu_{t-1}(\tilde{\bm{x}}_{\mathbb M_t})| \le \beta_{t}^{1/2}\sigma_{t-1}(\tilde{\bm{x}}_{\mathbb M_t})\]holds with probability at least $1-\delta$.
\label{lem:2}
\end{lemma}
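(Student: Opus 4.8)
\textbf{Proof proposal for Lemma~\ref{lem:2}.}

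The plan is to adapt the single-point confidence bound of Lemma~\ref{lem:1} to the whole finite set $\tilde{\mathcal X}_{\mathbb M_t}$ by a union bound, exactly mirroring the discretization step in the GP-UCB analysis~\citep{gpucb}. The starting observation is that, since $f$ is a GP sample path, for any fixed point $\tilde{\bm x}_{\mathbb M_t} \in \tilde{\mathcal X}_{\mathbb M_t}$ we have $f(\tilde{\bm x}_{\mathbb M_t}) \sim \mathcal N(\mu_{t-1}(\tilde{\bm x}_{\mathbb M_t}), \sigma_{t-1}^2(\tilde{\bm x}_{\mathbb M_t}))$ conditioned on the observations through iteration $t-1$; hence the standardized variable is a standard normal, and the same Gaussian tail estimate used in the proof of Lemma~\ref{lem:1} gives
\[
P\!\left(|f(\tilde{\bm x}_{\mathbb M_t})-\mu_{t-1}(\tilde{\bm x}_{\mathbb M_t})| > \beta_t^{1/2}\sigma_{t-1}(\tilde{\bm x}_{\mathbb M_t})\right) \le \exp(-\beta_t/2).
\]
With the choice $\beta_t = 2\log(|\tilde{\mathcal X}_{\mathbb M_t}|\pi_t/\delta)$ this tail probability is at most $\delta/(|\tilde{\mathcal X}_{\mathbb M_t}|\pi_t)$.

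Next I would take a union bound over the two sources of multiplicity. First, over the $|\tilde{\mathcal X}_{\mathbb M_t}|$ points of the discretized grid at a fixed iteration $t$: the failure probability at iteration $t$ is at most $|\tilde{\mathcal X}_{\mathbb M_t}|\cdot \delta/(|\tilde{\mathcal X}_{\mathbb M_t}|\pi_t) = \delta/\pi_t$. Second, over all iterations $t \ge 1$: summing $\delta/\pi_t$ and using the hypothesis $\sum_{t\ge 1}\pi_t^{-1}=1$ gives a total failure probability at most $\delta$. Therefore, with probability at least $1-\delta$, the bound $|f(\tilde{\bm x}_{\mathbb M_t})-\mu_{t-1}(\tilde{\bm x}_{\mathbb M_t})| \le \beta_t^{1/2}\sigma_{t-1}(\tilde{\bm x}_{\mathbb M_t})$ holds simultaneously for every $t \ge 1$ and every $\tilde{\bm x}_{\mathbb M_t} \in \tilde{\mathcal X}_{\mathbb M_t}$, which is the claim.

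The only subtlety — and the one step worth being careful about — is that both the grid $\tilde{\mathcal X}_{\mathbb M_t}$ and the selected index set $\mathbb M_t$ (hence its cardinality appearing in $\beta_t$) are determined by the algorithm's history, so one must ensure the union bound is taken over a set that does not depend on the realized randomness in a circular way; this is handled, as in~\citep{gpucb}, by noting that conditioning on $\mathcal F_{t-1}$ fixes $\mathbb M_t$ and $\tilde{\mathcal X}_{\mathbb M_t}$ before drawing $f$'s values there, and that the grid is finite with a pre-specified size $(\tau_t)^{d_t}$, so the per-iteration union bound over $|\tilde{\mathcal X}_{\mathbb M_t}|$ points is legitimate. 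Everything else is the routine Gaussian tail computation already carried out in Lemma~\ref{lem:1}, so no new ideas are needed beyond the bookkeeping of the two nested union bounds.
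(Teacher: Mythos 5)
Your proposal matches the paper's proof: both apply the Gaussian tail bound from Lemma~\ref{lem:1} pointwise, yielding failure probability $\delta/(|\tilde{\mathcal X}_{\mathbb M_t}|\pi_t)$ per point, then union-bound over the grid and over all $t$ using $\sum_{t\ge 1}\pi_t^{-1}=1$. Your added remark on conditioning on the history to fix $\mathbb M_t$ and the grid is a reasonable extra precaution that the paper leaves implicit, but the argument is otherwise identical.
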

\begin{proof}
Similar to Lemma~\ref{lem:1}, we can derive
\begin{flalign*}
P\left( |f(\tilde{\bm{x}}_{\mathbb M_t})-\mu_{t-1}(\tilde{\bm{x}}_{\mathbb M_t})| > \beta_{t}^{1/2}\sigma_{t-1}(\tilde{\bm{x}}_{\mathbb M_t}) \right) \leq \exp\left(-\frac{\beta_{t}}{2}\right)= \frac{\delta}{|\tilde{\mathcal X}_{\mathbb M_t}|\pi_{t}}.
\end{flalign*}
Using the union bound for all $t \in \mathbb{N}$ and $\tilde{\bm{x}}_{\mathbb M_t}\in \tilde{\mathcal X}_{\mathbb M_t}$, we have
\begin{flalign*}
&P\left(\forall t \geq 1,\forall \tilde{\bm{x}}_{\mathbb M_t}\in \tilde{\mathcal X}_{\mathbb M_t}: |f(\tilde{\bm{x}}_{\mathbb M_t})-\mu_{t-1}(\tilde{\bm{x}}_{\mathbb M_t})| \leq \beta_{t}^{1/2}\sigma_{t-1}(\tilde{\bm{x}}_{\mathbb M_t})\right) \\
&\quad \geq 1- \sum\limits_{t\geq 1}\sum\limits_{\tilde{\bm{x}}_{\mathbb M_t}\in \tilde{\mathcal X}_{\mathbb M_t}}\frac{\delta}{|\tilde{\mathcal X}_{\mathbb M_t}|\pi_{t}}=1-\delta.
\end{flalign*}
Thus, the lemma holds.
\end{proof}

Now, we can upper bound $f(\bm x_{\mathbb M_t}^*)$ based on Assumption~\ref{ass:1} and Lemma~\ref{lem:2}, as shown in Lemma~\ref{lem:3}. Note that $\bm x_{\mathbb M_t}^*$ denotes the point obtained by projecting $\bm{x}^*$ onto $\mathbb{M}_t$, and $\tilde{\bm x}_{\mathbb M_t}^*$ denotes the point closest to $\bm x_{\mathbb M_t}^*$ in $\tilde{\mathcal X}_{\mathbb M_t}$. 

\begin{lemma}
$\forall \delta \in (0,1), t \geq 1$, let $\beta_t = 2\log(2\pi_{t}/\delta)+2d_t\log\left(d_tt^2br\sqrt{\log\frac{2Da}{\delta}}\right)$, where $\sum_{t\geq 1} \pi^{-1}_t=1, \pi_t >0$. Set $\tau_{t}=d_tt^{2}br\sqrt{\log\frac{2Da}{\delta}}$ and $L=b\sqrt{\log\frac{2Da}{\delta}}$. Then, $\forall t \geq 1$,
\begin{equation*}
\begin{split}
   |f(\bm x_{\mathbb M_t}^*)-\mu_{t-1}(\tilde{\bm x}_{\mathbb M_t}^*)|
 \le \beta_{t}^{1/2}\sigma_{t-1}(\tilde{\bm x}_{\mathbb M_t}^*)+ \frac{\alpha_{\max}}{t^2} + \sum_{i\in [D] \setminus \mathbb M_t} \alpha_{i}^* Lr
\end{split}
\end{equation*}
holds with probability at least $1-\delta$. 
\label{lem:3}
\end{lemma}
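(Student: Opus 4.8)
The plan is to decompose $|f(\bm x_{\mathbb M_t}^*)-\mu_{t-1}(\tilde{\bm x}_{\mathbb M_t}^*)|$ through the chain of intermediate points already set up in the text: the true optimum $\bm x^*$, its projection $\bm x_{\mathbb M_t}^*$ onto the selected coordinates, and the nearest discretization point $\tilde{\bm x}_{\mathbb M_t}^*$. By the triangle inequality,
\begin{equation*}
|f(\bm x_{\mathbb M_t}^*)-\mu_{t-1}(\tilde{\bm x}_{\mathbb M_t}^*)| \le |f(\bm x_{\mathbb M_t}^*)-f(\tilde{\bm x}_{\mathbb M_t}^*)| + |f(\tilde{\bm x}_{\mathbb M_t}^*)-\mu_{t-1}(\tilde{\bm x}_{\mathbb M_t}^*)|.
\end{equation*}
The second term is bounded by $\beta_t^{1/2}\sigma_{t-1}(\tilde{\bm x}_{\mathbb M_t}^*)$ via Lemma~\ref{lem:2}, provided the $\beta_t$ here dominates the $\beta_t=2\log(|\tilde{\mathcal X}_{\mathbb M_t}|\pi_t/\delta)$ needed there; since $|\tilde{\mathcal X}_{\mathbb M_t}|=\tau_t^{d_t}$ and $\tau_t=d_tt^2br\sqrt{\log(2Da/\delta)}$, we have $2\log(|\tilde{\mathcal X}_{\mathbb M_t}|\pi_t/\delta)=2\log(\pi_t/\delta)+2d_t\log\tau_t$, which is exactly (half of the $\delta$-budget version of) the stated $\beta_t$ — so I would run Lemma~\ref{lem:2} with confidence $1-\delta/2$, contributing the $2\log(2\pi_t/\delta)$ piece.

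For the first term, I would split the discretization error from the projection error. Since $\tilde{\bm x}_{\mathbb M_t}^*$ and $\bm x_{\mathbb M_t}^*$ differ only in the $d_t$ coordinates of $\mathbb M_t$, and each such coordinate of $\mathcal X$ lies in an interval of width at most $r$ divided into $\tau_t$ equal parts, the coordinatewise gap is at most $r/\tau_t$. Applying Assumption~\ref{ass:1} (with the high-probability event that $\sup_{\bm x}|\partial f/\partial x_i|<\alpha_i^* L$ simultaneously for all $i\in[D]$, which holds with probability $\ge 1-Da\,e^{-(L/b)^2}=1-\delta/2$ for the chosen $L=b\sqrt{\log(2Da/\delta)}$), the mean value theorem gives
\begin{equation*}
|f(\bm x_{\mathbb M_t}^*)-f(\tilde{\bm x}_{\mathbb M_t}^*)| \le \sum_{i\in\mathbb M_t}\alpha_i^* L\,\frac{r}{\tau_t} \le d_t\,\alpha_{\max}L\,\frac{r}{\tau_t} = \frac{\alpha_{\max}}{t^2},
\end{equation*}
where the last equality is precisely the reason $\tau_t$ was set to $d_tt^2br\sqrt{\log(2Da/\delta)}=d_tt^2Lr$. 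I should be careful here whether the $f(\bm x_{\mathbb M_t}^*)-f(\tilde{\bm x}_{\mathbb M_t}^*)$ term (valid-coordinate perturbation) or the projection term is what carries the $\sum_{i\in[D]\setminus\mathbb M_t}\alpha_i^* Lr$ — the latter comes from moving $\bm x^*$ to $\bm x_{\mathbb M_t}^*$ along the $D-d_t$ unselected coordinates (each moving by at most $r$), again via the mean value theorem and Assumption~\ref{ass:1}. One subtlety: the lemma statement writes $f(\bm x_{\mathbb M_t}^*)$ rather than $f(\bm x^*)$ on the left, so I expect this projection term to actually be folded into the parent Theorem's proof rather than Lemma~\ref{lem:3} itself; I would re-examine exactly which endpoints appear and attribute the $\sum_{i\notin\mathbb M_t}\alpha_i^* Lr$ term to whichever comparison $f(\bm x^*)$ vs.\ $f(\bm x_{\mathbb M_t}^*)$ it belongs to, making sure no coordinate is double-counted.

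Finally I would combine: a union bound over the event of Lemma~\ref{lem:2} (probability $1-\delta/2$) and the Lipschitz event of Assumption~\ref{ass:1} (probability $1-\delta/2$) yields the claimed bound with probability at least $1-\delta$. The main obstacle I anticipate is bookkeeping rather than conceptual: correctly splitting the $\delta$ budget so the logarithms aggregate into the stated $\beta_t=2\log(2\pi_t/\delta)+2d_t\log(d_tt^2br\sqrt{\log(2Da/\delta)})$, and getting the three-way triangle-inequality decomposition ($\bm x^*\!\to\!\bm x_{\mathbb M_t}^*\!\to\!\tilde{\bm x}_{\mathbb M_t}^*\!\to$ posterior mean) aligned exactly with the $\alpha_{\max}/t^2$ versus $\sum_{i\notin\mathbb M_t}\alpha_i^* Lr$ terms, ensuring the coordinate sets $\mathbb M_t$ and $[D]\setminus\mathbb M_t$ partition cleanly and the factor $d_t\alpha_{\max}$ telescopes against $\tau_t$ to produce exactly $t^{-2}$ (which is summable over $t$, feeding the $2\alpha_{\max}$ constant in the Theorem).
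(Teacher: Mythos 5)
Your proposal follows the paper's proof essentially step for step: the same triangle-inequality split at $f(\tilde{\bm x}_{\mathbb M_t}^*)$, Lemma~\ref{lem:2} invoked with $\beta_t=2\log(2|\tilde{\mathcal X}_{\mathbb M_t}|\pi_t/\delta)$ and $|\tilde{\mathcal X}_{\mathbb M_t}|=\tau_t^{d_t}$ for the posterior term, Assumption~\ref{ass:1} with $L=b\sqrt{\log(2Da/\delta)}$ for the Lipschitz term, and a $\delta/2+\delta/2$ union bound. The one point you flagged resolves the opposite way from your guess: the paper keeps the $\sum_{i\in[D]\setminus\mathbb M_t}\alpha_i^* Lr$ term inside Lemma~\ref{lem:3} itself, because its Lipschitz estimate for $|f(\bm x_{\mathbb M_t}^*)-f(\tilde{\bm x}_{\mathbb M_t}^*)|$ is the generic two-point bound $\alpha_{\max}L\Vert\bm x_{\mathbb M_t}^*-\tilde{\bm x}_{\mathbb M_t}^*\Vert_1+\sum_{i\in[D]\setminus\mathbb M_t}\alpha_i^*Lr$: the unselected coordinates of the two underlying $D$-dimensional points are not assumed to coincide, so their contribution (up to $r$ per coordinate) cannot be dropped, and only the selected coordinates telescope against $\tau_t$ to give $\alpha_{\max}/t^2$. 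A second, independent copy of $\sum_{i\in[D]\setminus\mathbb M_t}\alpha_i^*Lr$ then arises in the theorem's proof from $f(\bm x^*)-f(\bm x_{\mathbb M_t}^*)$, which is where the factor $2$ in Eq.~(\ref{eq-main-theo}) originates. With that attribution fixed, your argument is the paper's.
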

\begin{proof}
First, we have
\begin{align}
|f(\bm x_{\mathbb M_t}^*)-\mu_{t-1}(\tilde{\bm x}_{\mathbb M_t}^*)|&=|f(\bm x_{\mathbb M_t}^*)-f(\tilde{\bm x}_{\mathbb M_t}^*)+f(\tilde{\bm x}_{\mathbb M_t}^*)-\mu_{t-1}(\tilde{\bm x}_{\mathbb M_t}^*)|\nonumber\\
& \leq |f(\bm x_{\mathbb M_t}^*)-f(\tilde{\bm x}_{\mathbb M_t}^*)|+|f(\tilde{\bm x}_{\mathbb M_t}^*)-\mu_{t-1}(\tilde{\bm x}_{\mathbb M_t}^*)|. \label{eq-mid-3}
\end{align}
By Assumption~\ref{ass:1} with $L=b\sqrt{\log\frac{2Da}{\delta}}$, we have $\forall \bm x, \bm y\in \mathcal X$, with probability at least $1 - D\cdot ae^{-(L/b)^2}=1-\delta/2$, 
\begin{align}
|f(\bm x)-f(\bm y)|
& \leq \sum_{i=1}^{D} \alpha_{i}^*L| x_i- y_i| \nonumber\\
& \leq \sum_{i\in \mathbb M_t}\alpha_{i}^*L|x_i-y_i|+\sum_{i\in [D] \setminus \mathbb M_t}\alpha_{i}^*Lr    \nonumber\\
& \leq \alpha_{\max} L\Vert\bm x_{\mathbb M_t}- \bm y_{\mathbb M_t}\Vert_1 + \sum_{i\in [D]\setminus\mathbb M_t}\alpha_{i}^*Lr,\label{eq-mid-4}
\end{align}
where the second inequality holds by $\mathcal X \subset [0, r]^D$, and the last inequality holds by $\alpha_{\max} = \max\nolimits_{i\in [D]}\alpha_{i}^*$. Thus, it holds with probability at least $1-\delta/2$ that
\begin{align}\label{eq-mid-1}
|f(\bm x_{\mathbb M_t}^*)-f(\tilde{\bm x}_{\mathbb M_t}^*)|\leq \alpha_{\max} L\Vert\bm x_{\mathbb M_t}^*- \tilde{\bm x}_{\mathbb M_t}^*\Vert_1 + \sum_{i\in [D]\setminus\mathbb M_t}\alpha_{i}^*Lr.   
\end{align}
By Lemma~\ref{lem:2} with $\beta_{t} = 2\log(2(\tau_{t})^{d_t}\pi_{t}/\delta) = 2\log(2|\tilde{\mathcal X}_{\mathbb M_t}|\pi_{t}/\delta) $, we have, with probability at least $1-\delta/2$,
\begin{align}\label{eq-mid-2}
|f(\tilde{\bm x}_{\mathbb M_t}^*)-\mu_{t-1}(\tilde{\bm x}_{\mathbb M_t}^*)|\leq  \beta_{t}^{1/2}\sigma_{t-1}(\tilde{\bm x}_{\mathbb M_t}^*). 
\end{align}

Applying Eqs.~(\refeq{eq-mid-1}) and~(\refeq{eq-mid-2}) to Eq.~(\refeq{eq-mid-3}), it holds with probability at least $1-\delta$ that
\begin{flalign*}
|f(\bm x_{\mathbb M_t}^*)-\mu_{t-1}(\tilde{\bm x}_{\mathbb M_t}^*)| 
& \le  \alpha_{\max} L\Vert\bm x_{\mathbb M_t}^*- \tilde{\bm x}_{\mathbb M_t}^*\Vert_1 + \sum_{i\in [D]\setminus\mathbb M_t}\alpha_{i}^*Lr+\beta_{t}^{1/2}\sigma_{t-1}(\tilde{\bm x}_{\mathbb M_t}^*)\\
& \le \alpha_{\max}L\frac{d_tr}{\tau_t}+\sum\limits_{i\in [D]\setminus \mathbb M_t} \alpha_{i}^*Lr +  \beta_{t}^{1/2}\sigma_{t-1}(\tilde{\bm x}_{\mathbb M_t}^*)\\
& \le \frac{\alpha_{\max}}{t^2}+ \sum\limits_{i\in [D]\setminus \mathbb M_t} \alpha_{i}^* Lr + \beta_{t}^{1/2}\sigma_{t-1}(\tilde{\bm x}_{\mathbb M_t}^*),
\end{flalign*}
where the second inequality holds by $|\mathbb M_t|=d_t$ and the way of discretization (i.e., each variable is discretized into $\tau_t$ parts equally), and the last inequality holds by the definition of $\tau_t$ and $L$. Thus, the lemma holds.
\end{proof}

Lemma~\ref{lem:3} implies an upper bound on $f(\bm x_{\mathbb M_t}^*)$, i.e., $f(\bm x_{\mathbb M_t}^*)\leq \mu_{t-1}(\tilde{\bm x}_{\mathbb M_t}^*)+\beta_{t}^{1/2}\sigma_{t-1}(\tilde{\bm x}_{\mathbb M_t}^*)+ \alpha_{\max}/t^2 + \sum_{i\in [D] \setminus \mathbb M_t} \alpha_{i}^* Lr$. Combining this upper bound on $f(\bm x_{\mathbb M_t}^*)$ with $f(\bm{x}^*) - f(\bm{x}_{\mathbb M_t}^*)$ (which can be upper bounded by Assumption~\ref{ass:1}), we can derive an upper bound on $f(\bm{x}^*)$. Together with the lower bound on $f(\bm x_{\mathbb M_t}^t)$ given by Lemma~\ref{lem:1}, we can derive an upper bound on the instantaneous regret $r_t$. Thus, we are now ready to prove the upper bound on the cumulative regret $R_T$ in Theorem~\ref{the:regret}, which is re-stated in Theorem~\ref{appen-theo} for clearness.

\begin{theorem}\label{appen-theo}
$\forall \delta \in (0,1)$, let $\beta_t = 2\log(4\pi_{t}/\delta)+2d_t\log(d_tt^2br\sqrt{\log(4Da/\delta)})$ and $L=b\sqrt{\log(4Da/\delta)}$, where $\{\pi_t\}_{t\ge 1}$ satisfies $\sum_{t\geq 1} \pi^{-1}_t=1$ and $\pi_t >0$. Let $\beta_{T}^* = \max\nolimits_{1\le i\le T} \beta_{t}$.  At iteration $T$, the cumulative regret
\begin{align*}
R_T \le \sqrt{C_1T\beta_{T}^*\gamma_T}+2\alpha_{\max}+ 2\sum_{t=1}^{T}\sum_{i\in [D]\setminus \mathbb M_t}\alpha^*_{i}Lr
\end{align*}
holds with probability at least $1-\delta$, where $C_1>0$ is a constant, $\gamma_{T} = \max\nolimits_{|\mathcal D| = T} I(\bm{y}_{\mathcal D},\bm{f}_{\mathcal D})$, $I(\cdot,\cdot)$ denotes the information gain, and $\bm{y}_{\mathcal D}$ and $\bm{f}_{\mathcal D}$ are the noisy and true observations of a set $\mathcal{D}$ of points, respectively.
\end{theorem}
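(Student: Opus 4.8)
The plan is to bound the instantaneous regret $r_t = f(\bm x^*) - f(\bm x_{\mathbb M_t}^t)$ at each iteration and then sum over $t = 1, \ldots, T$. First I would decompose
\[
r_t = \bigl(f(\bm x^*) - f(\bm x_{\mathbb M_t}^*)\bigr) + \bigl(f(\bm x_{\mathbb M_t}^*) - f(\bm x_{\mathbb M_t}^t)\bigr),
\]
where $\bm x_{\mathbb M_t}^*$ is the projection of $\bm x^*$ onto the coordinates $\mathbb M_t$. The first term is handled directly by Assumption~\ref{ass:1}: since $\bm x^*$ and the point $[\bm x_{\mathbb M_t}^*, \bm x_{[D]\setminus \mathbb M_t}^t]$ differ only in the un-selected coordinates, the gap is at most $\sum_{i \in [D]\setminus \mathbb M_t} \alpha_i^* L r$ with high probability (using the same $L = b\sqrt{\log(4Da/\delta)}$ and a union bound over the $D$ coordinates, costing $\delta/4$ of the probability budget). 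This is where the pessimistic ``worst fill-in'' assumption enters — it lets us treat the returned value as lying within this Lipschitz ball of $f(\bm x_{\mathbb M_t}^*)$.

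Next I would control the second term using the three lemmas already established. Lemma~\ref{lem:3} gives, with probability $\ge 1 - \delta/2$ (split as $\delta/4 + \delta/4$ inside its proof), the upper bound
\[
f(\bm x_{\mathbb M_t}^*) \le \mu_{t-1}(\tilde{\bm x}_{\mathbb M_t}^*) + \beta_t^{1/2}\sigma_{t-1}(\tilde{\bm x}_{\mathbb M_t}^*) + \frac{\alpha_{\max}}{t^2} + \sum_{i\in[D]\setminus \mathbb M_t}\alpha_i^* Lr,
\]
with $\tau_t$ and $\beta_t$ chosen so that $|\tilde{\mathcal X}_{\mathbb M_t}| = \tau_t^{d_t}$ and $\beta_t$ absorbs the $\log$ of that cardinality — this yields exactly the $\beta_t = 2\log(4\pi_t/\delta) + 2 d_t\log(d_t t^2 b r \sqrt{\log(4Da/\delta)})$ in the statement. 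Lemma~\ref{lem:1} gives, with probability $\ge 1-\delta/4$, the matching lower bound $f(\bm x_{\mathbb M_t}^t) \ge \mu_{t-1}(\bm x_{\mathbb M_t}^t) - \beta_t^{1/2}\sigma_{t-1}(\bm x_{\mathbb M_t}^t)$. The crucial step is then the GP-UCB selection rule: because $\bm x_{\mathbb M_t}^t$ maximizes $\mu_{t-1} + \beta_t^{1/2}\sigma_{t-1}$ over the (discretized) feasible set and $\tilde{\bm x}_{\mathbb M_t}^*$ is feasible, $\mu_{t-1}(\tilde{\bm x}_{\mathbb M_t}^*) + \beta_t^{1/2}\sigma_{t-1}(\tilde{\bm x}_{\mathbb M_t}^*) \le \mu_{t-1}(\bm x_{\mathbb M_t}^t) + \beta_t^{1/2}\sigma_{t-1}(\bm x_{\mathbb M_t}^t)$. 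Chaining these inequalities collapses the $\mu_{t-1}$ terms and leaves
\[
r_t \le 2\beta_t^{1/2}\sigma_{t-1}(\bm x_{\mathbb M_t}^t) + \frac{\alpha_{\max}}{t^2} + 2\sum_{i\in[D]\setminus \mathbb M_t}\alpha_i^* Lr,
\]
holding simultaneously for all $t$ with probability $\ge 1-\delta$ after combining the three events.

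Finally I would sum over $t$. The term $\sum_t \alpha_{\max}/t^2 \le \alpha_{\max}\pi^2/6 \le 2\alpha_{\max}$ gives the additive constant. The variable-selection penalty $2\sum_t \sum_{i \in [D]\setminus \mathbb M_t}\alpha_i^* Lr$ passes through unchanged. For the dominant term, $\sum_{t=1}^T 2\beta_t^{1/2}\sigma_{t-1}(\bm x_{\mathbb M_t}^t) \le 2\sqrt{\beta_T^*}\sum_{t=1}^T \sigma_{t-1}(\bm x_{\mathbb M_t}^t)$ using monotonicity $\beta_t \le \beta_T^*$; then Cauchy--Schwarz gives $\sum_t \sigma_{t-1} \le \sqrt{T\sum_t \sigma_{t-1}^2}$, and the standard information-gain argument (as in~\citep{gpucb}, bounding $\sum_t \sigma_{t-1}^2$ by $C\,\gamma_T$ for a constant $C$ depending only on the noise level) yields $\sqrt{C_1 T \beta_T^* \gamma_T}$. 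I expect the main obstacle to be bookkeeping rather than conceptual: carefully partitioning the failure probability $\delta$ among Lemmas~\ref{lem:1}--\ref{lem:3} and the Lipschitz bound on $f(\bm x^*) - f(\bm x_{\mathbb M_t}^*)$ so that the $\beta_t$ in the final statement exactly matches, and verifying that the information-gain bound still applies when the GP is fit on the varying-dimension data $\{(\bm x_{\mathbb M_t}^i, y^i)\}$ — here one notes that at iteration $t$ the argument only needs the one-step predictive variance, so the standard $\gamma_T$ bound over $T$ points remains valid.
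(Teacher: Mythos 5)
Your proposal is correct and follows essentially the same route as the paper's proof: the same decomposition of $r_t$ into the projection gap plus the GP-UCB term, the same $\delta/4 + \delta/2 + \delta/4$ probability budget across the Lipschitz bound, Lemma~\ref{lem:3}, and Lemma~\ref{lem:1}, the same collapse of the posterior means via the acquisition-maximization step, and the same summation using $\sum_t 1/t^2 \le 2$ and Lemma~5.4 of~\citep{gpucb}. No gaps.
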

\begin{proof}
For all $t \ge 1$, we have
\begin{align}
r_t = f(\bm x^*)-f(\bm x_{\mathbb M_t}^t)  = f(\bm x^*) - f(\bm x_{\mathbb M_t}^*) + f(\bm x_{\mathbb M_t}^*) - f(\bm x_{\mathbb M_t}^t).\label{eq-mid-6} 
\end{align}
By Eq.~(\refeq{eq-mid-4}), we have 
\begin{align}
f(\bm x^*) - f(\bm x_{\mathbb M_t}^*)\leq  \alpha_{\max} L\Vert\bm x_{\mathbb M_t}^*- \bm x_{\mathbb M_t}^*\Vert_1 + \sum_{i\in [D]\setminus\mathbb M_t}\alpha_{i}^*Lr=\sum_{i\in [D]\setminus\mathbb M_t}\alpha_{i}^*Lr.\label{eq-mid-5}
\end{align}
Note that $L=b\sqrt{\log(4Da/\delta)}$ here, and thus Eq.~(\refeq{eq-mid-5}) holds with probability at least $1-\delta/4$. By Lemma~\ref{lem:3} with $\beta_t = 2\log(4\pi_{t}/\delta)+2d_t\log(d_tt^2br\sqrt{\log(4Da/\delta)})$ and $L=b\sqrt{\log(4Da/\delta)}$, setting $\tau_{t}=d_tt^{2}br\sqrt{\log(4Da/\delta)}$ leads to that
\begin{align}
f(\bm x_{\mathbb M_t}^*)\leq \mu_{t-1}(\tilde{\bm x}_{\mathbb M_t}^*)+\beta_{t}^{1/2}\sigma_{t-1}(\tilde{\bm x}_{\mathbb M_t}^*)+ \frac{\alpha_{\max}}{t^2} + \sum_{i\in [D] \setminus \mathbb M_t} \alpha_{i}^* Lr\label{eq-mid-7}
\end{align}
holds with probability at least $1-\delta/2$. By Lemma~\ref{lem:1} with $\beta_t = 2\log(4\pi_{t}/\delta)+2d_t\log(d_tt^2br\sqrt{\log(4Da/\delta)})\geq 2\log(4\pi_{t}/\delta)$, it holds with probability at least $1-\delta/4$ that \begin{align}f(\bm{x}_{\mathbb{M}_t}^t) \ge \mu_{t-1}(\bm{x}_{\mathbb{M}_t}^t)-\beta_{t}^{1/2}\sigma_{t-1}(\bm{x}_{\mathbb{M}_t}^t).\label{eq-mid-8}\end{align}

Applying Eqs.~(\refeq{eq-mid-5}),~(\refeq{eq-mid-7}) and~(\refeq{eq-mid-8}) to Eq.~(\refeq{eq-mid-6}), it holds with probability at least $1-\delta$ that $\forall t \ge 1$,
\begin{flalign*}
r_t &\le  \sum_{i\in [D]\setminus \mathbb M_t}\alpha_i^*Lr + \mu_{t-1}(\tilde{\bm x}_{\mathbb M_t}^*) + \beta_{t}^{1/2}\sigma_{t-1}(\tilde{\bm x}_{\mathbb M_t}^*)+ \frac{\alpha_{\max}}{t^2} + \sum_{i\in [D]\setminus \mathbb M_t} \alpha_{i}^* Lr  \\
& \quad - \mu_{t-1}(\bm{x}_{\mathbb{M}_t}^t)+\beta_{t}^{1/2}\sigma_{t-1}(\bm{x}_{\mathbb{M}_t}^t) \\ 
& \le \mu_{t-1}(\bm x_{\mathbb M_t}^t) + \beta_{t}^{1/2}\sigma_{t-1}(\bm x_{\mathbb M_t}^t) + \frac{\alpha_{\max}}{t^2} + 2\sum_{i\in [D]\setminus \mathbb M_t} \alpha_{i}^* Lr  - \mu_{t-1}(\bm{x}_{\mathbb{M}_t}^t)+\beta_{t}^{1/2}\sigma_{t-1}(\bm{x}_{\mathbb{M}_t}^t) \\
&=  2\beta_{t}^{1/2}\sigma_{t-1}(\bm x_{\mathbb M_t}^t) +\frac{\alpha_{\max}}{t^2} + 2\sum_{i\in [D]\setminus \mathbb M_t} \alpha_{i}^* Lr,
\end{flalign*}
where the second inequality holds because $\bm x_{\mathbb M_t}^t$ is generated by maximizing GP-UCB, and thus $\mu_{t-1}(\tilde{\bm x}_{\mathbb M_t}^*) + \beta_{t}^{1/2}\sigma_{t-1}(\tilde{\bm x}_{\mathbb M_t}^*)\le \mu_{t-1}(\bm x_{\mathbb M_t}^t) + \beta_{t}^{1/2}\sigma_{t-1}(\bm x_{\mathbb M_t}^t)$.

By summing up $r_t$ from $t=1$ to $T$, we have with probability at least $1-\delta$ that, $\forall T \ge 1$, 
\begin{align}
R_T = \sum_{t=1}^{T} r_t  &\le \sum_{t=1}^{T}2\beta_{t}^{1/2}\sigma_{t-1}(\bm x_{\mathbb M_t}^t) + \sum_{t=1}^{T}\frac{\alpha_{\max}}{t^2} + \sum_{t=1}^{T}2\sum_{i\in [D]\setminus \mathbb M_t} \alpha_{i}^* Lr \nonumber\\
&\leq \sum_{t=1}^{T}2\beta_{t}^{1/2}\sigma_{t-1}(\bm x_{\mathbb M_t}^t) + 2\alpha_{\max} + 2\sum_{t=1}^{T}\sum_{i\in [D]\setminus \mathbb M_t} \alpha_{i}^* Lr,\label{eq-mid-9}
\end{align}
where the second inequality holds by $\sum_{t=1}^{T}1/t^2\leq \pi^2/6\leq 2$. Furthermore, let $C_1=8/\log(1+\eta^{-2})$, and Lemma~5.4 in~\cite{gpucb} has shown that $\sum_{t=1}^{T}2\beta_{t}^{1/2}\sigma_{t-1}(\bm x_{\mathbb M_t}^t) \le \sqrt{C_{1}T\beta_{T}^*\sum_{t=1}^{T}\log(1+\eta^{-2}\sigma_{t-1}^2(\bm x_{\mathbb M_t}^t))/2} \leq \sqrt{C_{1}T\beta_{T}^*\gamma_{T}}$. Finally, by applying this inequality to Eq.~(\refeq{eq-mid-9}), the theorem holds.
\end{proof}

We also summarize the main idea of the above proof. The proof is inspired by~\citep{gpucb}, i.e., to derive the upper bound on the gap $r_t = f(\bm{x}^*) - f(\bm{x}_{\mathbb{M}_t}^t)$ between the function values of the optimal point $\bm{x}^*$ and the sampled point $\bm{x}_{\mathbb{M}_t}^t$ at iteration $t$. Let $\bm{x}^*_{\mathbb{M}_t}$ denote the point obtained by projecting $\bm{x}^*$ onto $\mathbb{M}_t$, and $\tilde{\bm x}_{\mathbb M_t}^*$ denote its closest discretized point. By utilizing the posterior mean $\mu_{t-1}(\cdot)$ and variance $\sigma_{t-1}^2(\cdot)$ of $f(\bm{x}_{\mathbb{M}_t}^t)$ and $f(\tilde{\bm x}_{\mathbb M_t}^*)$, we can have $f(\bm{x}_{\mathbb{M}_t}^t) \ge \mu_{t-1}(\bm{x}_{\mathbb{M}_t}^t)-\beta_{t}^{1/2}\sigma_{t-1}(\bm{x}_{\mathbb{M}_t}^t)$ and $f(\bm{x}^*) = (f(\bm{x}^*) - f(\bm{x}_{\mathbb M_t}^*)) + (f(\bm x_{\mathbb M_t}^*) -f(\tilde{\bm x}_{\mathbb M_t}^*))+f(\tilde{\bm x}_{\mathbb M_t}^*) \le \sum_{i\in [D]\setminus \mathbb M_t}\alpha_i^*Lr+\alpha_{\max}/t^2+\sum_{i\in [D]\setminus \mathbb M_t}\alpha^*_{i}Lr + \mu_{t-1}(\tilde{\bm x}_{\mathbb M_t}^*)+\beta_{t}^{1/2}\sigma_{t-1}(\tilde{\bm x}_{\mathbb M_t}^*)$, where the terms $\sum_{i\in [D]\setminus \mathbb M_t}\alpha_i^*Lr$ and $\alpha_{\max}/t^2$ are led by variable selection and discretization, respectively. As $\bm x_{\mathbb M_t}^t$ is generated by maximizing GP-UCB, we have $\mu_{t-1}(\tilde{\bm x}_{\mathbb M_t}^*) + \beta_{t}^{1/2}\sigma_{t-1}(\tilde{\bm x}_{\mathbb M_t}^*)\le \mu_{t-1}(\bm x_{\mathbb M_t}^t) + \beta_{t}^{1/2}\sigma_{t-1}(\bm x_{\mathbb M_t}^t)$. Thus, $r_t \le 2\beta_{t}^{1/2}\sigma_{t-1}(\bm{x}_{\mathbb{M}_t}^t)+\alpha_{\max}/t^2+2\sum_{i\in [D]\setminus \mathbb M_t}\alpha^*_{i}Lr$. Finally, summing up $r_t$ from $t=1$ to $T$ and using Lemma~5.4 in~\cite{gpucb} can lead to Theorem~\ref{the:regret}.

The main difference from the proof of GP-UCB~\citep{gpucb} is that variable selection brings some uncertainty introduced by the unselected variables. Based on the Lipschitz condition in Assumption~\ref{ass:1}, the uncertainty by the $i$-th unselected variable can be upper bounded by $\alpha_i^*Lr$, leading to the additional regret $2\sum_{t=1}^{T}\sum_{i\in [D]\setminus \mathbb M_t}\alpha^*_{i}Lr$ in Eq.~(\refeq{eq-main-theo}).

\subsection{Details of Computational Complexity Analysis}
\label{sec:complexity}

The computational complexity of one iteration of BO depends on three critical components: fitting a GP surrogate model, maximizing an acquisition function and evaluating a sampled point. Assume that the kernel function is squared exponential kernel. At iteration $t$, the number of selected variables is $d_t$. When fitting a GP model, we calculate the marginal likelihood~\cite{gp} and gradient as follows:
\begin{align*}
\log P(\bm y_t\mid \mathbf{X}_t, \bm \theta) = & 
-\frac{1}{2}\bm y_t^{\mathsf T} (\mathbf{K}_{t}+\eta^2 \mathbf{I})^{-1}\bm y_t -\frac{1}{2}\log |\mathbf{K}_t+\eta^2 \mathbf{I} |-\frac{t}{2}\log (2\pi) \\
\nabla_{\bm \theta}\log P(\bm y_t\mid \mathbf{X}_t, \bm \theta) 
= & -\frac{1}{2}\bm y^{\mathsf T}_t (\mathbf{K}_{t}+\eta^2 \mathbf{I})^{-1}\nabla_{\bm \theta}(\mathbf{K}_{t}+\eta^2 \mathbf{I})(\mathbf{K}_{t}+\eta^2 \mathbf{I})^{-1}\bm y_t \\
& -\frac{1}{2}\mathrm{tr} \left( (\mathbf{K}_{t}+\eta^2 \mathbf{I})^{-1} \nabla_{\bm \theta}(\mathbf{K}_{t}+\eta^2 \mathbf{I}) \right)
\end{align*}
where $\bm y_{t}=[y^1, \ldots, y^{t}]^{\mathsf T}$, $\mathbf{X}_t = [\bm x^1, \ldots, \bm x^{t}]$, $\bm \theta$ are the kernel parameters, $\mathbf{K}_t$ is the covariance matrix, \rbt{$|\cdot|$ and $\mathrm{tr}(\cdot)$ denote the determinant and trace of a matrix, respectively}. Then, we can use the gradient-based methods to optimize the likelihood function. Therefore, the computational complexity of calculating the kernel parameters is $\mathcal{O}(t^3+t^2 d_t)$. Note that $\bm \theta$ has been ignored, because its dimension is much smaller than $d_t$ and $t$. When calculating the mean $\mu_t (\bm x)$ and variance $\sigma^2_t(\bm x)$, the computational complexity is $\mathcal{O}(t^3+t^2 d_t)$, due to the calculation of the kernel matrix and its inverse. Thus, the total computational complexity of fitting the  GP model is $\mathcal{O}(t^3+t^2 d_t)$. Maximizing an acquisition function is related to the optimization algorithm. If we use the Quasi-Newton method to optimize GP-UCB, the computational complexity is $\mathcal O(m(t^2 + td_t + d^2_t))$~\citep{NoceWrig06}, where $m$ denotes the Quasi-Newton's running rounds. We note that in BO setting, $t$ will not grow very large. The running rounds $m$, however, will grow with $d_t$. Thus, the complexity of optimizing the acquisition function can be much larger than the square of $d_t$. The cost of evaluating a sampled point is fixed. Thus, by selecting only a subset of variables, instead of all variables, to optimize, the computational complexity can decrease significantly.


\section{Method Implementation and Experimental Setting}
\label{appendix:hp}

\setcounter{footnote}{0}

We use the authors' reference implementations for TuRBO\footnote{\url{https://github.com/uber-research/TuRBO}}, LA-MCTS\footnote{\url{https://github.com/facebookresearch/LaMCTS}} and SAASBO.\footnote{\url{https://github.com/martinjankowiak/saasbo}} For HeSBO and ALEBO, their implementations in Adaptive Experimentation Platform (Ax\footnote{\url{https://github.com/facebook/Ax}}) are used. We use the pycma library for CMA-ES.\footnote{\url{https://github.com/CMA-ES/pycma}} Their hyper-parameters are summarized as follows.

\begin{itemize}
    \item \textbf{Vanilla BO.} We use the GP model in Scikit-learn\footnote{\url{https://github.com/scikit-learn/scikit-learn}} and the qExpectedImprovement acquisition function~\cite{qei}. \songl{For the optimization of acquisition function, we randomly generate numerous points and select some ones with the maximal expected improvements, which is similar to the implementation in TuRBO~\cite{turbo}, LA-MCTS~\cite{lamcts}, and HeSBO~\cite{hesbo}.}
    \item \textbf{MCTS-VS.} For the ``fill-in'' strategy, we use the best-$k$ strategy with $k=20$. The hyper-parameter $C_p$ for calculating UCB in Eq.~(\refeq{eq-MCTS-UCB}) varies on different problems, as shown in Table~\ref{tab:cp}. We set all the other parameters to be \emph{same} on different problems, where the batch size $N_v$ of variable index set is $2$, the sample batch size $N_s=3$, the threshold $N_{bad}$ for re-initializing a tree is $5$, and the threshold $N_{split}$ for splitting a node is $3$. When using TuRBO as the optimizer, we limit the maximal number of evaluations in TuRBO to $50$. 
    
\begin{table*}[htbp]
\caption{Setting of the hyper-parameter $C_p$ for calculating UCB on different problems.}
\vskip 0.15in
\begin{center}
\begin{small}
\begin{sc}
\begin{tabular}{lcccc}
\toprule
  & Levy & Hartmann & NAS-Bench & MuJoCo \\
$C_p$ & 10 & 0.1 & 0.1 & 50  \\
\bottomrule
\end{tabular}
\end{sc}
\end{small}
\end{center}
\vskip -0.1in
\label{tab:cp}
\end{table*}

    \item \textbf{Dropout.} We set the parameter $d$ to the number of valid dimensions for synthetic functions, and use the same ``fill-in'' strategy as MCTS-VS.
    \item \textbf{TuRBO.} We use the default parameter setting in the authors' reference implementation.
    \item \textbf{LA-MCTS-TuRBO.} We use the same TuRBO setting as MCTS-VS. The parameter $C_p$ is recommended between $1\%$ and $10\%$ of the optimum in LA-MCTS~\cite{lamcts}. Because all our selected values of $C_p$ for MCTS-VS have belonged to the recommended range for LA-MCTS, we use them directly. The RBF kernel is used for SVM classification.
    \item \songl{\textbf{SAASBO.} We use the default parameter setting in the authors' reference implementation, but modify the acquisition function optimization to the same as other methods for fair comparison. }
    \item \textbf{HeSBO and ALEBO.} We set the parameter $d$ to the number of valid dimensions for synthetic functions. For real-world problems, we do not know the number of valid dimensions, and thus we just set a reasonable value, i.e., $d=10$ for NAS-Bench, $d=10$ for Hopper, and $d=20$ for Walker2d. 
    \item \textbf{CMA-ES.} We only adjust the step-size parameter $\sigma$ for different problems, because the default setting $\sigma=0.01$ leads to extremely poor performance. We set $\sigma=0.8$ for Hartmann problems, $\sigma=10$ for Levy problems, $\sigma=0.1$ for NAS-Bench, and $\sigma=0.01$ for MuJoCo tasks. We set the population size to $20$ and maintain all the other parameters to default.
    \item \textbf{VAE-BO} uses VAE for embedding. That is, VAE-BO uses the encoder to embed the original high-dimensional space into a low-dimensional subspace, then optimizes via BO in the subspace and uses the decoder to project the new sampled point back for evaluation. We set the learning rate to $0.01$ and the interval of updating VAE to $30$. 
\end{itemize}

The experiments of comparing wall clock time are conducted on Intel(R) Core(TM) i7-10700 CPU @ 2.90GHz and use single thread.

\section{Sensitivity Analysis of Hyper-parameters of MCTS-VS} 
\label{appendix:ablation}

We provide further studies to examine the influence of the hyper-parameters of MCTS-VS, including the employed optimization algorithm for optimizing the selected variables in each iteration, the ``fill-in'' strategy, the hyper-parameter $k$ used in the best-$k$ strategy, the hyper-parameter $C_p$ for calculating UCB in Eq.~(\refeq{eq-MCTS-UCB}), the number $2\times N_v\times N_s$ of sampled data in each iteration, the threshold $N_{bad}$ for re-initializing a tree, and the threshold $N_{split}$ for splitting a tree node.

\textbf{The optimization algorithm} is employed by MCTS-VS to optimize the selected variables in each iteration. We compare three different optimization algorithms, i.e., random search (RS), BO and TuRBO. First, we conduct experiments similar to ``Effectiveness of Variable Selection'' in Section~\ref{sec:synthetic}, to show the effectiveness of MCTS-VS even when equipped with RS. Figure~\ref{fig:ablation:exp1} shows that MCTS-VS-RS is better than Dropout-RS and RS, revealing the advantage of MCTS-VS.


\begin{figure}[htbp]
    \centering
    \subfigure{\includegraphics[width=0.5\textwidth]{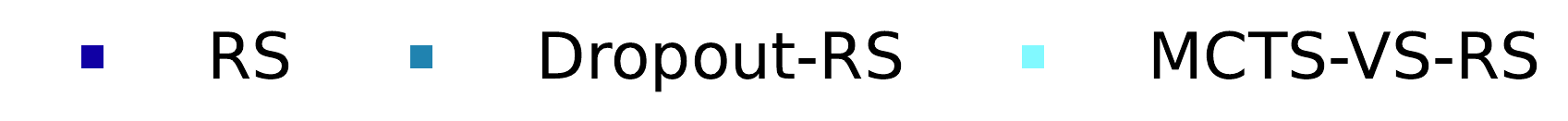}} \\
    \centering
    \subfigure{\includegraphics[width=0.4\textwidth]{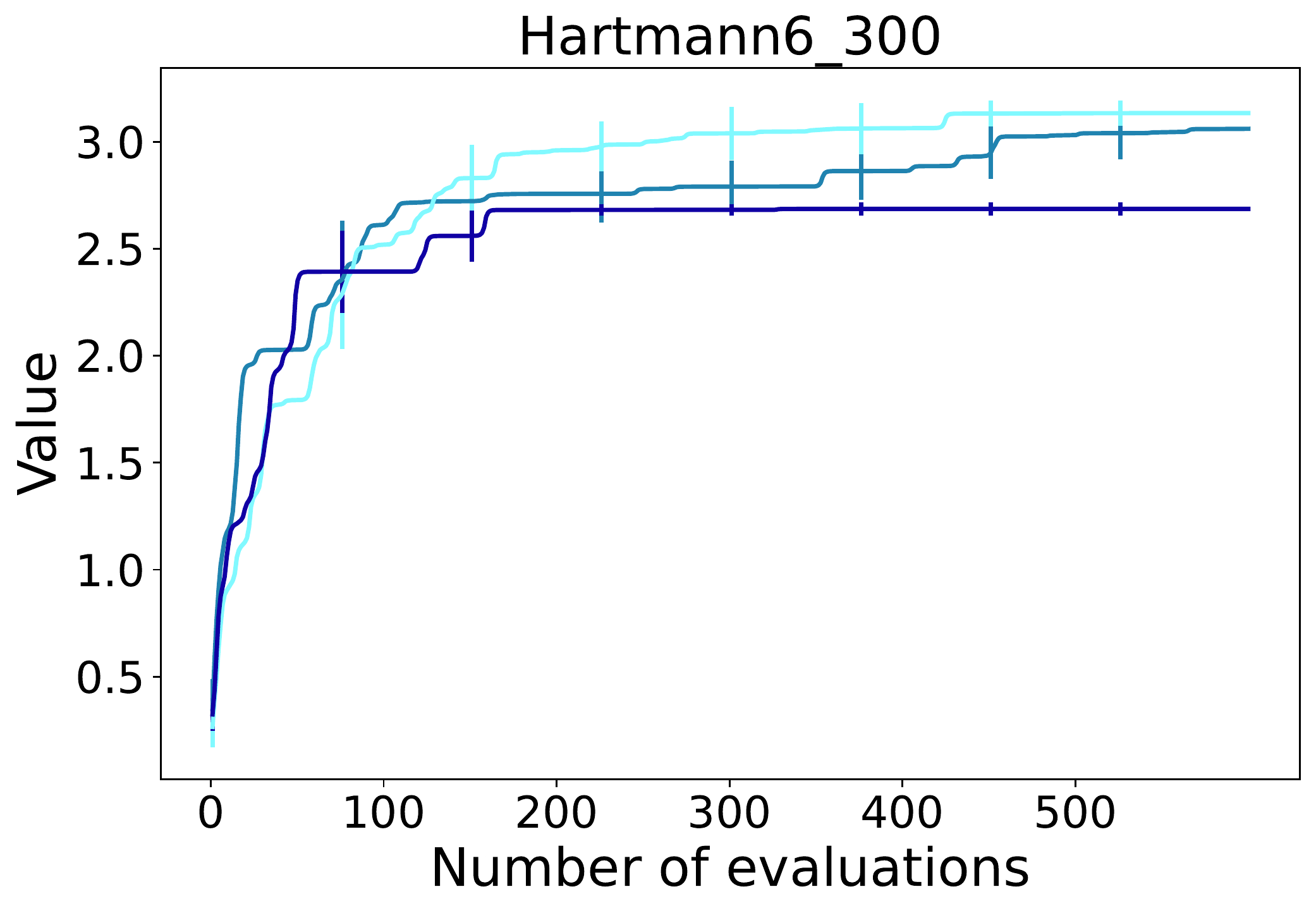}}
    \subfigure{\includegraphics[width=0.4\textwidth]{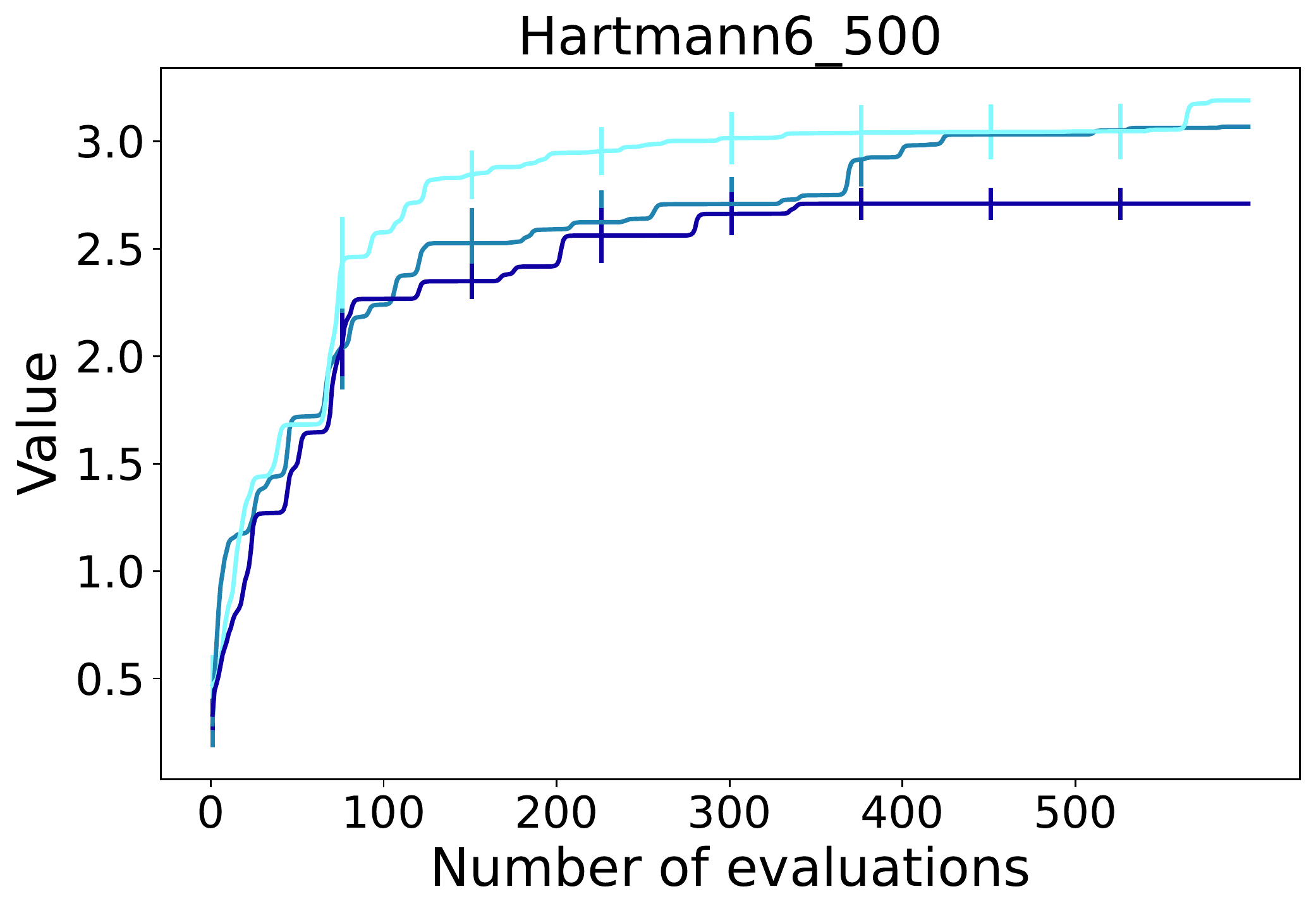}}
    \caption{Effectiveness of MCTS-VS when equipped with RS.}
    \label{fig:ablation:exp1}
\end{figure}

Next we compare the performance of MCTS-VS equipped with RS, BO and TuRBO, by experiments on the Hartmann functions with increasing ratio of valid variables. Hartmann$6$\_$500$ has $6$ valid variables. Hartmann$6$\_$5$\_$500$ is generated by mixing $5$ Hartmann$6$ functions as Hartmann$6(\bm x_{1:6}) +$ Hartmann$6(\bm x_{7:12})+\cdots+$ Hartmann$6(\bm x_{25:30})$, and appending 470 unrelated dimensions, where $\bm x_{i: j}$ denotes the $i$-th to $j$-th variables. Hartmann$6$\_$10$\_$500$ is generated alike. Thus, Hartmann$6$\_$5$\_$500$ and Hartmann$6$\_$10$\_$500$ have $30$ and $60$ valid variables, respectively. The results in Figure~\ref{fig:ablation:solver} show that as the ratio of valid variables increases, MCTS-VS-TuRBO gradually surpasses MCTS-VS-RS and MCTS-VS-BO, while MCTS-VS-RS becomes worse and worse. This is expected. If the ratio of valid variables is high, MCTS-VS is more likely to select the valid variables, so it is worth to use the expensive optimization algorithm, e.g., TuRBO, to optimize the selected variables. If the ratio is low, unrelated variables are more likely to be selected most of the time, so using a cheap optimization algorithm would be better. These observations also give us some guidance on selecting optimization algorithms in practice.

\begin{figure}[htbp]
    \centering
    \subfigure{\includegraphics[width=0.6\textwidth]{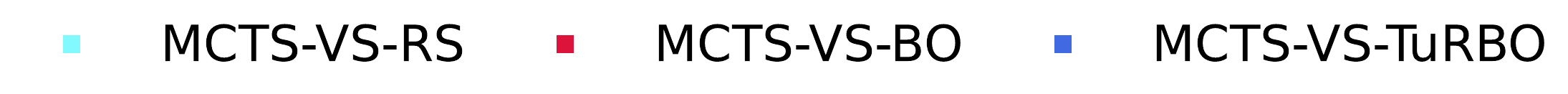}}\\
    \centering
    \subfigure{\includegraphics[width=0.33\textwidth]{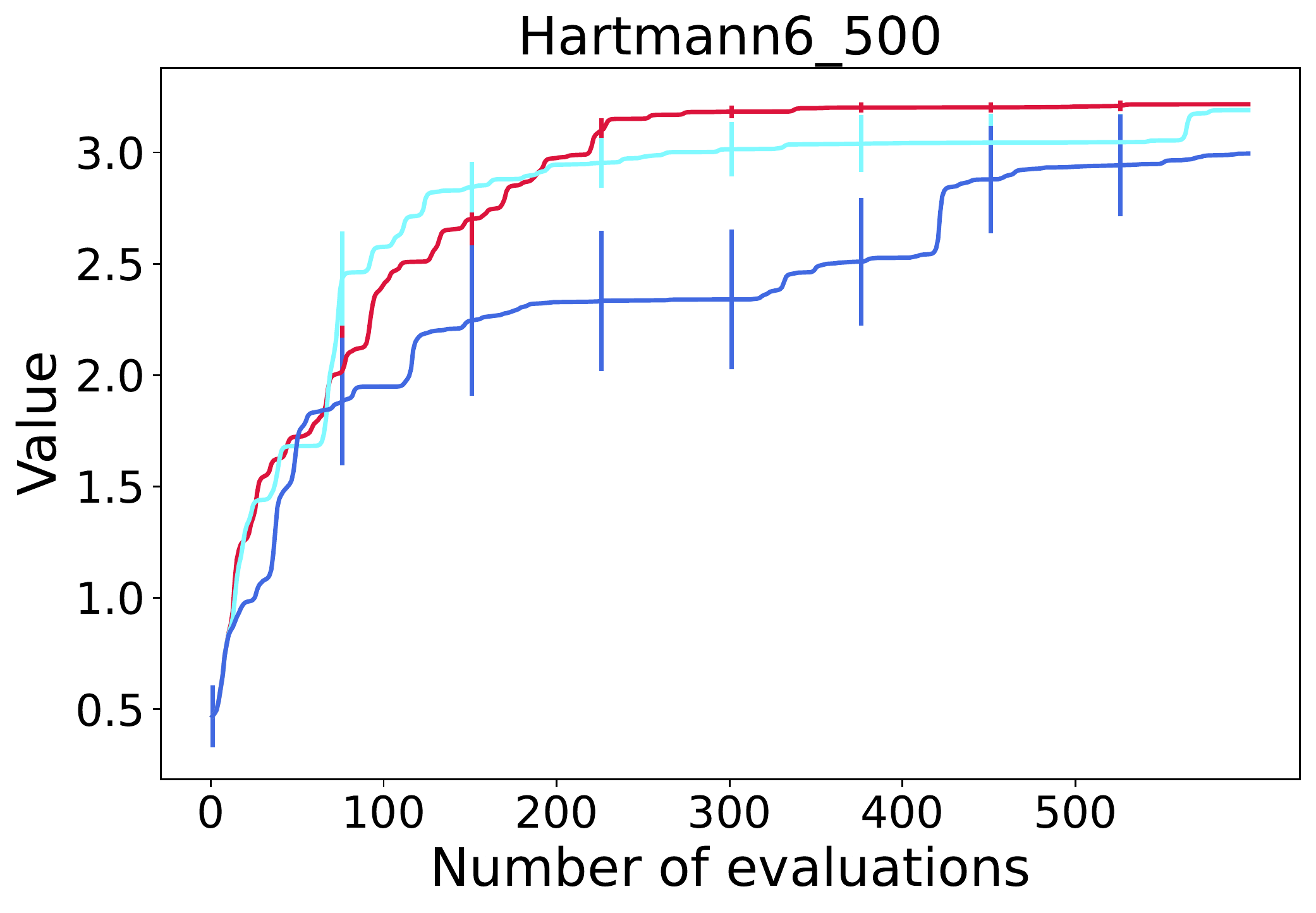}}
    \subfigure{\includegraphics[width=0.32\textwidth]{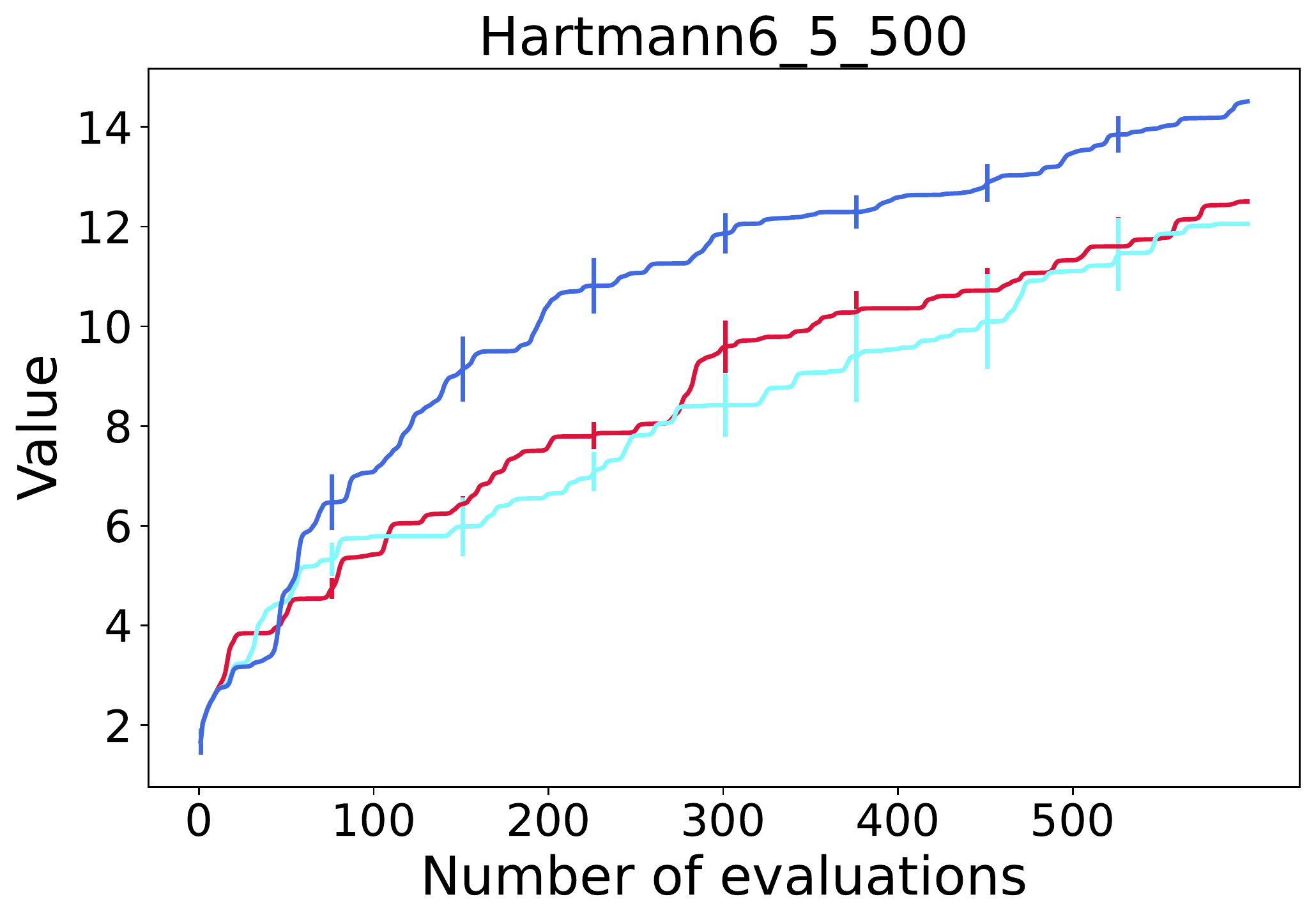}}
    \subfigure{\includegraphics[width=0.32\textwidth]{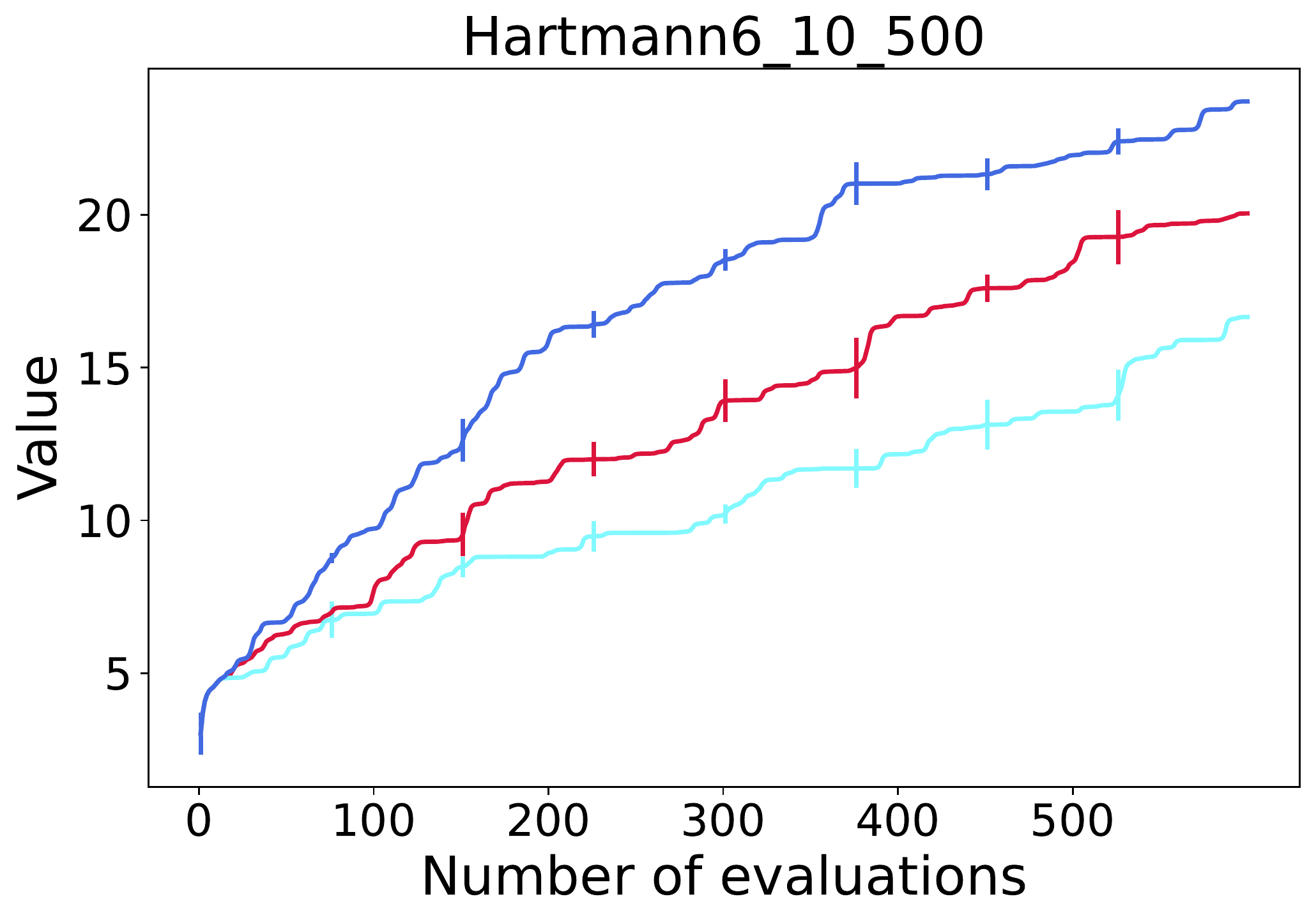}}
    \caption{Sensitivity analysis of the optimization algorithm.}
    \label{fig:ablation:solver}
\end{figure}

\textbf{``Fill-in'' strategy} is a basic component of variable selection methods, which influences the quality of the value of unselected variables. We compare the employed best-$k$ strategy ($k=20$) with \rbt{the average best-$k$ strategy} and the random strategy. \rbt{The average best-$k$ strategy uses the average of the best $k$ data points for the unselected variables,} and the random strategy samples the value of an unselected variable from its domain randomly. As shown in Figure~\ref{fig:fill in}, the random strategy leads to the poor performance of MCTS-VS-BO, which may be because it does not utilize the historical information and leads to over-exploration. The best-$k$ strategy utilizes the historical points that have high objective values to fill in the unselected variables, thus behaving much better. \rbt{The performance of the average strategy is between the best-$k$ and random strategies.} We recommend using the best-$k$ strategy in practice.

\textbf{The hyper-parameter $k$ used in the best-$k$ strategy} controls the degree of exploitation for the unselected variables. As shown in Figure~\ref{fig:paramk}, a smaller $k$ encourages exploitation, which results in better performance in the early stage, but easily leads to premature convergence. A larger $k$ encourages exploration and behaves worse in the early stage, but may converge to a better value. We recommend using a larger $k$ if allowing enough evaluations.

\begin{figure}[htbp]
    \centering
    \subfigure[\rbt{``Fill-in'' strategy}]{\includegraphics[width=0.4\textwidth]{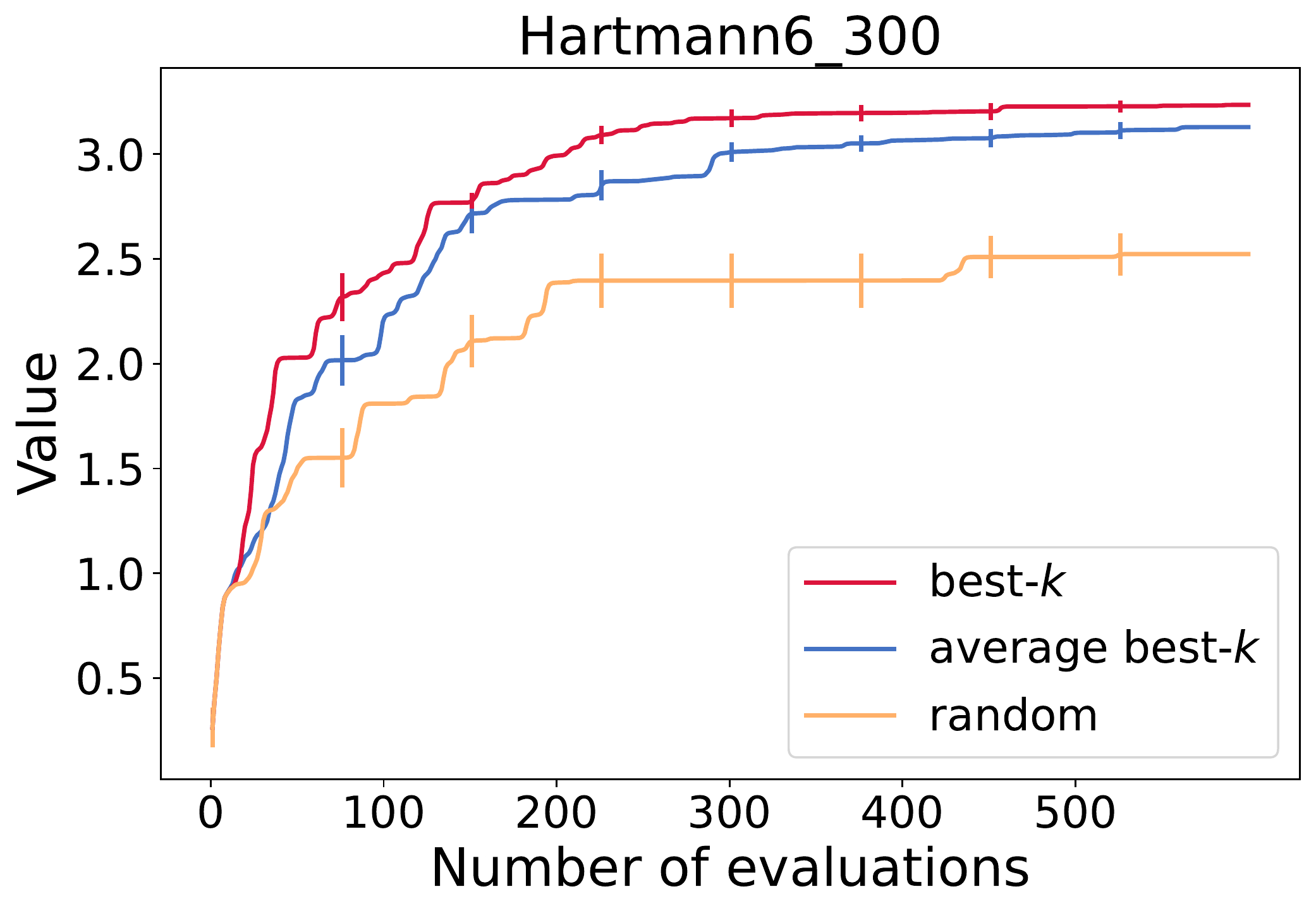}\label{fig:fill in}}
    \subfigure[Hyper-parameter $k$ of the best-$k$ strategy]{\includegraphics[width=0.4\textwidth]{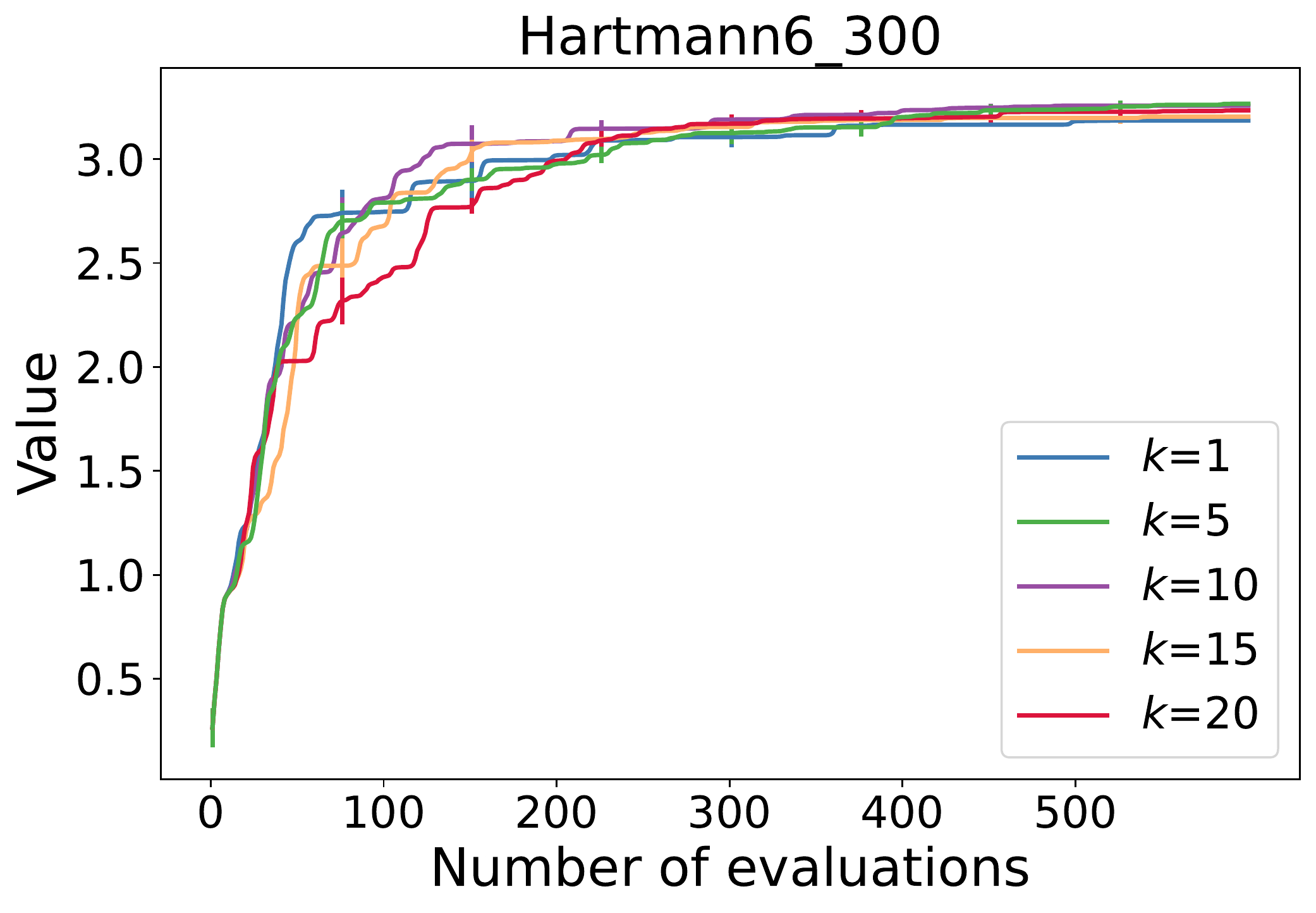}\label{fig:paramk}}
    \caption{Sensitivity analysis of the ``fill-in'' strategy and the hyper-parameter $k$ of the best-$k$ strategy, using MCTS-VS-BO on Hartmann$6$\_$300$.}
    \label{fig:ablation:fill in and k}
\end{figure}

\textbf{The hyper-parameter $C_p$ for calculating UCB in Eq.~(\refeq{eq-MCTS-UCB})} balances the exploration and exploitation of MCTS. As shown in Figure~\ref{fig:ablationcp}, a too small $C_p$ leads to relatively worse performance, highlighting the importance of exploration. A too large $C_p$ may also lead to over-exploration. But overall MCTS-VS is not very sensitive to $C_p$. We recommend setting $C_p$ between $1\%$ and $10\%$ of the optimum (i.e., $\max f(\bm x)$\rbt{)}, which is consistent with that for LA-MCTS~\citep{lamcts}.

\begin{figure}[htbp]
    \centering
    \subfigure{\includegraphics[width=0.24\textwidth]{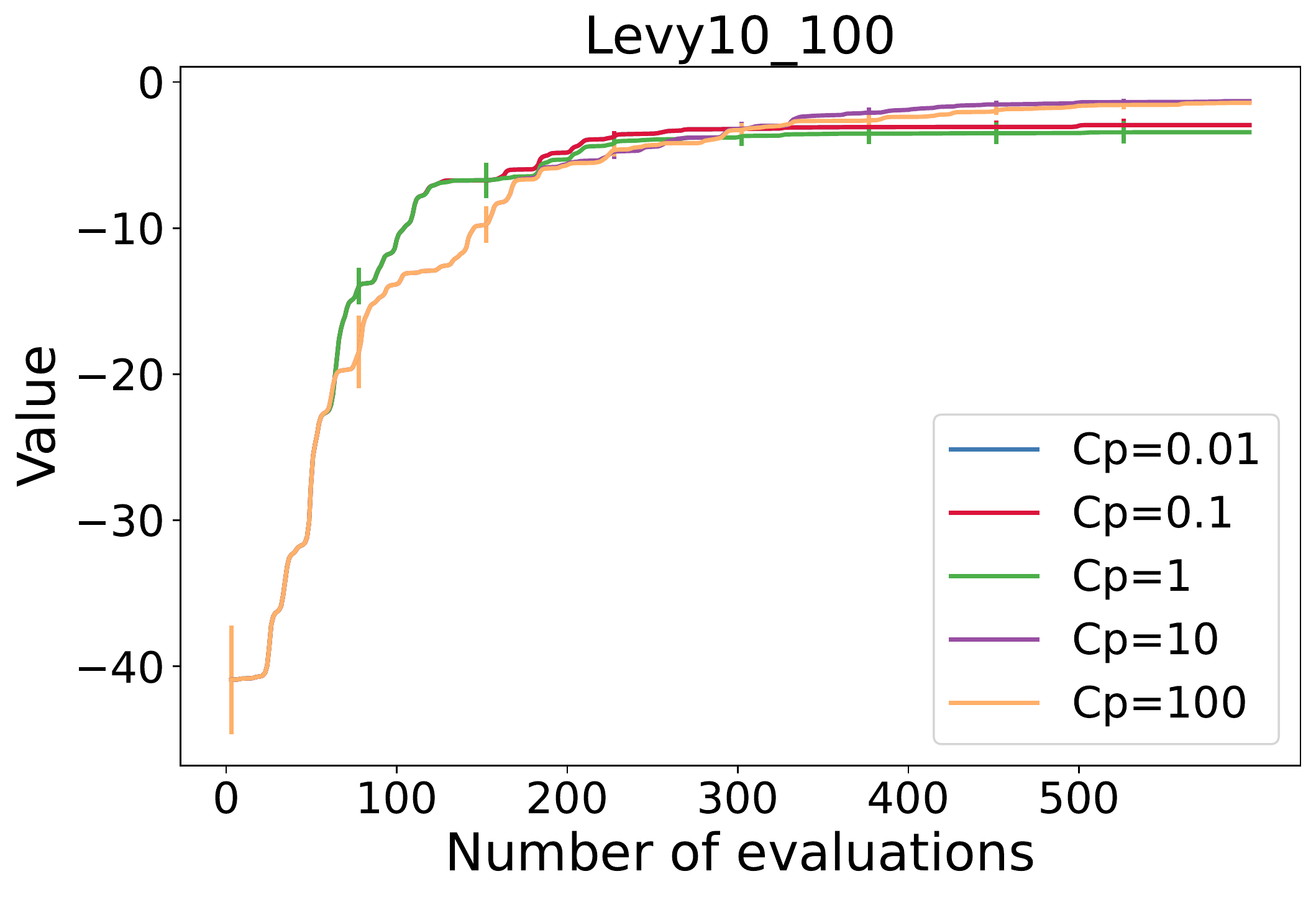}}
    \subfigure{\includegraphics[width=0.24\textwidth]{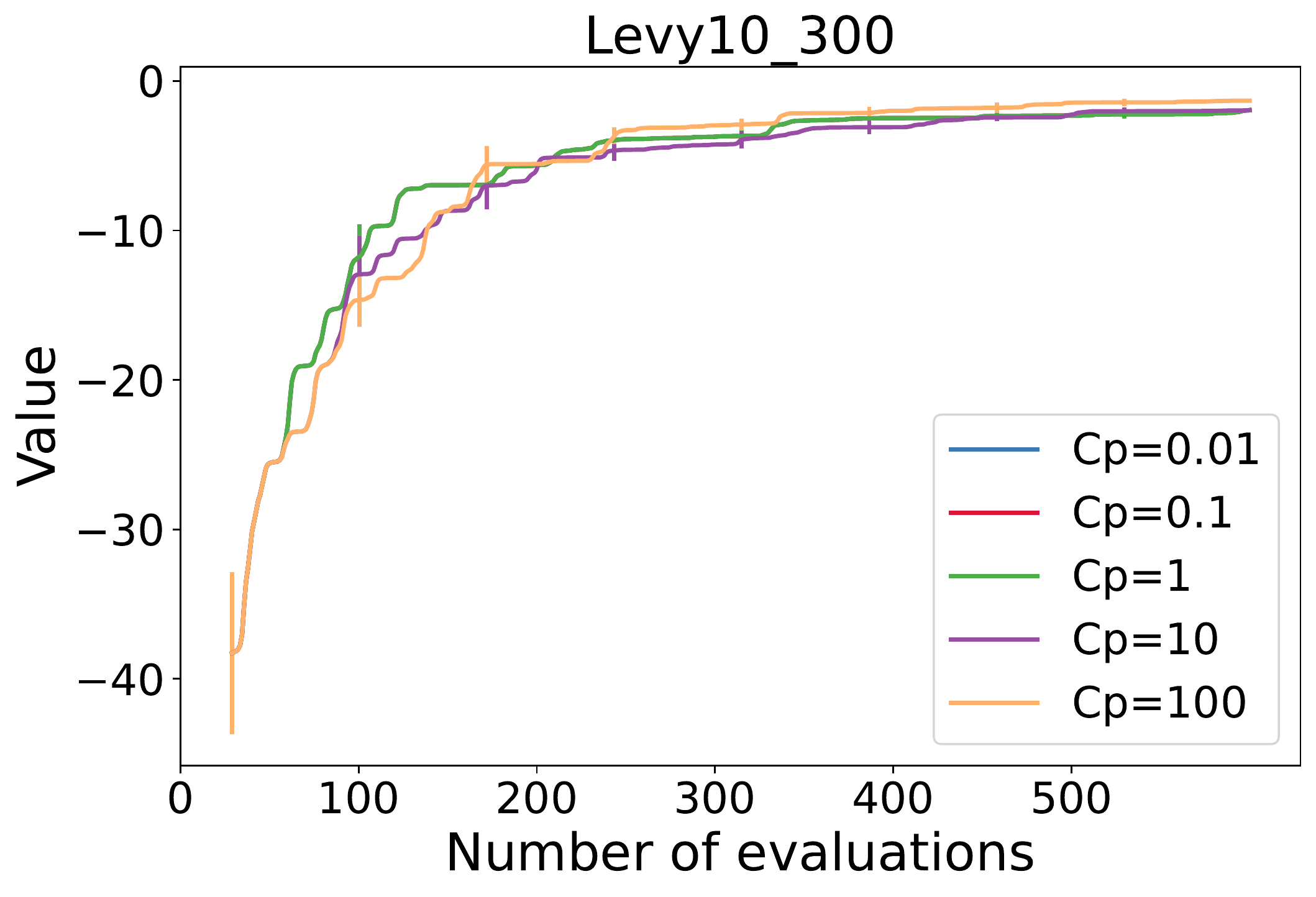}}
    \subfigure{\includegraphics[width=0.24\textwidth]{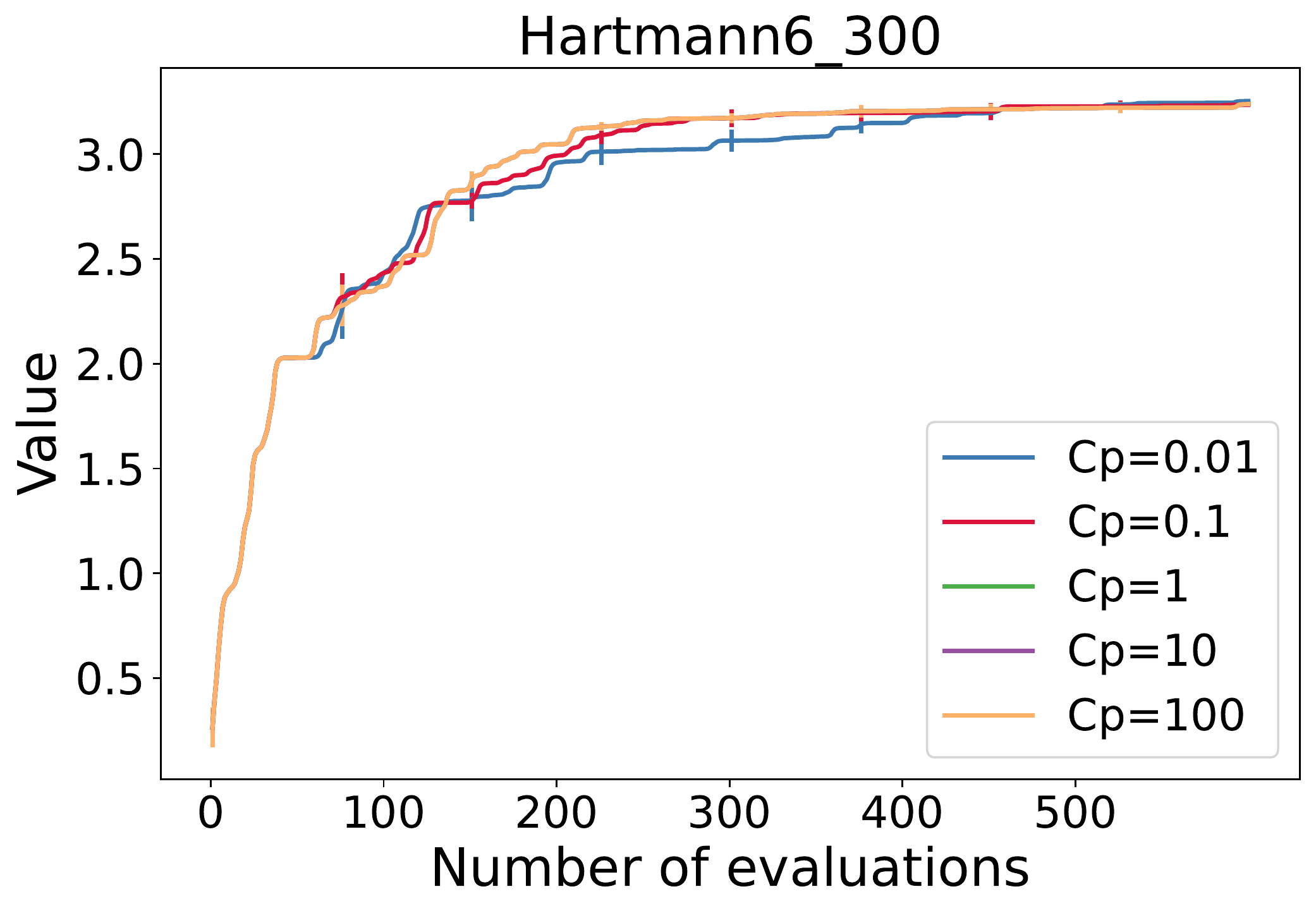}}
    \subfigure{\includegraphics[width=0.24\textwidth]{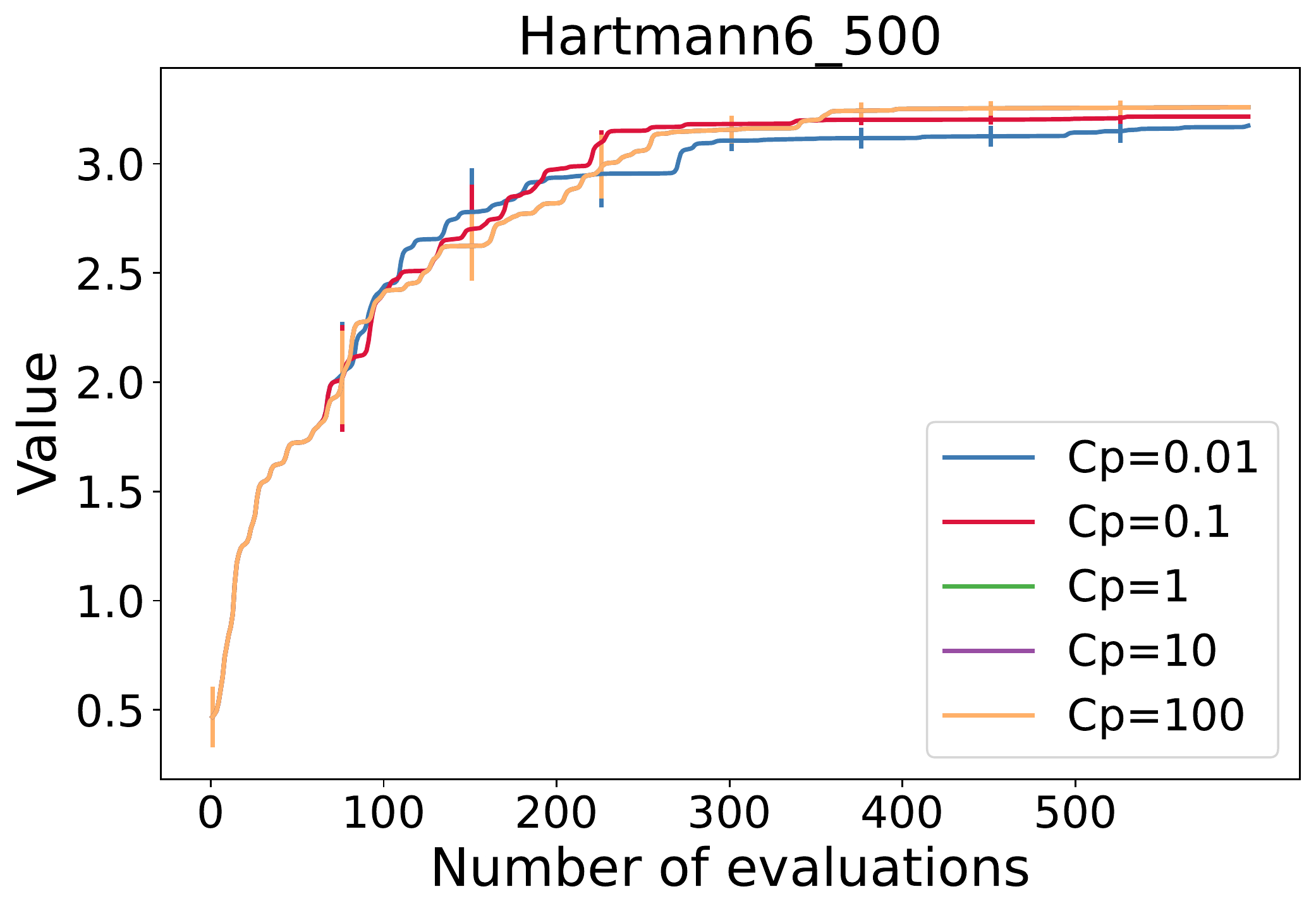}}
    \caption{Sensitivity analysis of the hyper-parameter $C_p$ for calculating UCB in Eq.~(\refeq{eq-MCTS-UCB}), using MCTS-VS-BO on Levy and Hartmann.}
    \label{fig:ablationcp}
\end{figure}

\textbf{The number $2\times N_v\times N_s$ of sampled data in each iteration} depends on the batch size $N_v$ of variable index subset and the sample batch size $N_s$, and will influence the accuracy of estimating the variable score vector in Eq.~(\refeq{eq-score}). If we increase $N_v$ and $N_s$, we can calculate the variable score more accurately, but also need more evaluations. Figure~\ref{fig:number of samples} shows that given the same number of evaluations, MCTS-VS-BO achieves the best performance when $N_v=2$ and $N_s=3$. Thus, this setting may be a good choice to balance the accuracy of variable score and the number of evaluations, which is also used throughout the experiments. 

\textbf{The threshold $N_{bad}$ for re-initializing a tree} controls the tolerance of selecting bad tree nodes (i.e., nodes containing unimportant variables). A smaller $N_{bad}$ leads to frequent re-initialization, which can adjust quickly but may cause under-exploitation of the tree. A larger $N_{bad}$ can make full use of the tree, but may optimize too much on unimportant variables. Figure~\ref{fig:nbad} shows that MCTS-VS achieves the best performance when $N_{bad}=5$. Thus, we recommend to use this setting, to balance the re-initialization and exploitation of the tree.

\textbf{The threshold $N_{split}$ for splitting a node.} If the number of variables in a node is larger than $N_{split}$, the node can be further partitioned. That is, the parameter $N_{split}$ controls the least number of variables in a leaf node and thus affects the number of selected variables, which has a direct influence on the wall clock time. Note that MCTS-VS selects a leaf node and optimizes the variables contained by this node in each iteration. The smaller $N_{split}$, the shorter the time. Figure~\ref{fig:nsplit} shows that $N_{split}$ has little influence on the performance of MCTS-VS-BO, and thus we recommend to set $N_{split}=3$ to reduce the wall clock time. 

\begin{figure}[htbp]
    \centering
    \subfigure[Number of samples]{\includegraphics[width=0.32\textwidth]{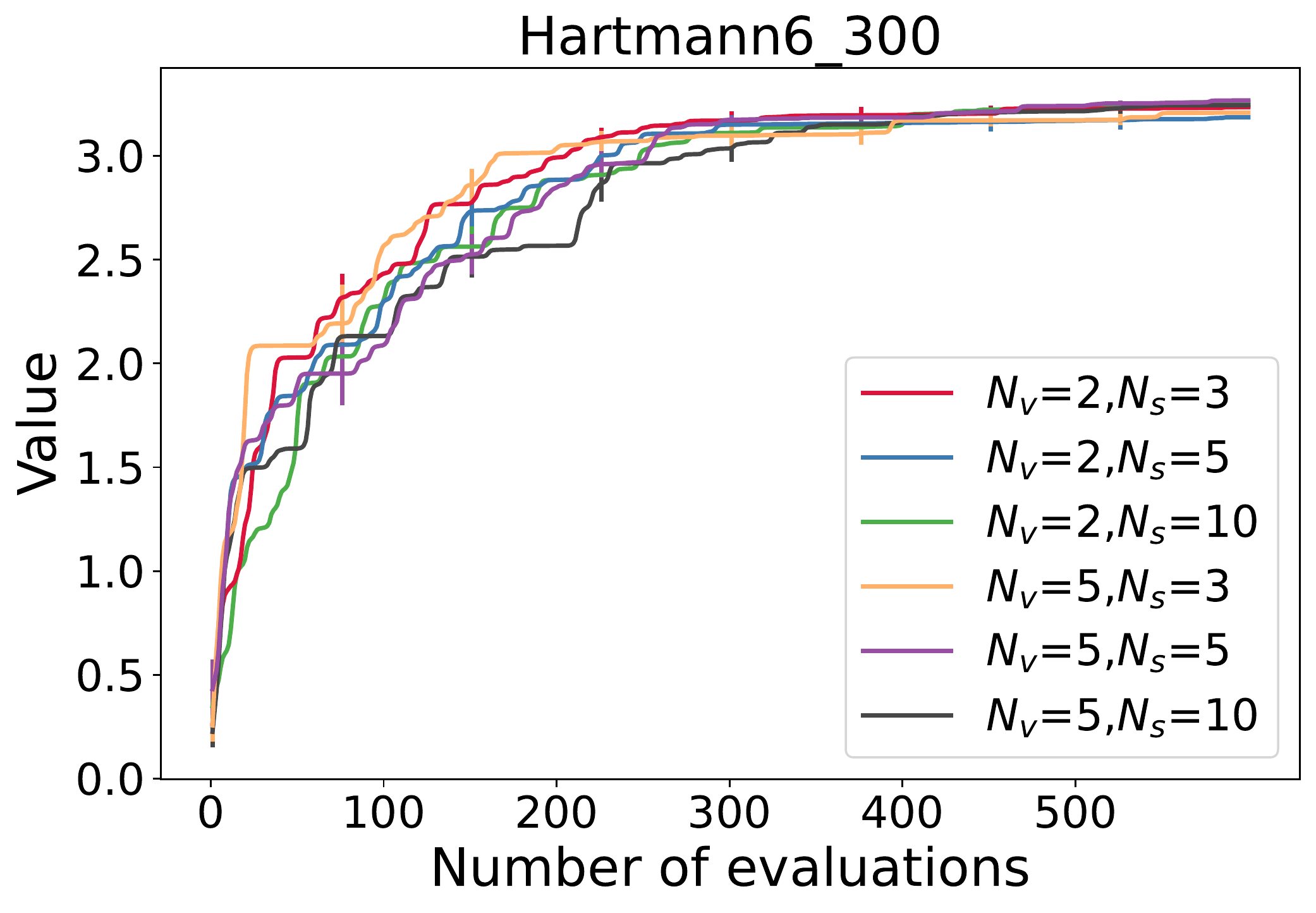}\label{fig:number of samples}}
    \subfigure[$N_{bad}$]{\includegraphics[width=0.32\textwidth]{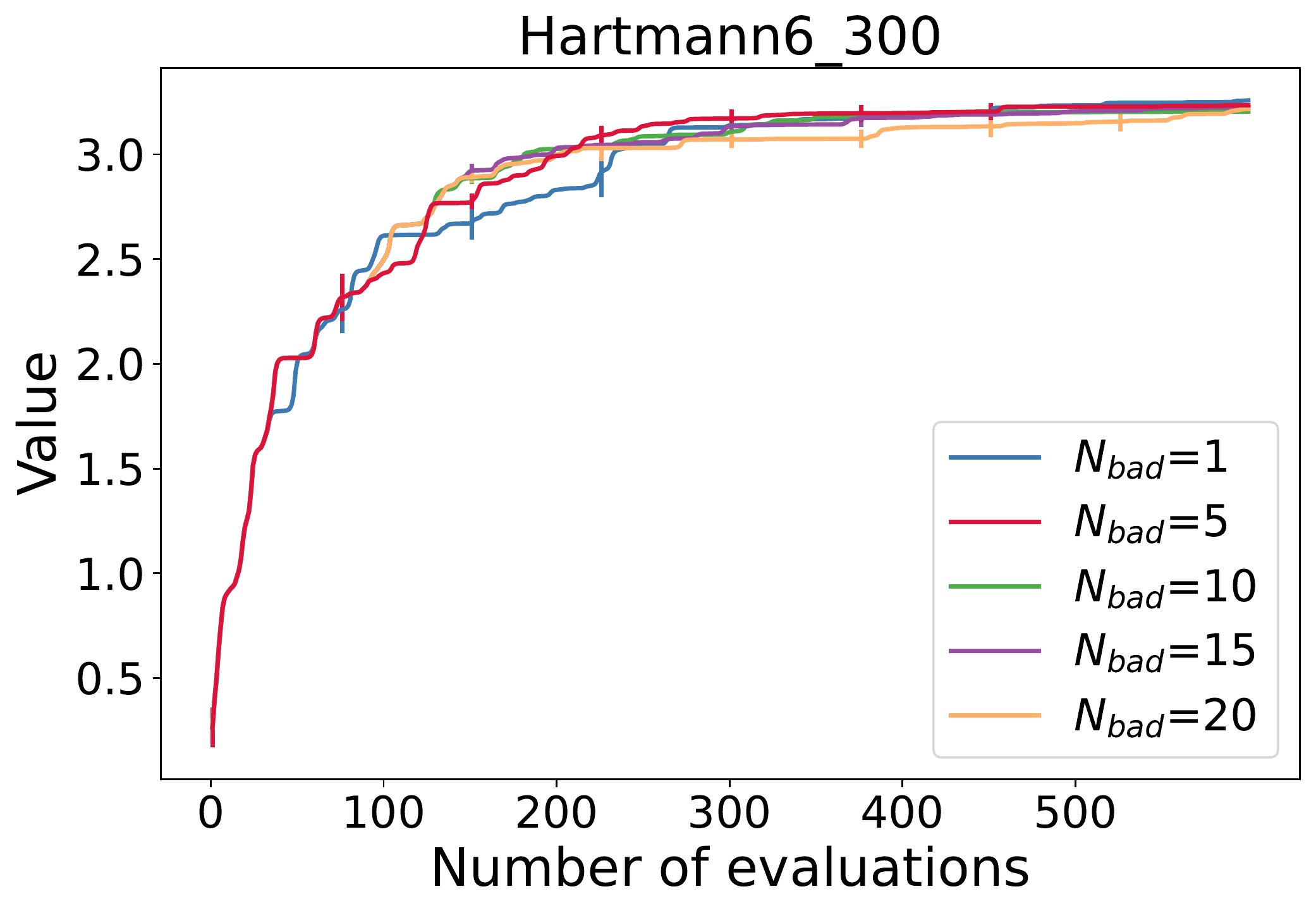}\label{fig:nbad}}
    \subfigure[$N_{split}$]{\includegraphics[width=0.32\textwidth]{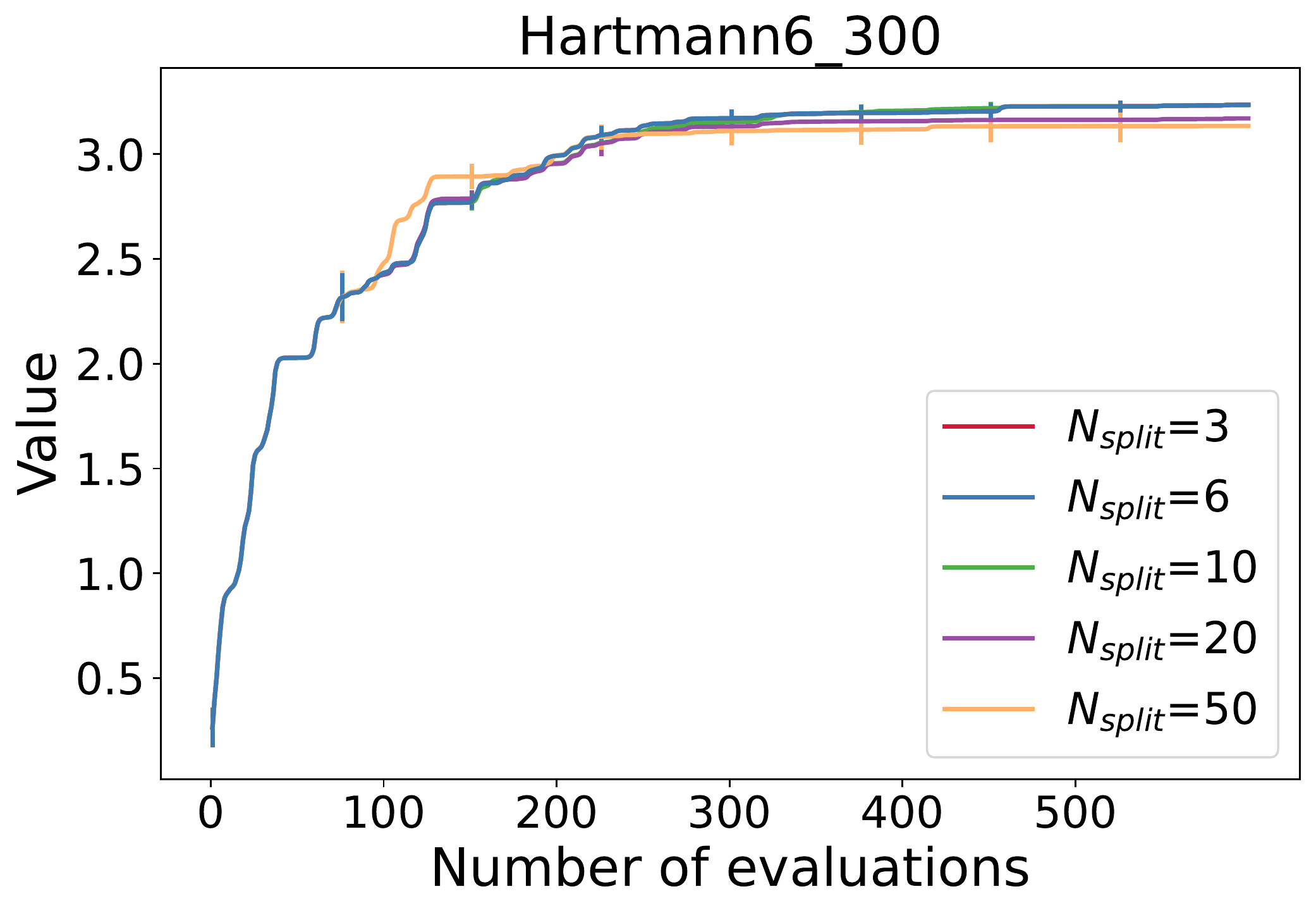}\label{fig:nsplit}}
    \caption{Sensitivity analysis of the number $2\times N_v\times N_s$ of sampled data in each iteration, the threshold $N_{bad}$ for re-initializing a tree and the threshold $N_{split}$ for splitting a node, using MCTS-VS-BO on Hartmann$6$\_$300$.}
    \label{fig:ablation1}
\end{figure}

\rbt{\textbf{Influence of the hyper-parameters on the runtime of MCTS-VS.} We also provide some intuitive explanation about the influence of the hyper-parameters on the runtime. The threshold $N_{split}$ for splitting a node has a direct impact on the runtime, because it controls the least number of variables to be optimized in a leaf node. That is, the runtime will increase with $N_{split}$. Other parameters may affect the depth of the tree and thus the runtime. For the threshold $N_{bad}$ for re-initializing a tree, if it is set to a small value, MCTS-VS will re-build the tree frequently and the depth of the tree is small. The shallow nodes have more variables, leading to more runtime to optimize. For the hyper-parameter $C_p$ for calculating UCB, if it is set to a large value, the exploration is preferred and MCTS-VS will tend to select the right node (regarded as containing unimportant variables). The tree thus will be re-built frequently, leading to more runtime. For the number $2\times N_v\times N_s$ of sampled data at each iteration, if $N_v$ and $N_s$ are set to large values, the depth of the tree will be small given the total number of evaluations, and thus lead to more runtime.}

\section{Additional Experiments}
\label{appendix:additionalexperiments}


\textbf{Detailed results on NAS-Bench-101 and NAS-Bench-201.} Figure~\ref{fig:appendixnas} shows the performance of the compared methods on the task of NAS-Bench-101 and NAS-Bench-201 when using the number of evaluations and wall clock time as the $x$-axis, respectively. Though most of their performance is similar in the left two subfigures, it can be clearly observed from the right two subfigures that MCTS-VS-BO uses the least time to achieve the best accuracy. 
Note that we only show the subfigures with the wall clock time as the $x$-axis in the main paper due to the space limitation. Besides, we also run a longer time here (i.e., in the right two subfigures) to provide a more complete observation. 

\begin{figure}[htbp]
    \centering
    \subfigure{\includegraphics[width=0.65\textwidth]{final-version/legend/exp2_legend_1.pdf}}\\
    \vspace{-1.3em}
    \subfigure{\includegraphics[width=0.6\textwidth]{final-version/legend/exp2_legend_2.pdf}}\\
    \centering
    \subfigure{\includegraphics[width=0.24\textwidth]{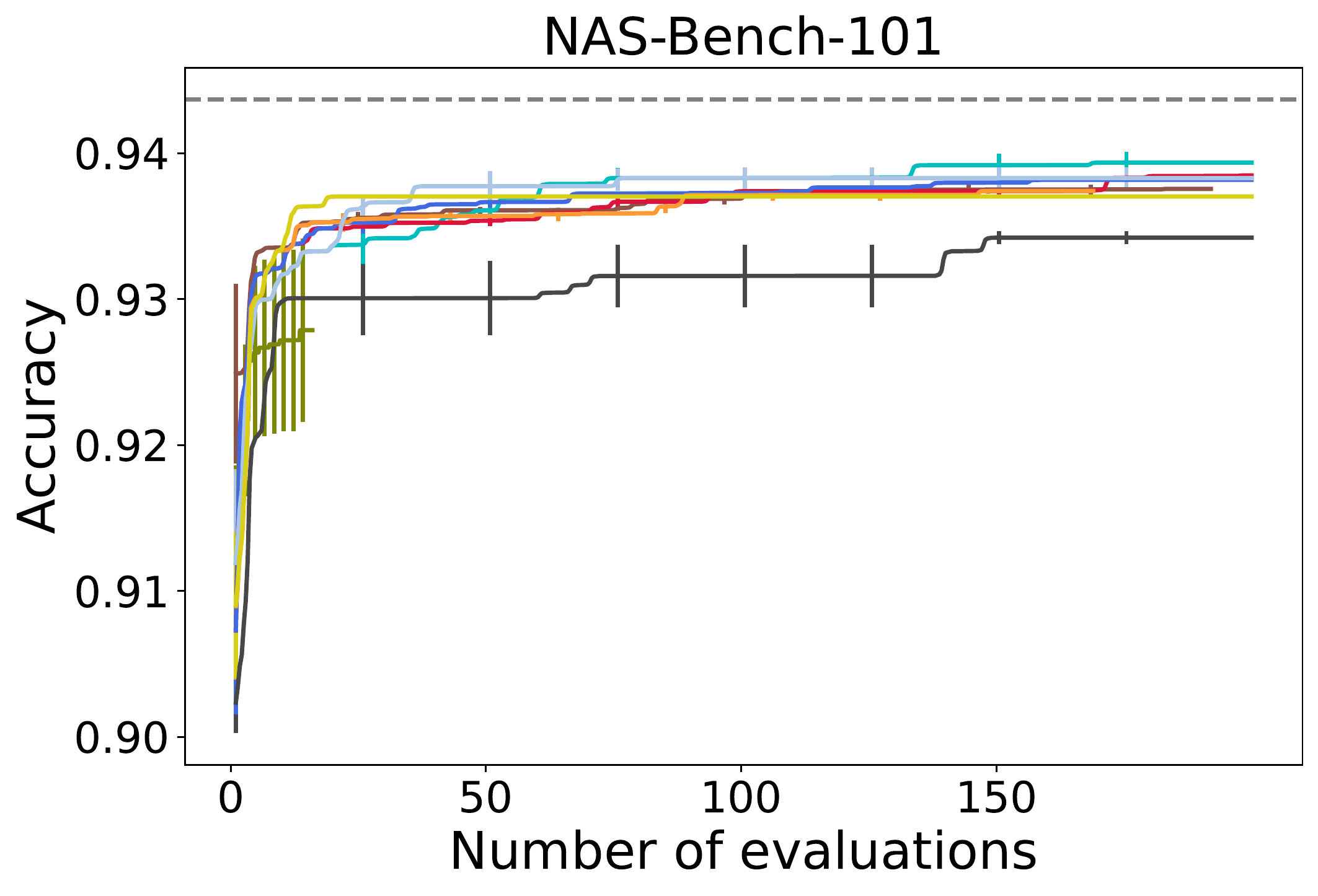}}
    \subfigure{\includegraphics[width=0.24\textwidth]{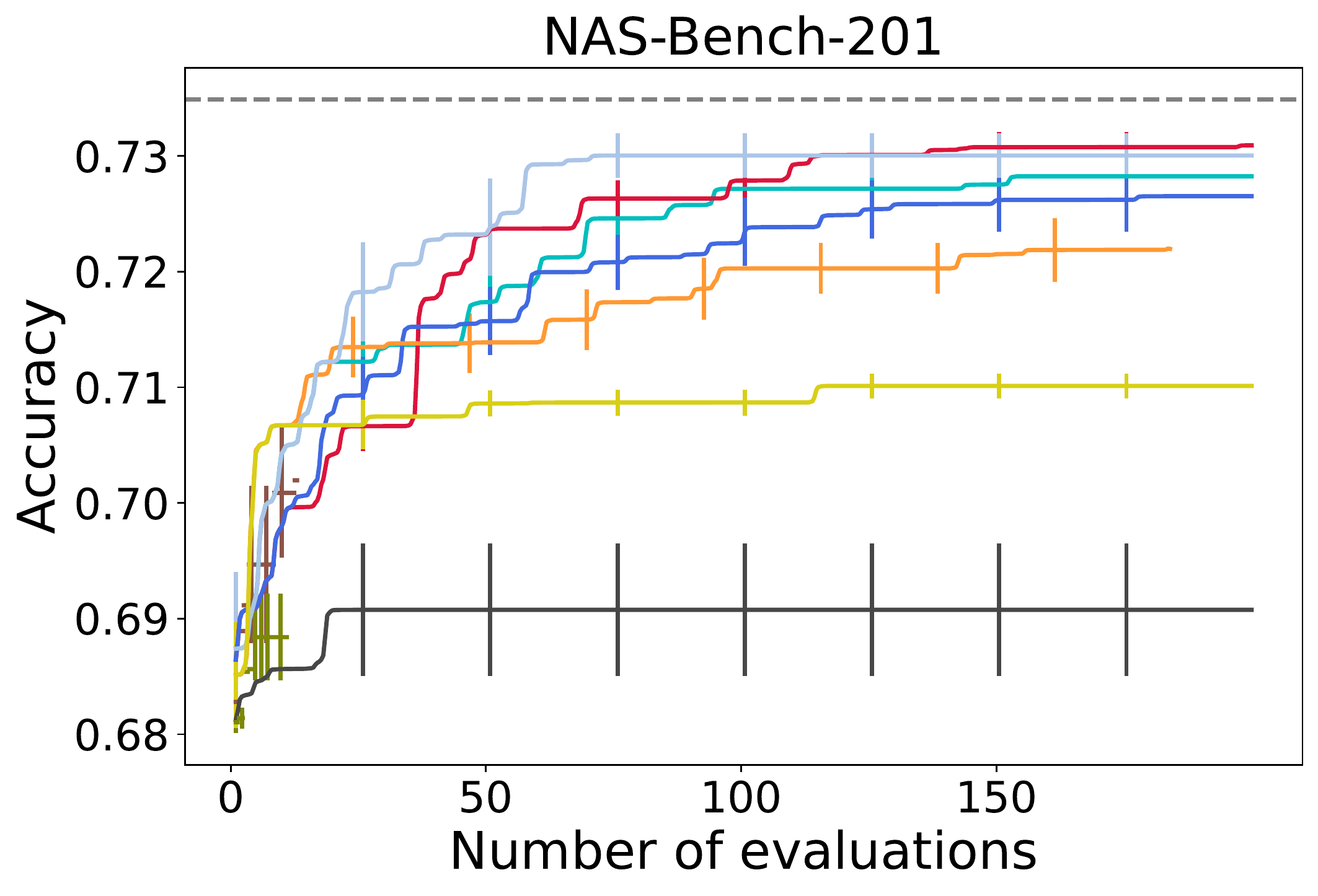}}
    \subfigure{\includegraphics[width=0.24\textwidth]{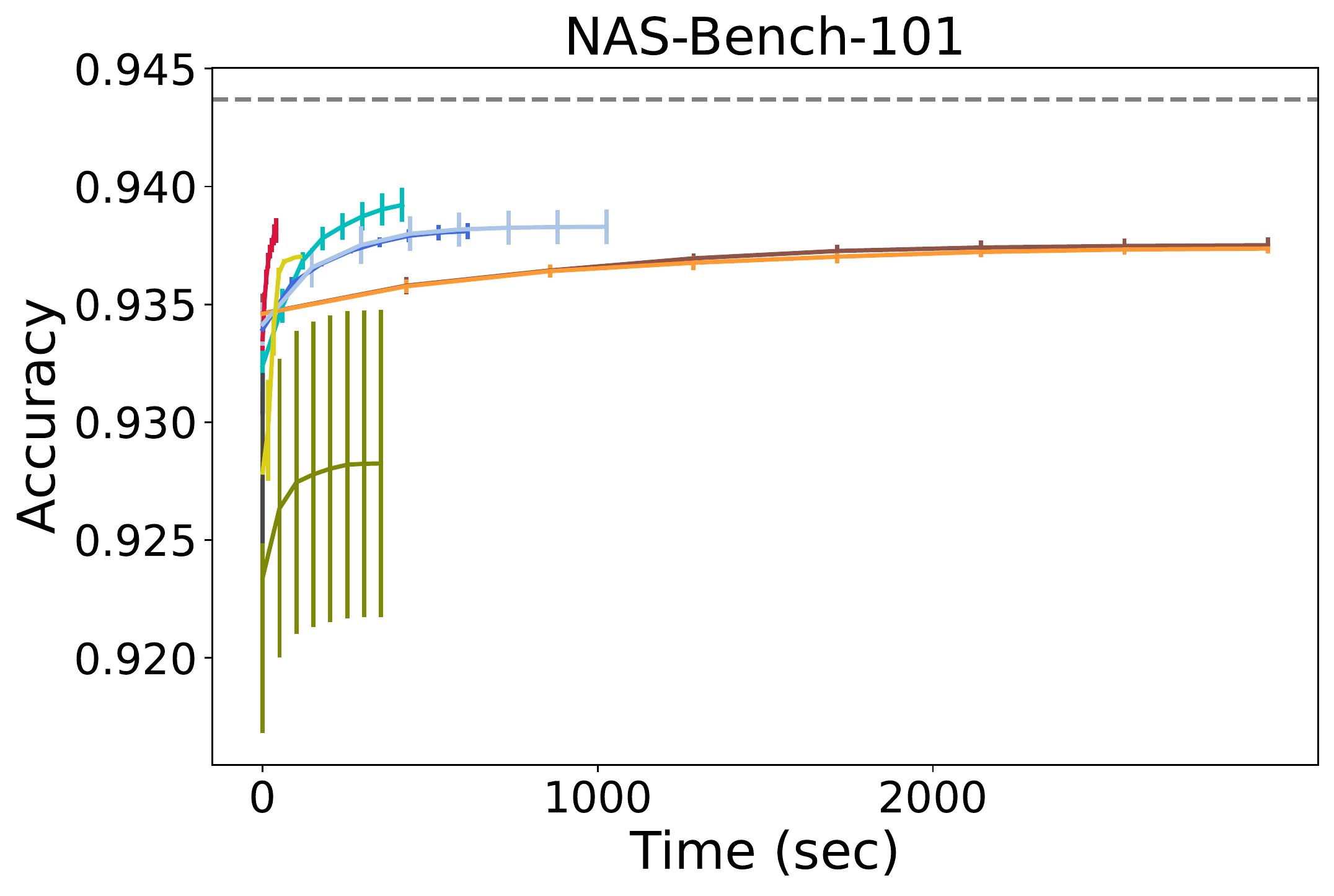}}
    \subfigure{\includegraphics[width=0.24\textwidth]{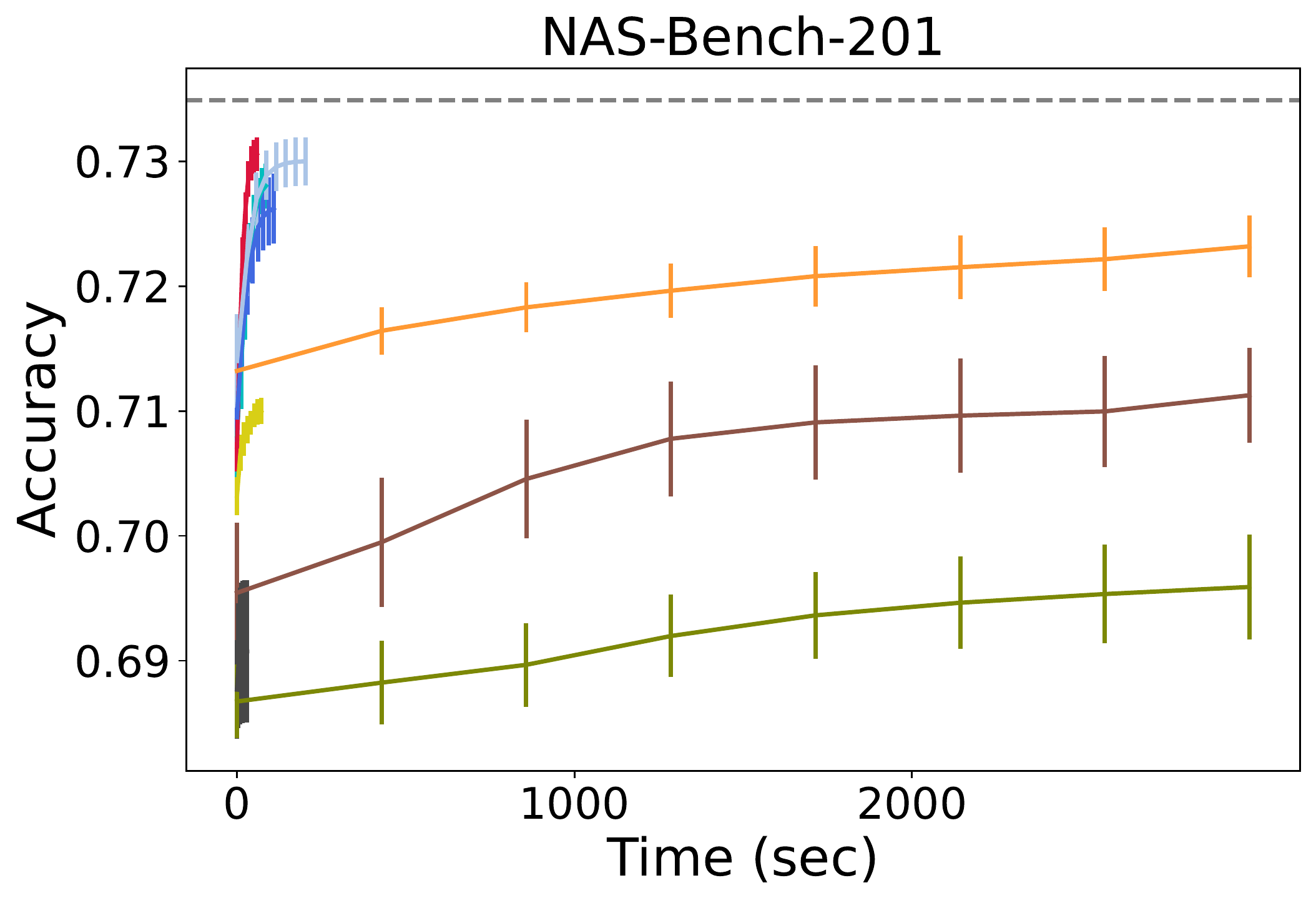}}
    \caption{Performance comparison on NAS-Bench-101 and NAS-Bench-201, using the number of evaluations and wall clock time as the $x$-axis, respectively.}
    \label{fig:appendixnas}
\end{figure}

\textbf{Experiments on more NAS-Bench problems.} We also conduct experiments on NAS-Bench-1Shot1~\cite{Zela2020NAS-Bench-1Shot1}, TransNAS-Bench-101~\cite{transnas} and NAS-Bench-ASR~\cite{mehrotra2021nasbenchasr}. NAS-Bench-1Shot1 is a weight-sharing benchmark based on one-shot NAS methods, deriving from the large architecture space of NASBench-101. TransNAS-Bench-101 is a benchmark dataset containing network performance across seven vision tasks, e.g., object classification, scene classification and so on. We use the scene classification task with cell-level search space in our experiments. NAS-Bench-ASR is a benchmark for Automatic Speech Recognition (ASR) and trained on the TIMIT audio dataset. For NAS-Bench-ASR, we use Phoneme Error Rate (PER) on the validation dataset as the metric. In the same way as~\cite{alebo}, we create problems with $D=33$, $D=24$ and $D=30$ for NAS-Bench-1Shot1, TransNAS-Bench-101 and NAS-Bench-ASR, respectively. The results in Figure~\ref{fig:additionalnas} show that MCTS-VS-BO still uses the least time to achieve the best performance.

\begin{figure}[htbp]
    \centering
    \subfigure{\includegraphics[width=0.65\textwidth]{final-version/legend/exp2_legend_1.pdf}}\\
    \vspace{-1.3em}
    \subfigure{\includegraphics[width=0.6\textwidth]{final-version/legend/exp2_legend_2.pdf}}\\
    \centering
    \includegraphics[width=0.32\textwidth]{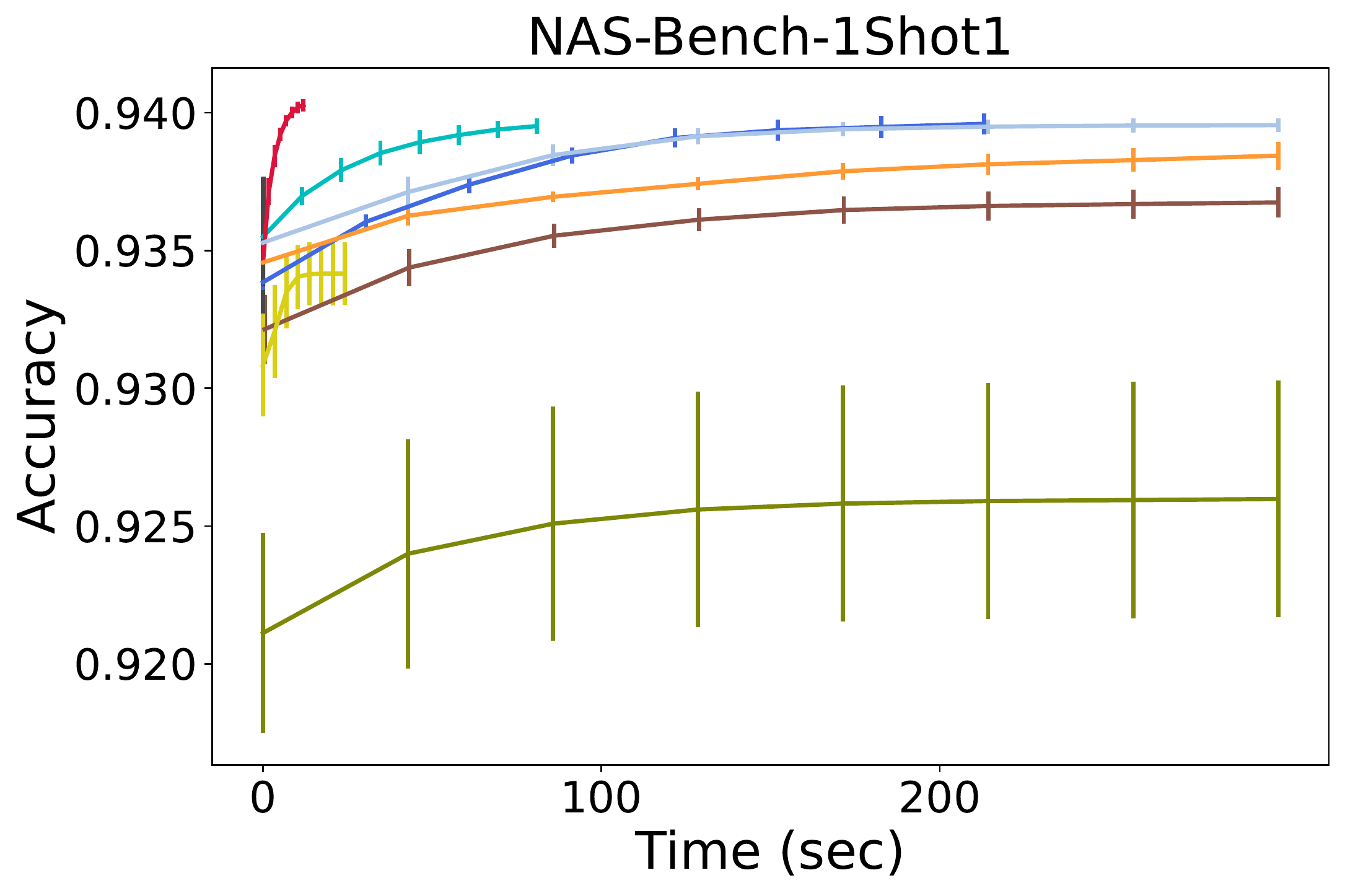}
    \includegraphics[width=0.32\textwidth]{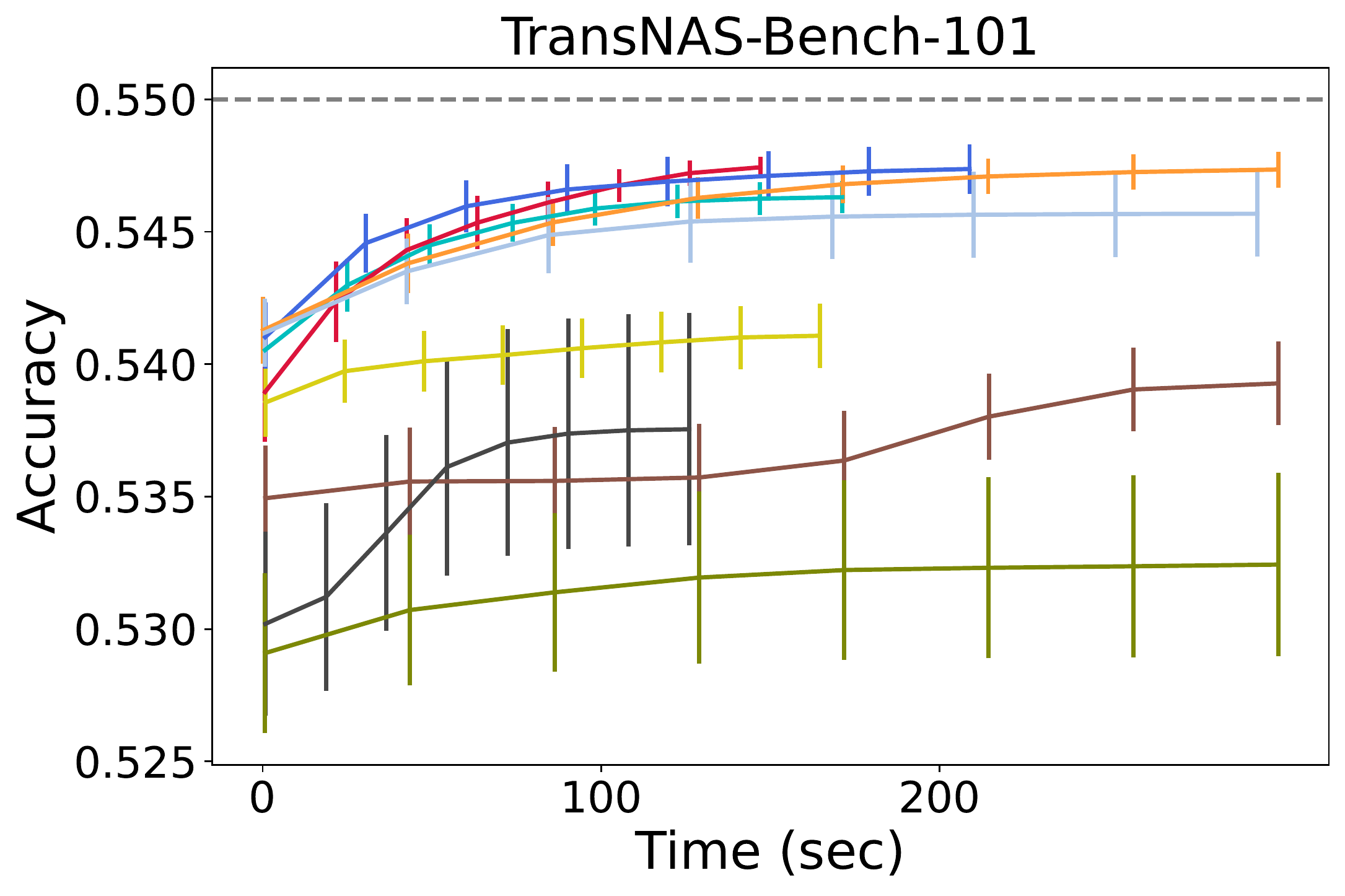}
    \includegraphics[width=0.32\textwidth]{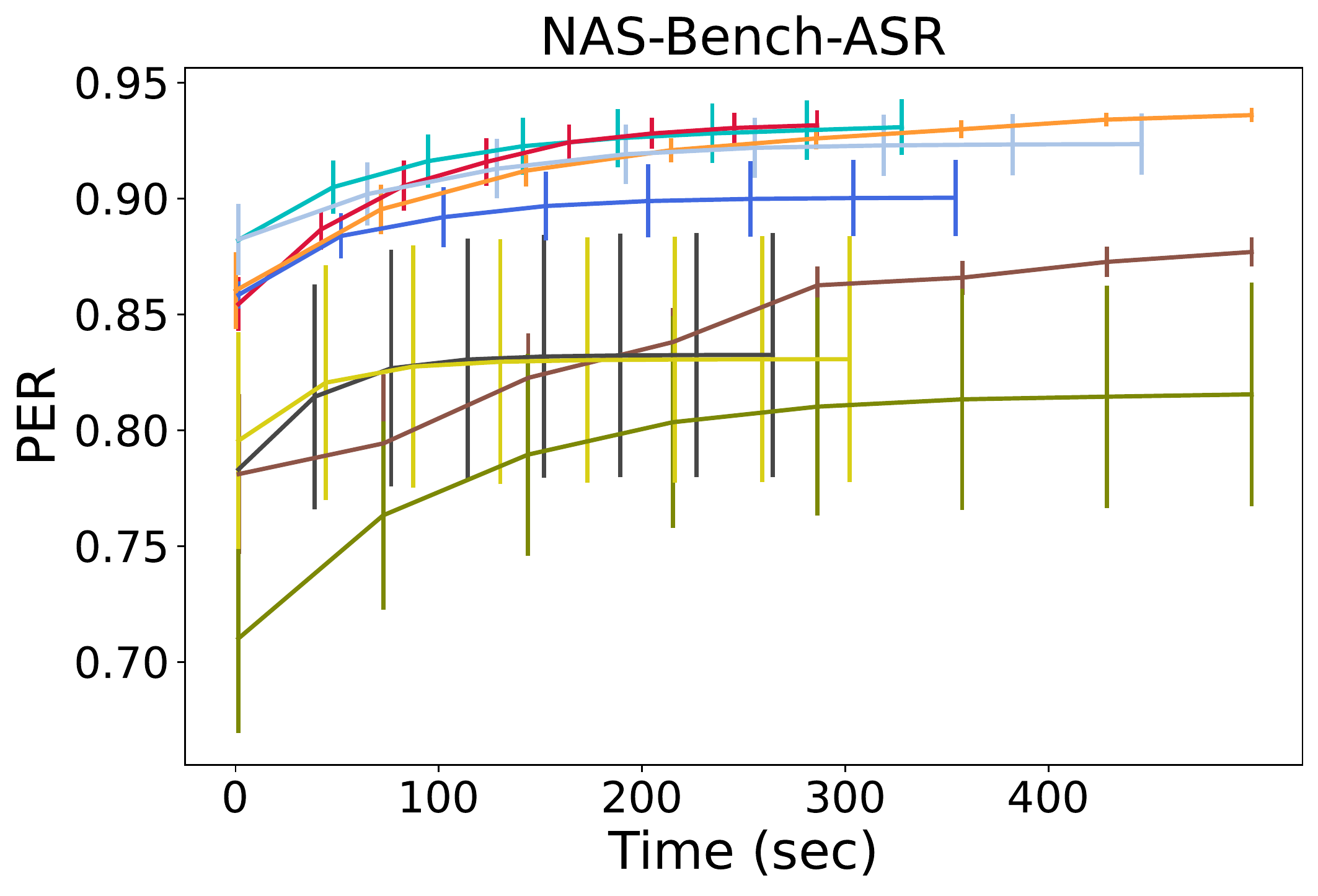}
    \caption{Performance comparison on more NAS-Bench problems.}
    \label{fig:additionalnas}
\end{figure}


\textbf{Experiments on extremely low and high dimensional problems.} We also evaluate the compared methods for extremely low and high dimensional problems by testing on Hartmann$6$\_$100$ and Hartmann$6$\_$1000$. We only run MCTS-VS, TuRBO, LA-MCTS-TuRBO and HeSBO here, because they behave well in the previous experiments. As expected, the right subfigure of Figure~\ref{fig:additionalexp} shows that MCTS-VS-BO has a clear advantage over the rest methods on the extremely high dimensional function Hartmann$6$\_$1000$. The left subfigure shows that on Hartmann$6$\_$100$, TuRBO behaves the best and MCTS-VS is the runner-up, implying that MCTS-VS can also tackle low dimensional problems to some degree.

\textbf{Experiments on synthetic functions depending on a subset of variables to various extent.} In the experiments, the synthetic functions are generated by adding unrelated variables directly. For example, Hartmann$6$\_$500$ has the dimension $D=500$, and is generated by appending $494$ unrelated dimensions to Hartmann with $6$ variables. Here, we test the performance of MCTS-VS on \rbt{a synthetic function} whose dependence on a subset of variables is more various. For this purpose, we generate Hartmann$6$\_$5$\_$500$\_$v$ by mixing five Hartmann$6$ functions as $0.5^0$Hartmann$6(\bm x_{1:6})+0.5^1\times $Hartmann$6(\bm x_{7:12})+\cdots+0.5^4$Hartmann$6(\bm x_{25:30})$, and appending 470 unrelated dimensions, where $\bm x_{i: j}$ denotes the $i$-th to $j$-th variables, and different coefficients represent various degrees of dependence. The results in Figure~\ref{fig:various_extent} show that MCTS-VS-BO performs the best.

\begin{figure}[htbp]
    \centering
    \subfigure{\includegraphics[width=0.8\textwidth]{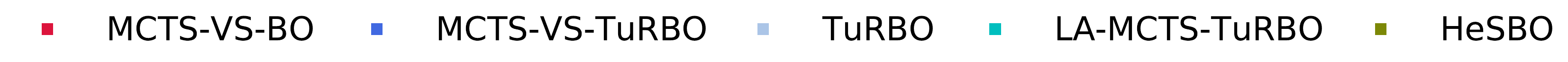}}\\\vspace{-0.5em}
    \centering
    \begin{minipage}[t]{0.66\textwidth}
    \includegraphics[width=0.49\textwidth]{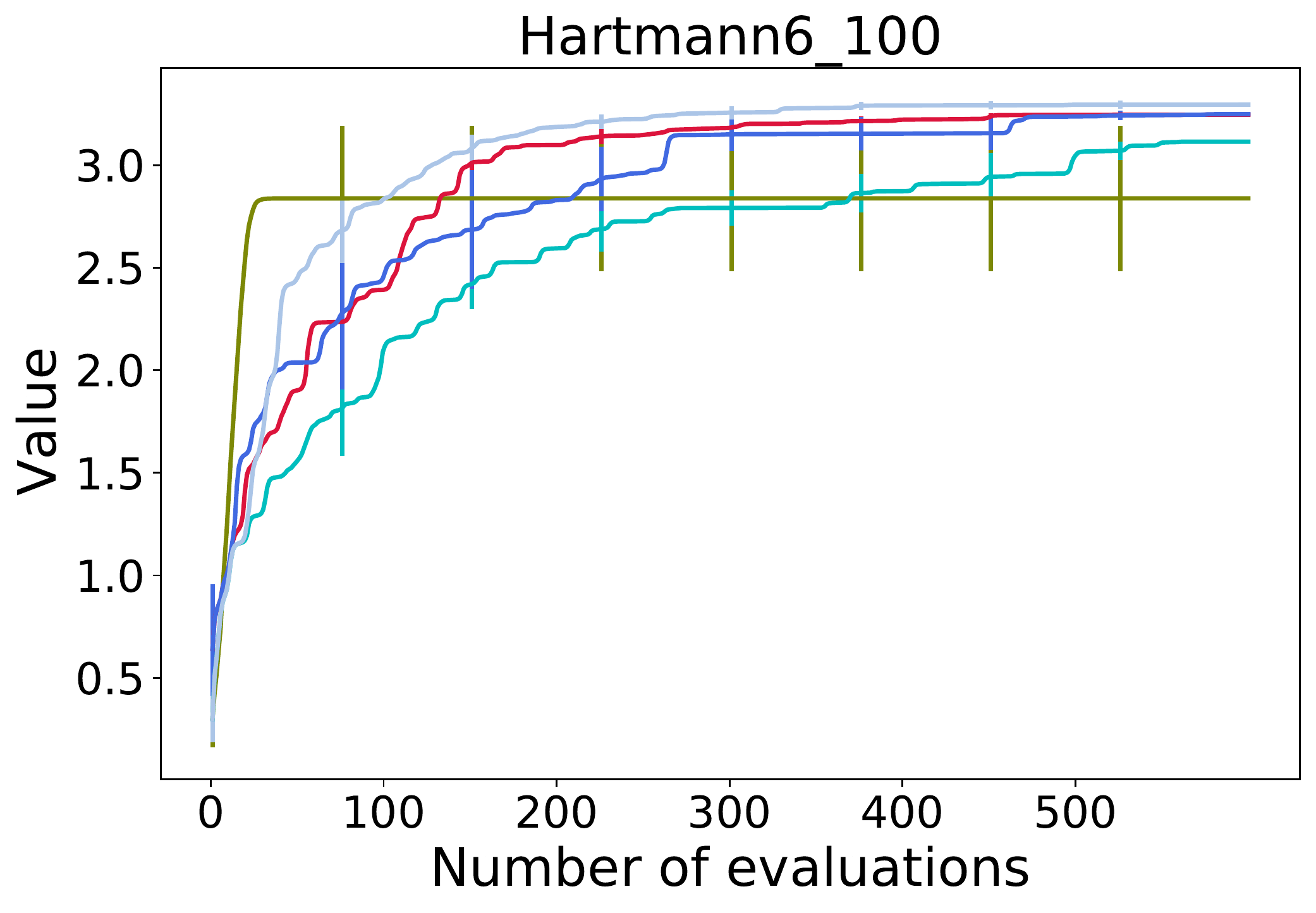}
    \includegraphics[width=0.49\textwidth]{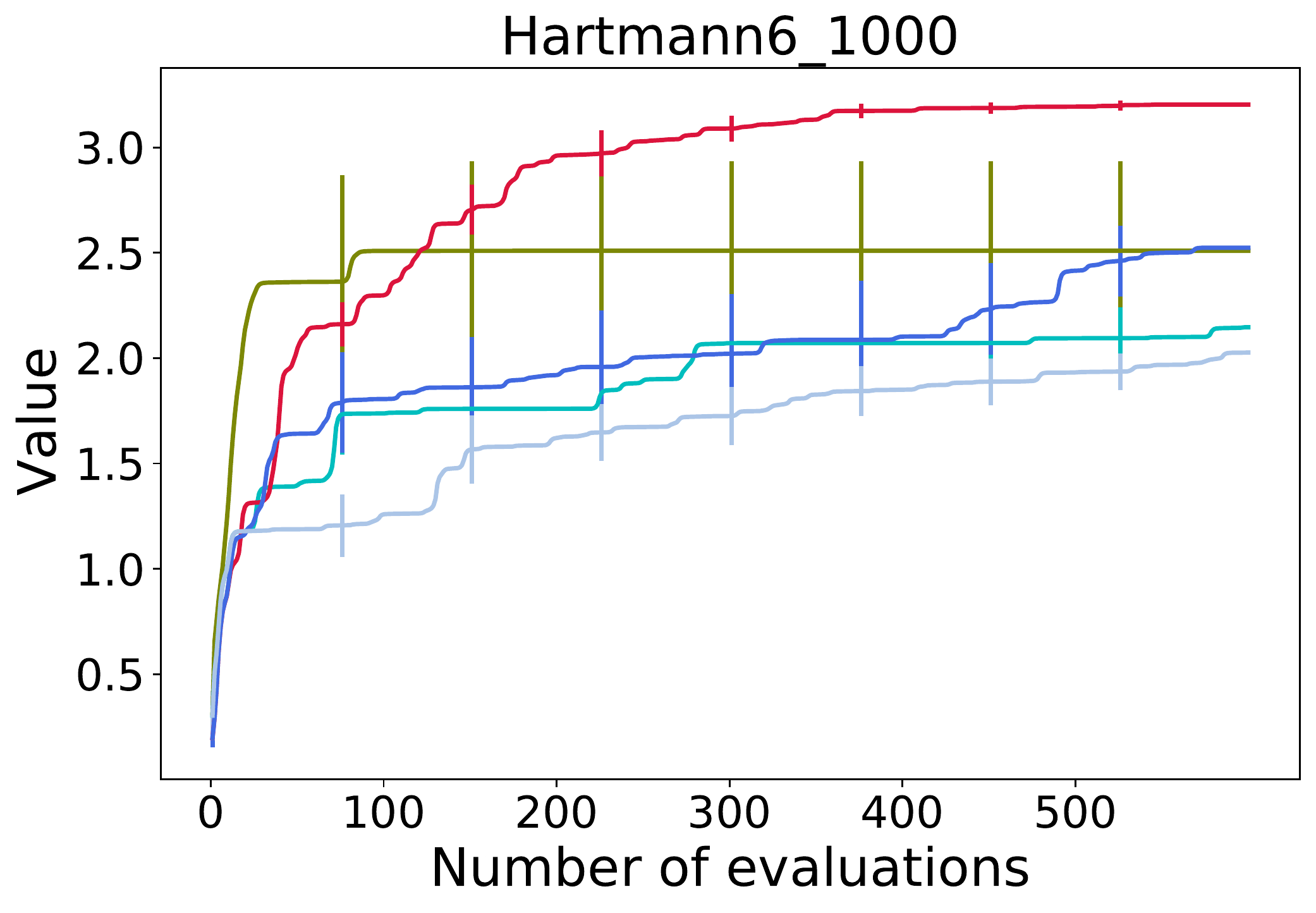}
    \vspace{-0.8em}
    \caption{Performance comparison on extremely low and high dimensional problems.}
    \label{fig:additionalexp}
    \end{minipage}
    \begin{minipage}[t]{0.33\textwidth}
    \includegraphics[width=0.95\textwidth]{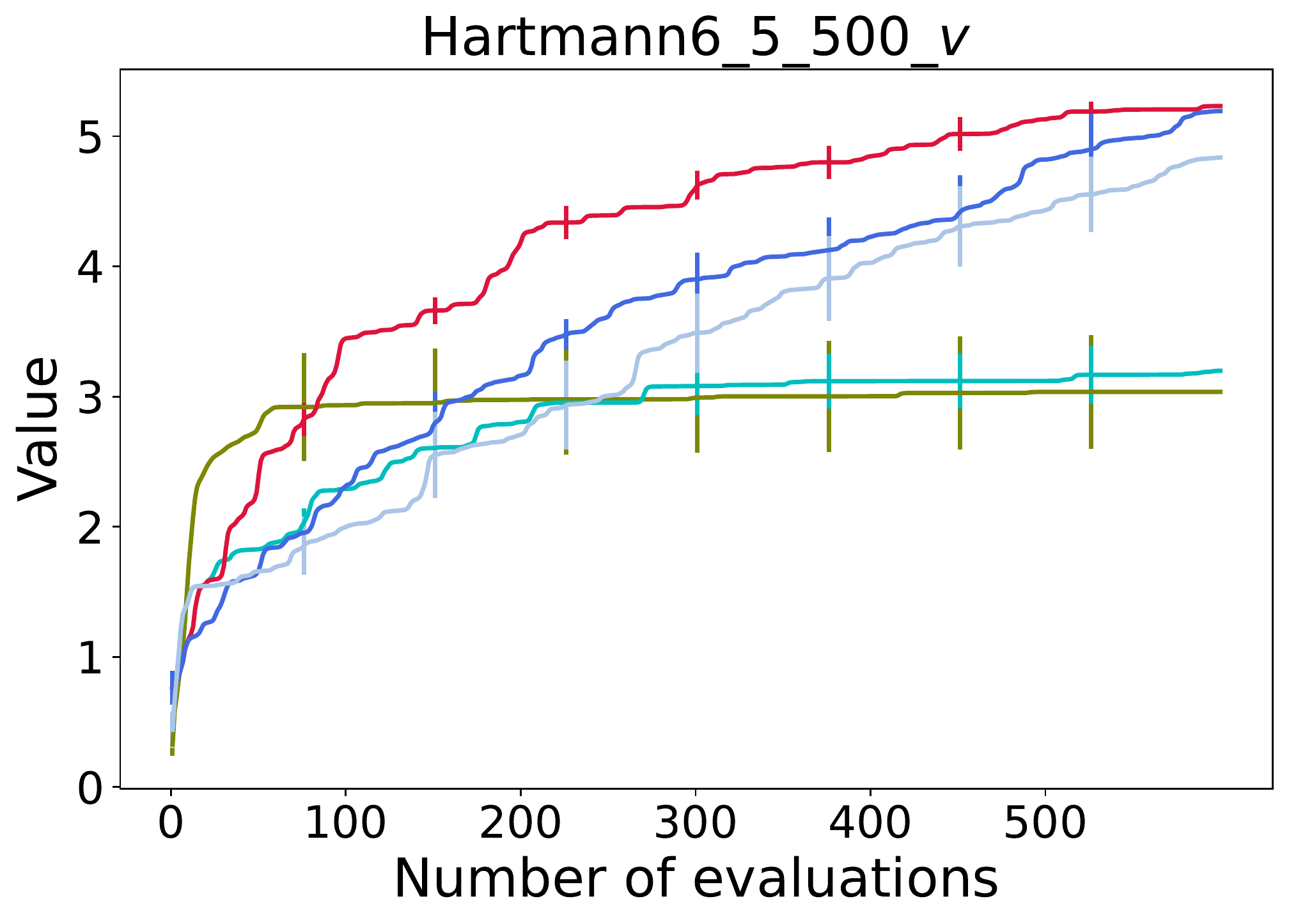}
    \vspace{-0.8em}
    \caption{Performance comparison on synthetic functions depending on a subset of variables to various extent.}
    \label{fig:various_extent}
    \end{minipage}
\end{figure}

\rbt{\textbf{Experiments with increasing ratio of valid variables.} We also examine the performance of MCTS-VS when the ratio of valid variables increases. We use the synthetic function Hartmann$6$\_$500$, and generate the variants with more valid variables by mixing multiple Hartmann$6$ functions as in Appendix~\ref{appendix:ablation}. For example, Hartmann$6$\_$5$\_$500$ is generated by mixing five Hartmann$6$ functions as Hartmann$6(\bm x_{1:6}) +$ Hartmann$6(\bm x_{7:12})+\cdots+$ Hartmann$6(\bm x_{25:30})$, and appending $470$ unrelated dimensions. We have compared MCTS-VS-TuRBO with LA-MCTS-TuRBO and TuRBO on Hartmann$6$\_$500$, Hartmann$6$\_$5$\_$500$, Hartmann$6$\_$10$\_$500$, $\dots$, Hartmann$6$\_$30$\_$500$, and Hartmann$6$\_$83$\_$500$, which has the largest number (i.e., $6\times 83=498$) of valid variables. The results are shown in Figure~\ref{fig:increasing_ratio}. It can be observed that LA-MCTS-TuRBO performs the worst. As expected, when the percentage of valid variables is low (e.g., in Hartmann$6$\_$500$, Hartmann$6$\_$5$\_$500$ and Hartmann$6$\_$10$\_$500$), MCTS-VS-TuRBO can be better than TuRBO; but as the percentage of valid variables increases, TuRBO becomes better, because a leaf node of MCTS can contain only a small fraction of valid variables. }

\begin{figure*}[htbp!]
    \centering
    \subfigure{\includegraphics[width=0.45\textwidth]{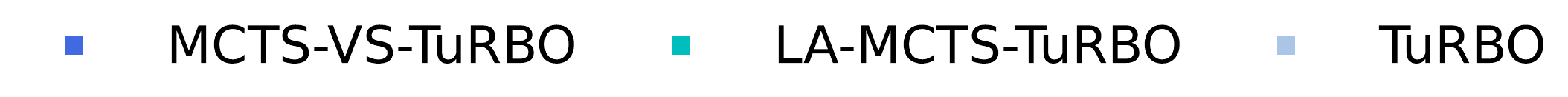}}\vspace{-1em}
    
    \centering
    \subfigure{\includegraphics[width=0.24\textwidth]{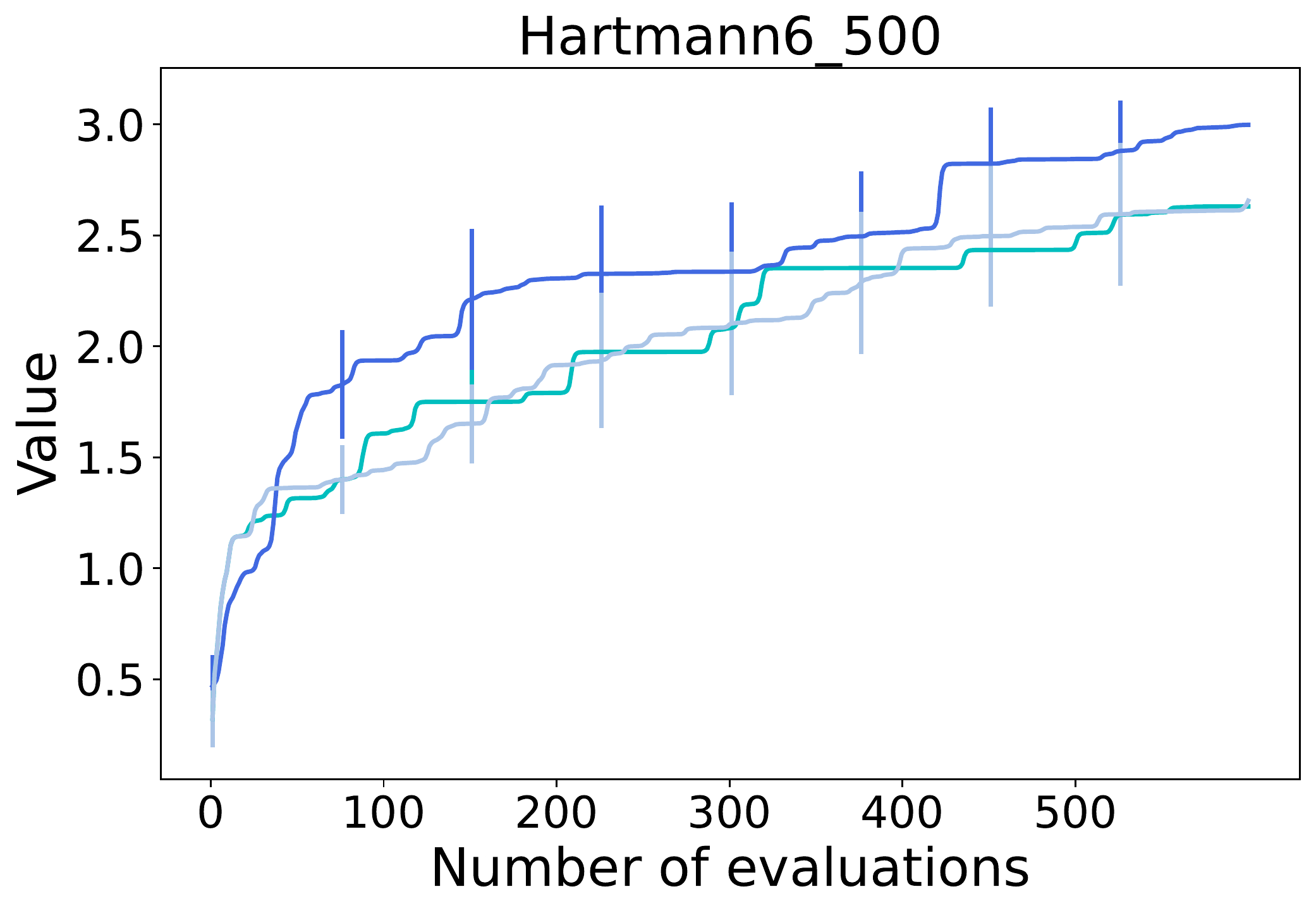}}
    \subfigure{\includegraphics[width=0.24\textwidth]{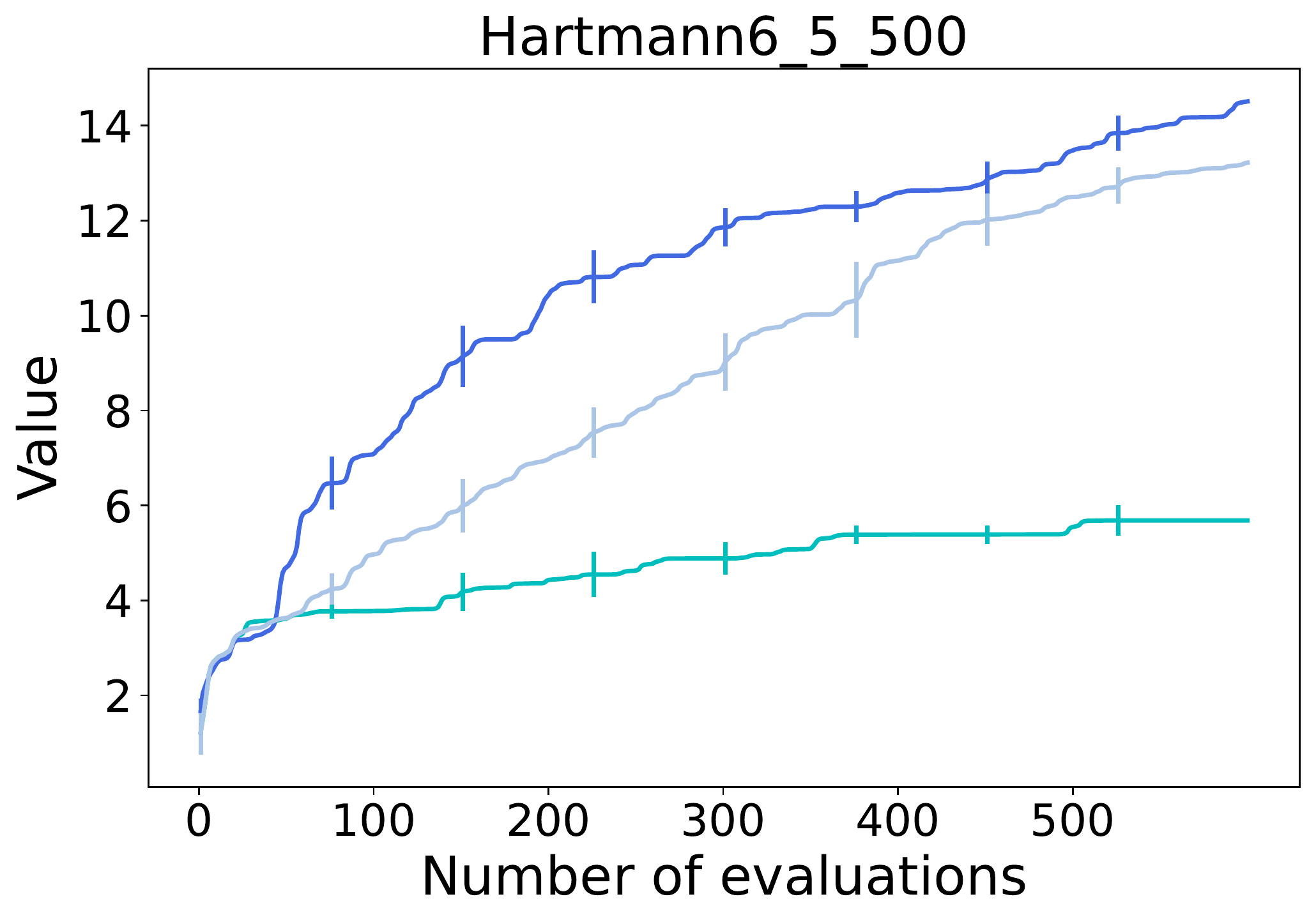}}
    \subfigure{\includegraphics[width=0.24\textwidth]{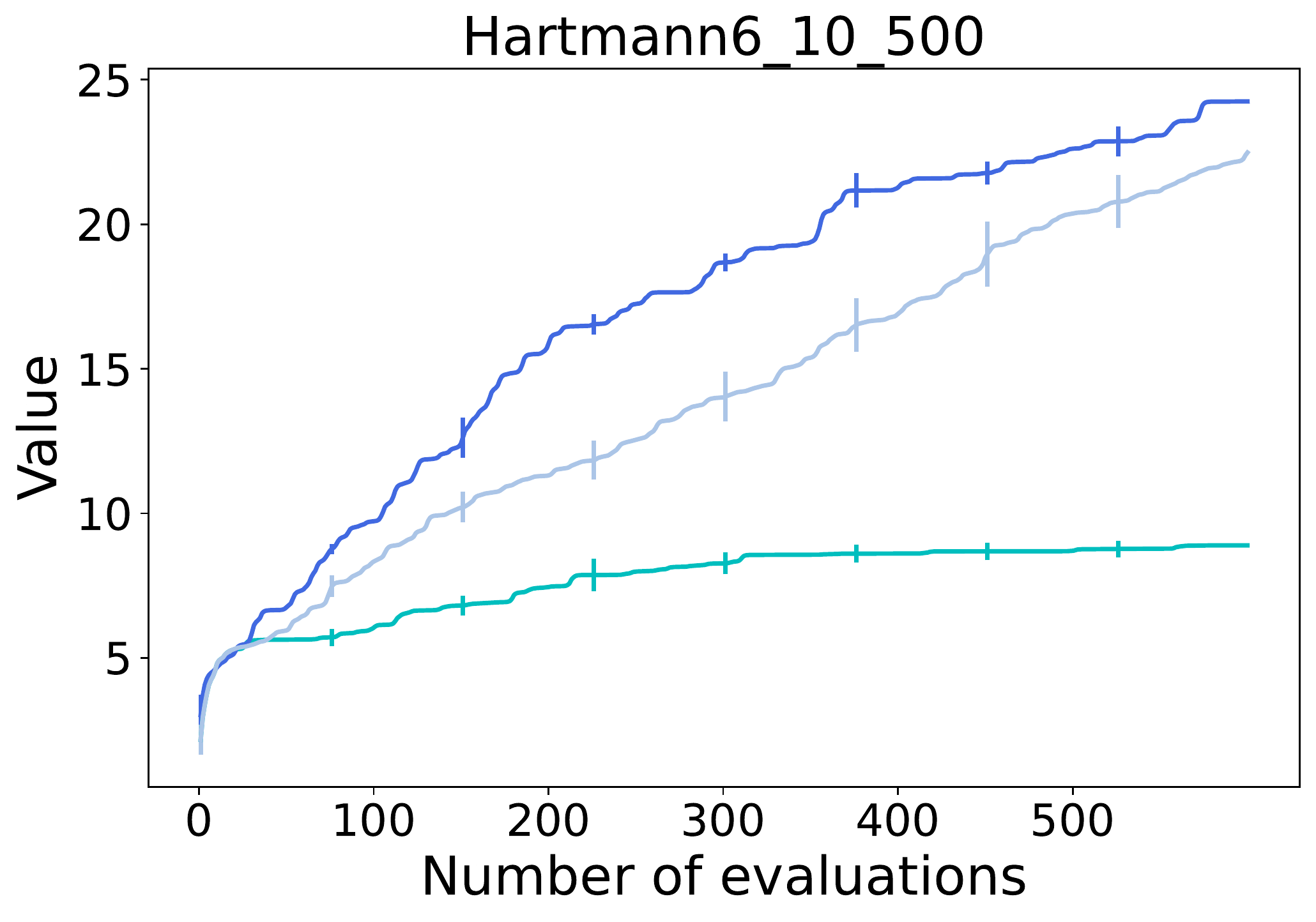}}
    \subfigure{\includegraphics[width=0.24\textwidth]{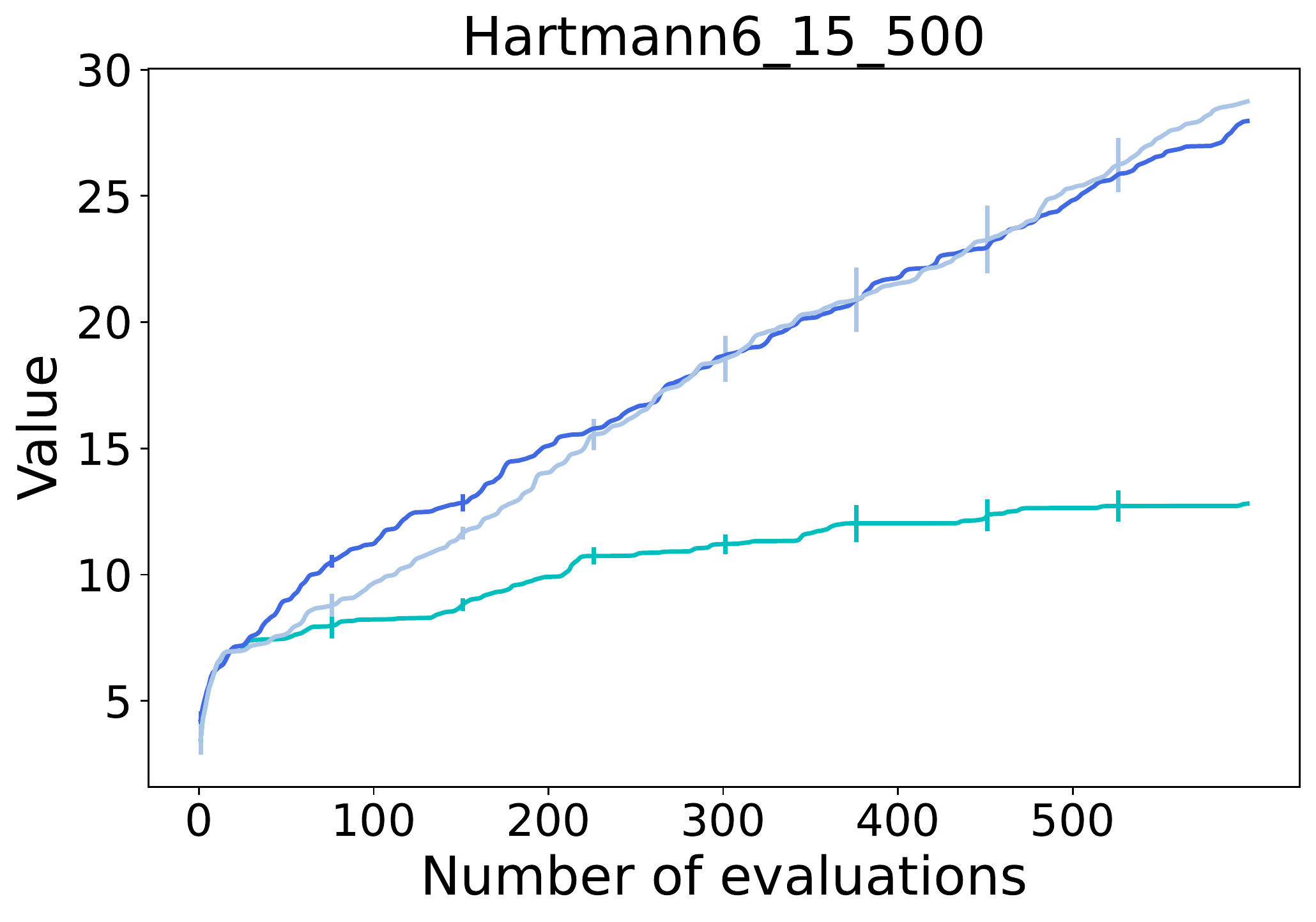}}
    \subfigure{\includegraphics[width=0.24\textwidth]{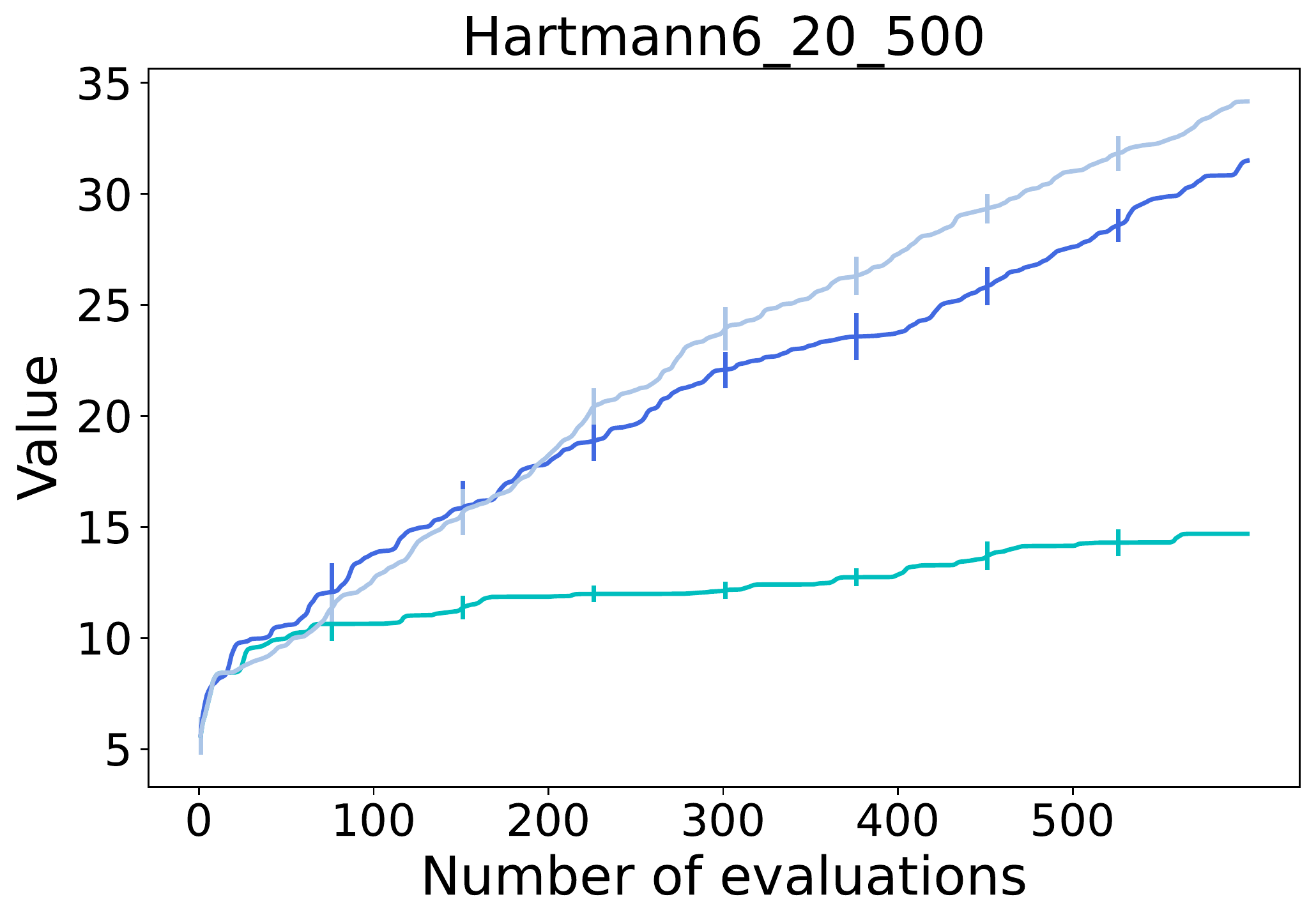}}
    \subfigure{\includegraphics[width=0.24\textwidth]{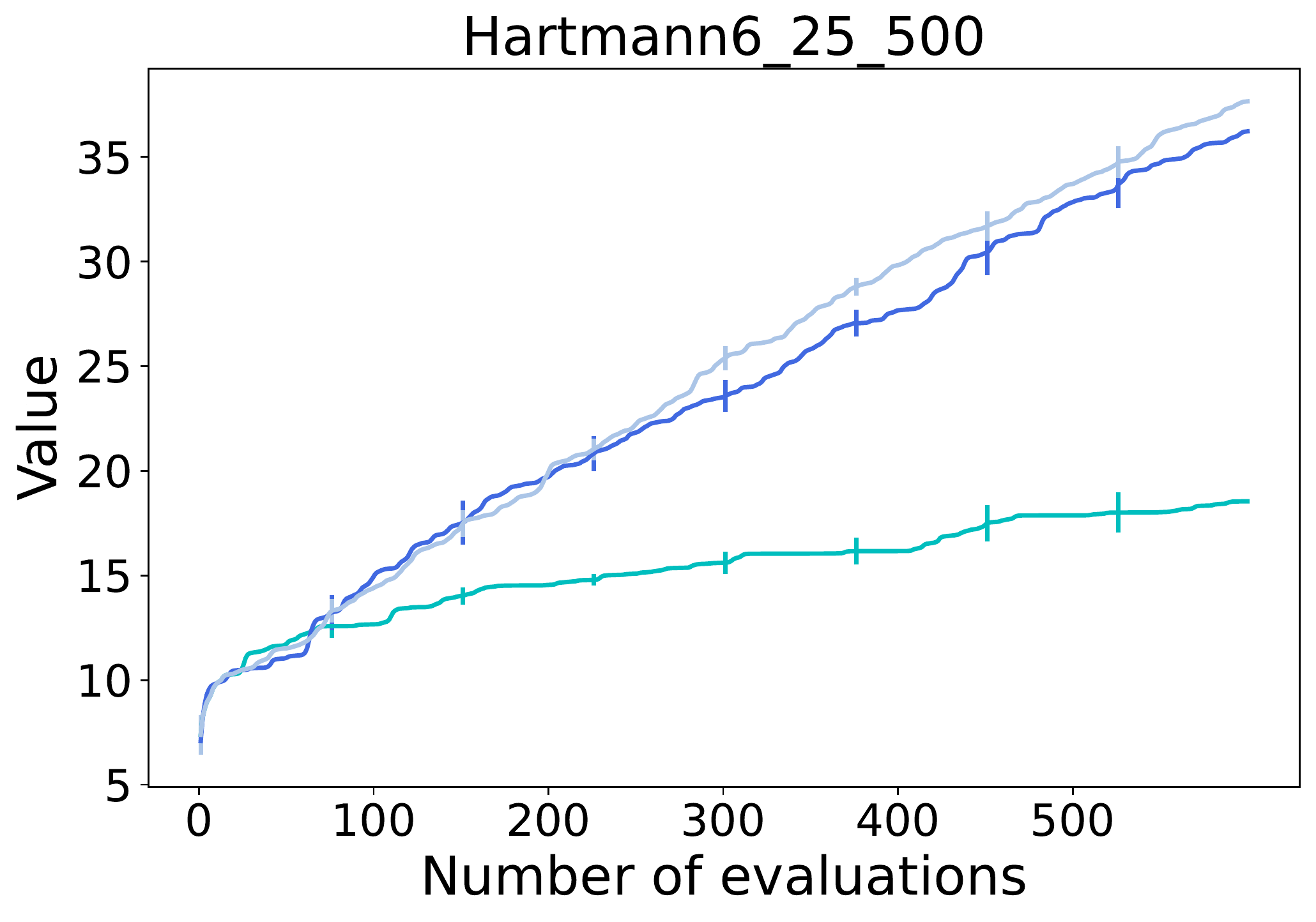}}
    \subfigure{\includegraphics[width=0.24\textwidth]{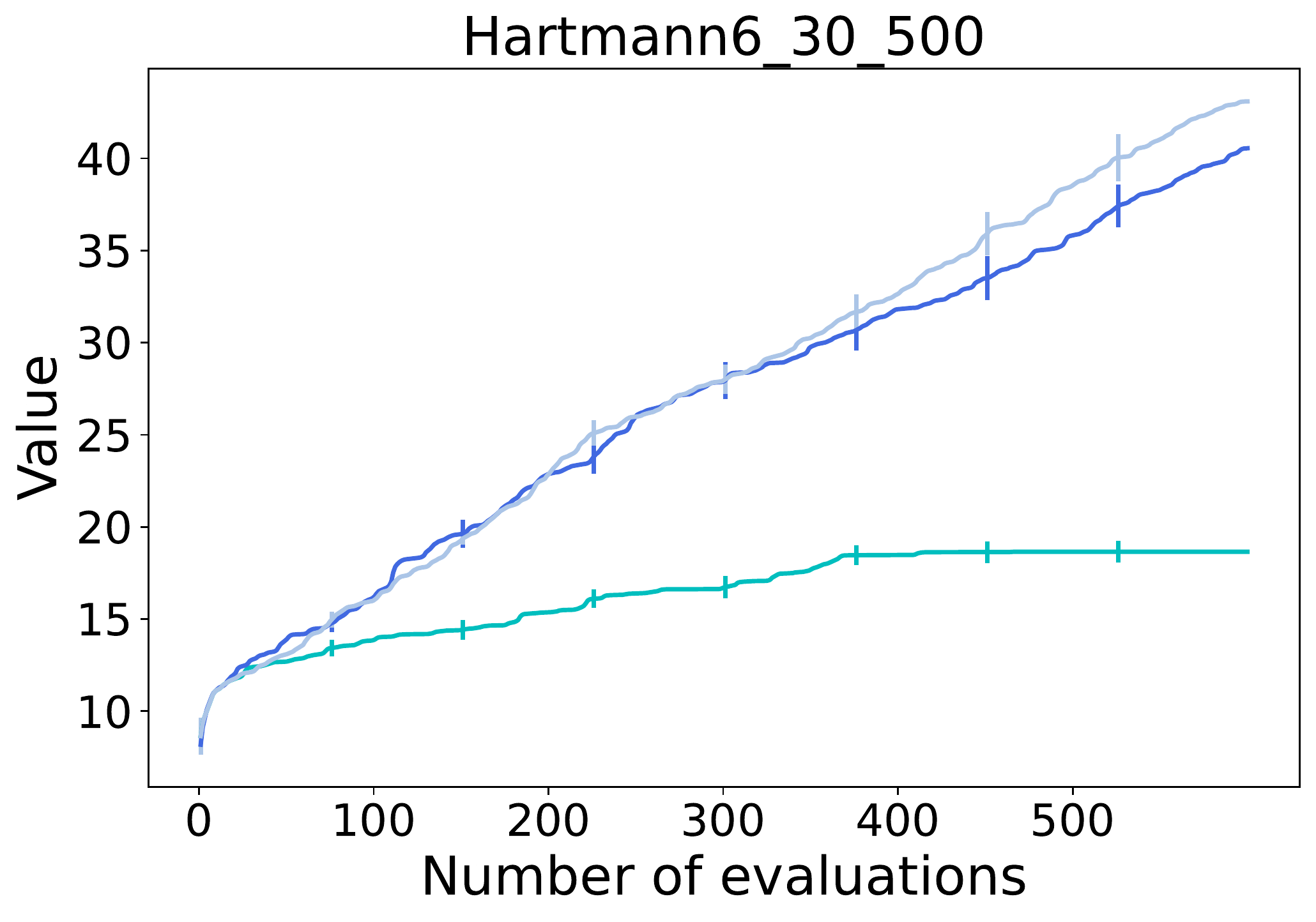}}
    \subfigure{\includegraphics[width=0.24\textwidth]{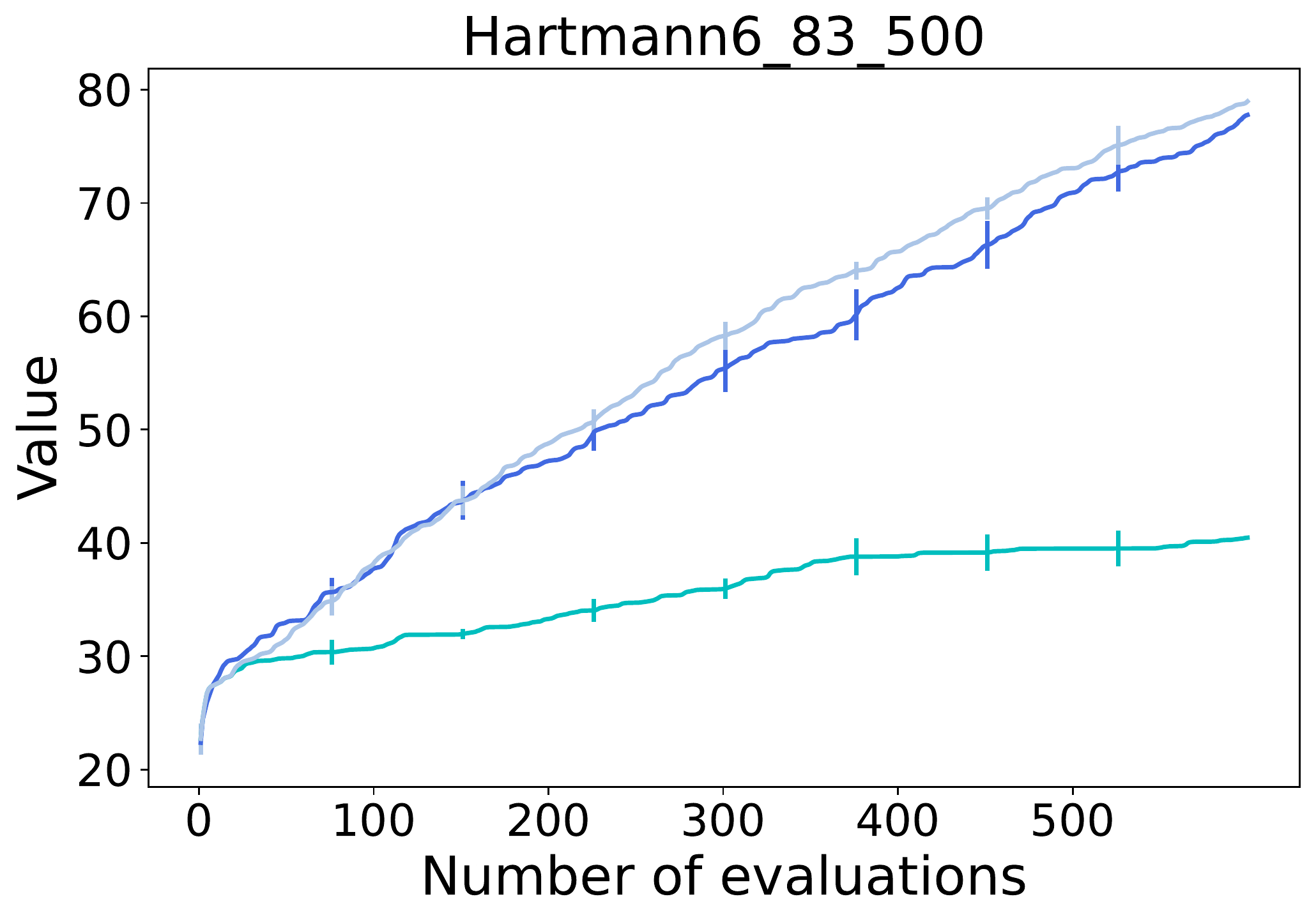}}
    \caption{\rbt{Performance comparison with increasing ratio of valid variables.}}
    \label{fig:increasing_ratio}
\end{figure*}

\songl{\textbf{Hierarchical variable selection for extremely high-dimensional problems.} We attempt to combine MCTS-VS and SAASBO (i.e., MCTS-VS-SAASBO) to handle extremely high-dimensional problems. MCTS-VS-SAASBO can be viewed as a hierarchical variable selection method, i.e., MCTS-VS first performs an efficient but rough variable selection to select some variables, and then SAASBO performs a time-consuming but precise variable selection under the relative low-dimensional space, to further select the important variables. We run MCTS-VS-SAASBO and SAASBO on Hartmann$6$\_$500$. The results are shown in Figure~\ref{fig:hierarchy-vs}. The performance of MCTS-VS-SAASBO and SAASBO is similar. But when considering the runtime, the time of $200$ iterations of MCTS-VS-SAASBO is about $6000$s, while the time of SAASBO is about $45000$s. That is, MCTS-VS-SAASBO can achieve more than $7$ times acceleration. The curves of using the wall clock time as the $x$-axis in the right sub-figure of Figure~\ref{fig:hierarchy-vs} clearly show the advantage of MCTS-VS-SAASBO over SAASBO. MCTS-VS-SAASBO selects the variables containing important ones by MCTS and then uses SAASBO to optimize the selected variables, which reduces the dimension and thus costs much less time than using SAASBO directly. The combination of MCTS-VS and SAASBO may be a potential solution for BO to handle extremely high-dimensional optimization problems, where it is difficult to select important variables directly.}

\begin{figure*}[htbp!]
    \centering
    \subfigure{\includegraphics[width=0.4\textwidth]{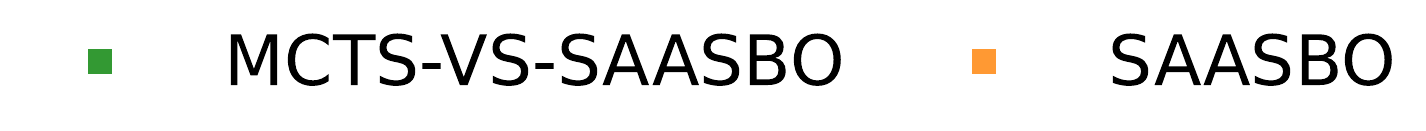}}\vspace{-1em} \\
    \centering
    \subfigure{\includegraphics[width=0.4\textwidth]{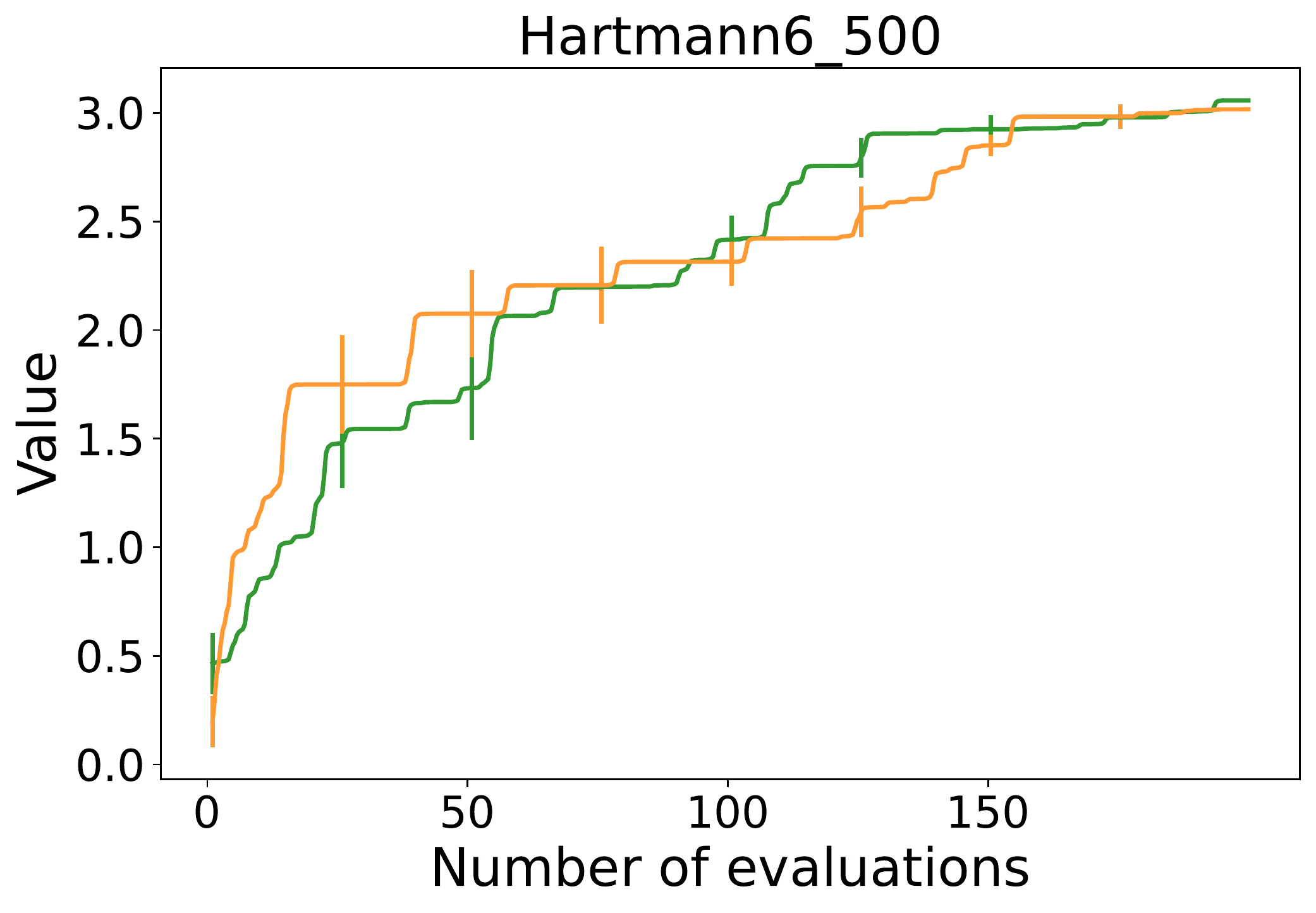}}
    \subfigure{\includegraphics[width=0.4\textwidth]{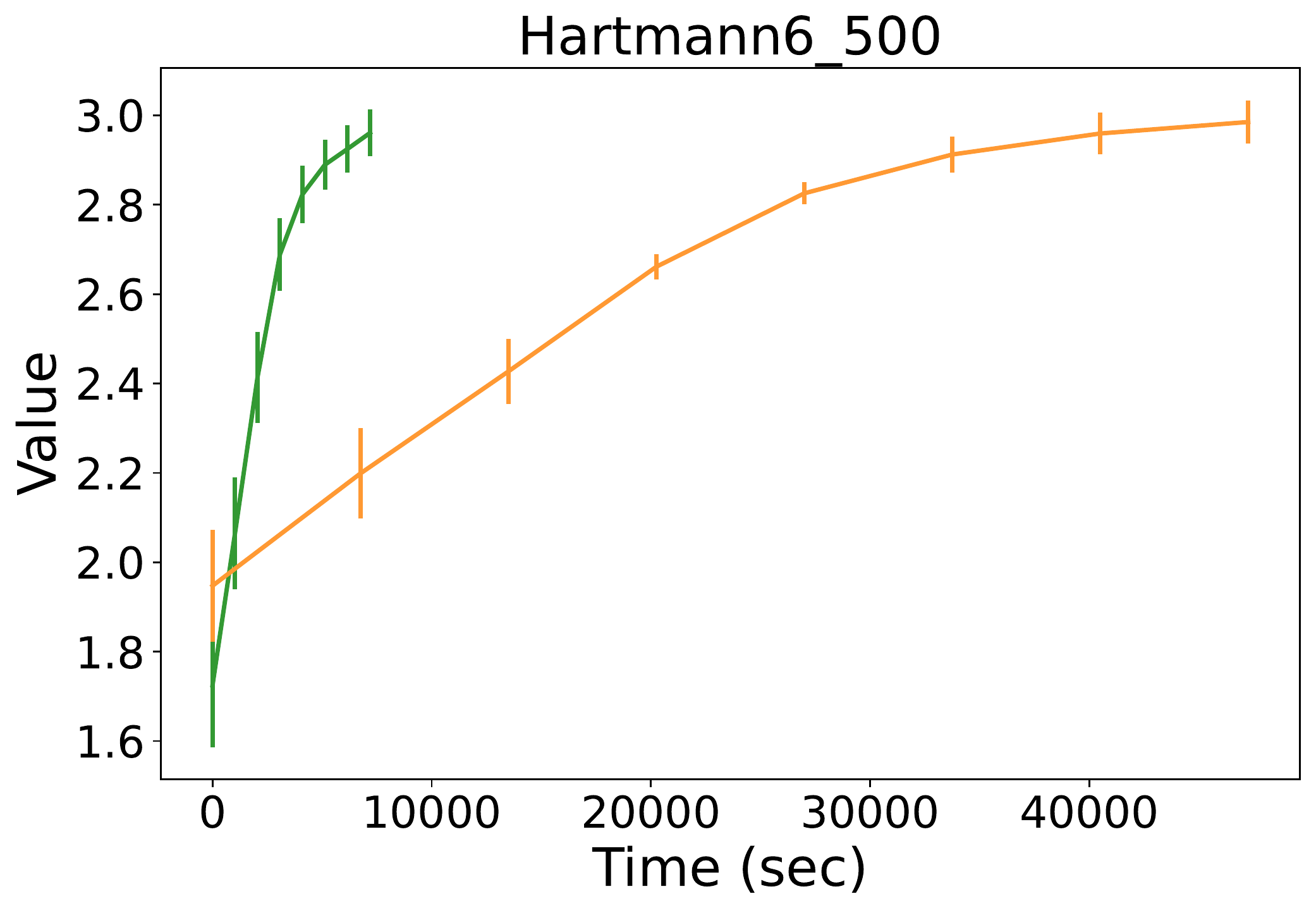}}
    \caption{Performance comparison among MCTS-VS-SAASBO and SAASBO on the synthetic function Hartmann$6$\_$500$.}
    \label{fig:hierarchy-vs}
\end{figure*}

\rbt{\textbf{Comparison with LASSO-VS.} There are other variable selection methods (e.g., LASSO), which are not designed for high dimensional BO but can be used directly. We have implemented the LASSO-based variable selection method, named LASSO-VS. We compare MCTS-VS, LASSO-VS and Dropout on the synthetic function Hartmann$6$\_$300$. When using LASSO-VS, the $d$ variables with the largest absolute values of the regression coefficients are selected at each iteration. The results are shown in Figure~\ref{fig:lasso}. When equipped with either BO or TuRBO, the proposed MCTS-VS always performs the best. We can also observe that when equipped with BO, LASSO-VS can even be worse than Dropout. This may be because many of existing variable selection methods (e.g., LASSO) usually require a large number of samples to fit the linear regression model well, while in BO scenarios, only a limited number of samples can be evaluated.}

\begin{figure*}[htbp!]
    \centering
    \subfigure{\includegraphics[width=0.8\textwidth]{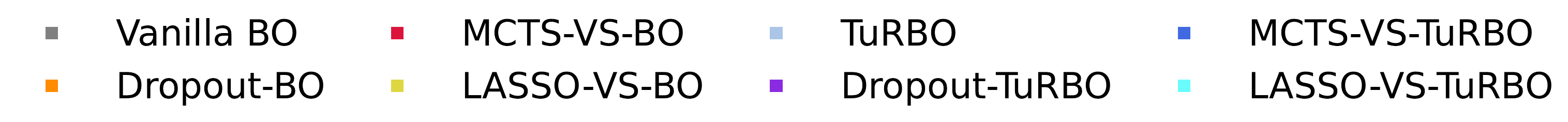}}\vspace{-1em}
    \centering
    \subfigure{\includegraphics[width=0.4\textwidth]{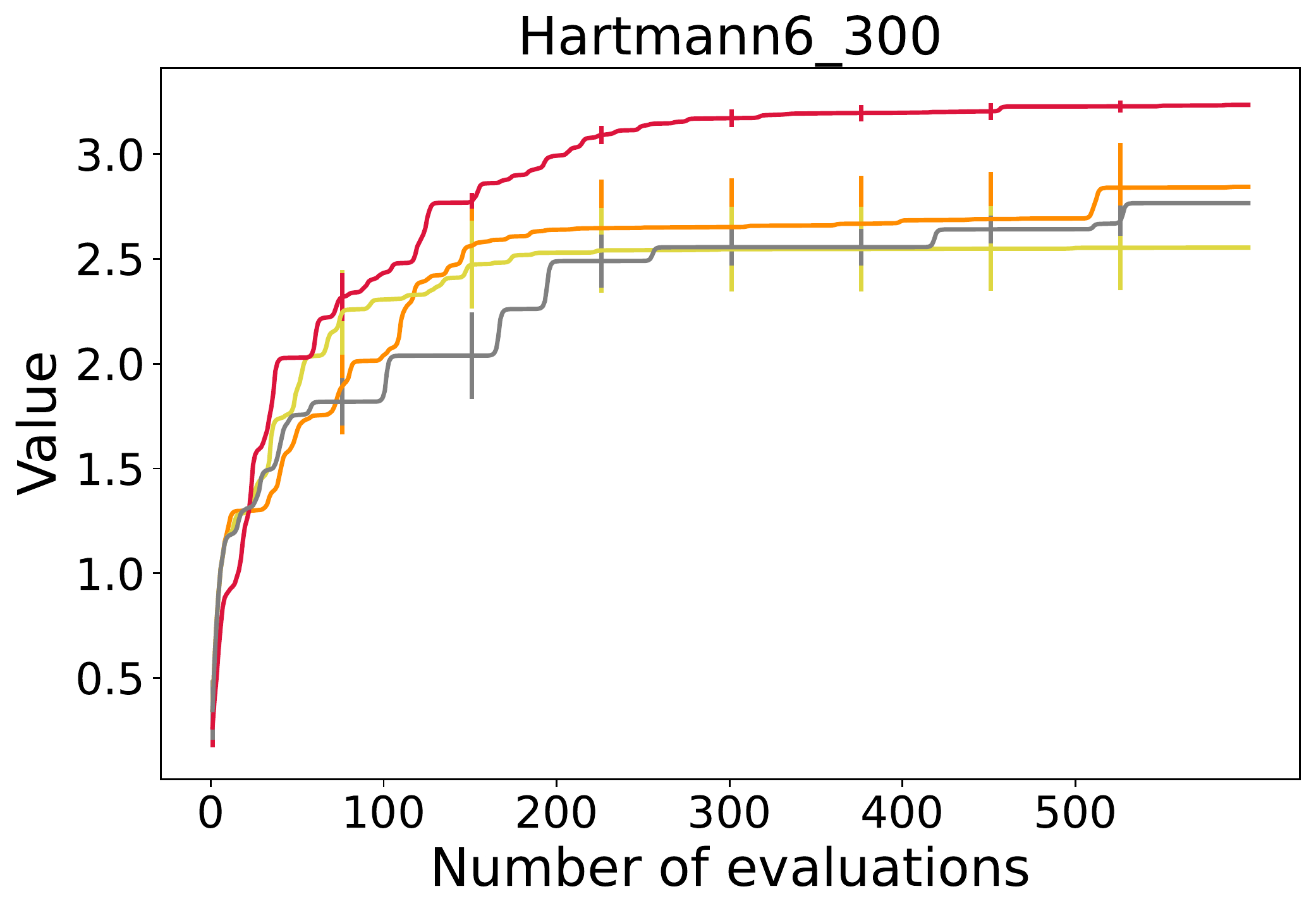}}
    \subfigure{\includegraphics[width=0.4\textwidth]{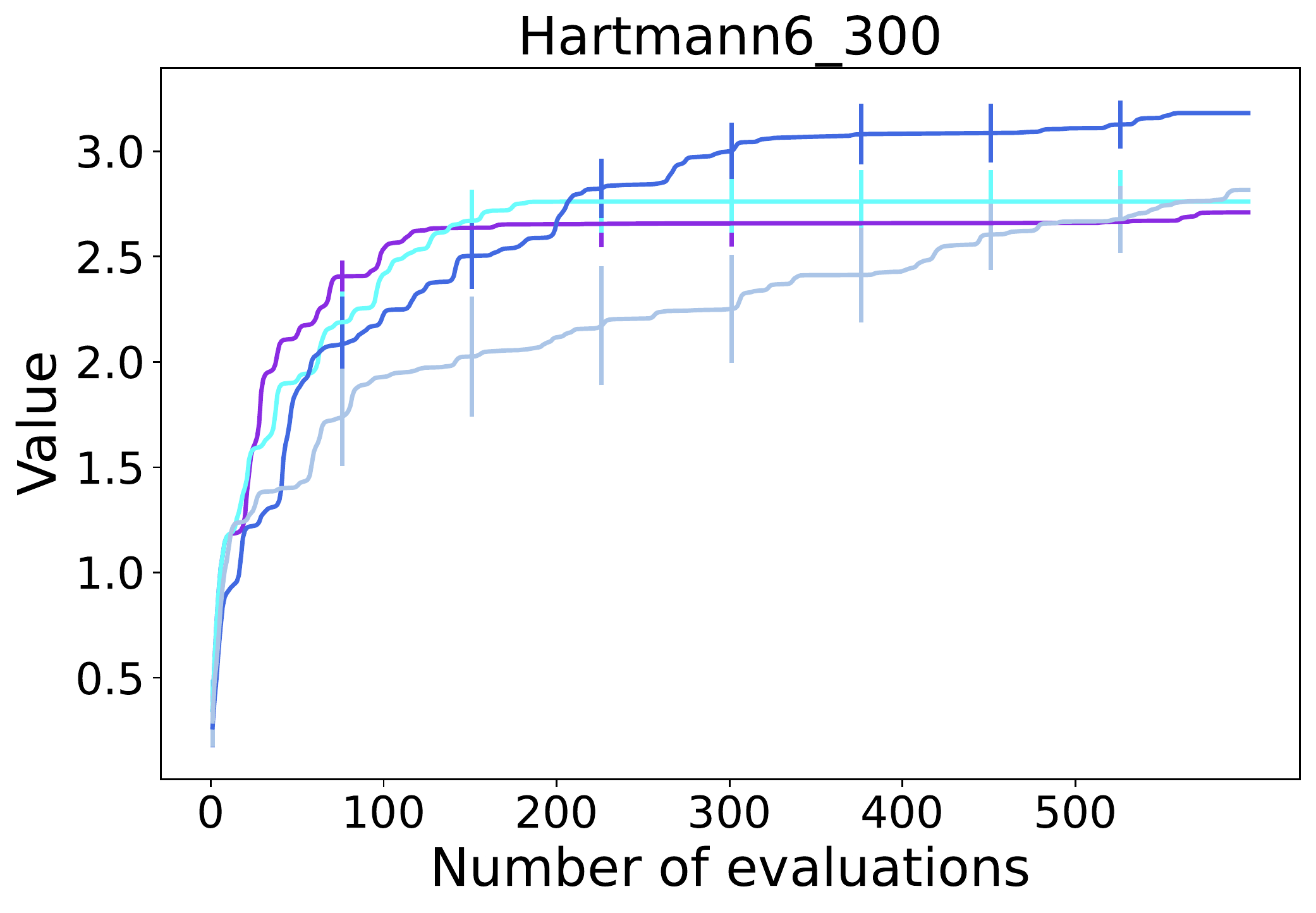}}
    \caption{\rbt{Performance comparison among MCTS-VS, LASSO-VS and Dropout on the synthetic function Hartmann$6$\_$300$.}}
    \label{fig:lasso}
\end{figure*}

\textbf{Statistical tests.} As most of the previous works, we have conducted experiments using 5 random seeds (2021--2025). Here, we also conduct statistical \rbt{tests} on Hartmann and Levy functions by running the methods for 50 times (random seeds 2021--2070), to make a more confident comparison. Considering the performance and runtime of the methods we have observed in Figure~\ref{fig:synthetic2}, we only compare MCTS-VS with LA-MCTS-TuRBO and TuRBO, which achieve good performance in acceptable time. The results are shown in Table~\ref{tab:significance_test}. MCTS-VS-BO achieves the best average objective value on all the synthetic functions except Levy$10$\_$100$ where the dimension is relatively low and TuRBO performs the best. By the Wilcoxon signed-rank test with confidence level 0.05, MCTS-VS-TuRBO is significantly better than LA-MCTS-TuRBO on all the synthetic functions, showing the advantage of MCTS-VS over LA-MCTS for variable selection. Compared with TuRBO, MCTS-VS-TuRBO is only significantly better on Hartmann functions, which may be because the ratio of valid variables of Hartmann$6$\_$300$ and Hartmann$6$\_$500$ is lower than that of Levy$10$\_$100$ and Levy$10$\_$300$, and thus the advantage of performing variable selection by MCTS-VS is more clear. Note that the observations about the performance rank of the compared methods are consistent with that observed in Figure~\ref{fig:synthetic2}, which plot the results of the compared methods by running five times.  

\begin{table}[htbp!]\scriptsize
\centering
\caption{Objective values obtained by MCTS-VS-BO, MCTS-VS-TuRBO, LA-MCTS-TuRBO and TuRBO on synthetic functions. Each result consists of the mean and standard deviation of 50 runs. The best mean value on each problem is bolded. The symbols `$+$', `$-$' and `$\approx$' indicate that MCTS-VS-TuRBO is significantly superior to, inferior to, and almost equivalent to the corresponding method, respectively, according to the Wilcoxon signed-rank test with confidence level 0.05.}\vspace{0.5em}
\resizebox{\linewidth}{!}{
\begin{tabular}{c|c c c c}
\toprule
Problem  & MCTS-VS-BO  & MCTS-VS-TuRBO & LA-MCTS-TuRBO & TuRBO \\\midrule
Levy10\_100 & -2.620(1.757) $+$ & -1.102(1.711) & -2.444(1.708) $+$ & \textbf{-0.662}(1.049) $-$ \\
Levy10\_300 & \textbf{-1.506}(0.854) $\approx$ & -1.765(1.811) & -6.218(3.389) $+$ & -1.855(2.038) $\approx$ \\
Hartmann6\_300 & \textbf{3.223}(0.074) $\approx$ & 3.153(0.264) & 2.892(1.147) $+$ & 2.857(0.475) $+$ \\
Hartmann6\_500 & \textbf{3.200}(0.091) $-$ & 3.012(0.434) & 2.619(0.672) $+$ & 2.629(0.672) $+$ \\
\midrule $+$/$-$/$\approx$ & 1/1/2 & /  & 4/0/0  & 2/1/1 \\
\bottomrule
\end{tabular}}
\label{tab:significance_test}
\end{table}

\section{\rbt{Enlargement of Some Figures in the Main Paper}}

\rbt{Due to space limitation, Figures~\ref{fig:synthetic1} and~\ref{fig:synthetic2} in the main paper are a little small. Here, we also provide their enlarged versions, i.e., Figures~\ref{fig:enlargement_synthetic1} and~\ref{fig:enlargement_synthetic2}. }

\begin{figure*}[htbp!]
    \centering
    \hspace{-1em}\subfigure{\includegraphics[width=0.7\textwidth]{figs/exp1_1_legend.pdf}}\\\vspace{-1em}
    \centering
    \subfigure{\includegraphics[width=0.4\textwidth]{figs/hartmann6_300_exp1_1.pdf}}
    \subfigure{\includegraphics[width=0.4\textwidth]{figs/hartmann6_500_exp1_1.pdf}}
    \centering
    \subfigure{\includegraphics[width=0.7\textwidth]{figs/exp1_2_legend.pdf}}\\\vspace{-1em}
    \subfigure{\includegraphics[width=0.4\textwidth]{figs/hartmann6_300_exp1_2.pdf}}
    \subfigure{\includegraphics[width=0.4\textwidth]{figs/hartmann6_500_exp1_2.pdf}}\vspace{-0.8em}
    \caption{\rbt{Performance comparison among the two variable selection methods (i.e., MCTS-VS and Dropout) and the BO methods (i.e., Vanilla BO and TuRBO) on two synthetic functions.}}
    \label{fig:enlargement_synthetic1}\vspace{-0.5em}
\end{figure*}

\begin{figure*}[h!]
    \centering
    \subfigure{\includegraphics[width=0.9\textwidth]{final-version/legend/exp2_legend_1.pdf}}\\
    \vspace{-1.3em}
    \subfigure{\includegraphics[width=0.8\textwidth]{final-version/legend/exp2_legend_2.pdf}}\\
    \vspace{-1em}
    \centering
    \subfigure{\includegraphics[width=0.4\textwidth]{figs/levy10_100_exp2.pdf}}
    \subfigure{\includegraphics[width=0.4\textwidth]{figs/levy10_300_exp2.pdf}}
    \subfigure{\includegraphics[width=0.4\textwidth]{final-version/hartmann6_300_exp2.pdf}}
    \subfigure{\includegraphics[width=0.4\textwidth]{final-version/hartmann6_500_exp2.pdf}}\vspace{-0.8em}
    \caption{\rbt{Comparison among MCTS-VS and state-of-the-art methods on synthetic functions.}}
    \label{fig:enlargement_synthetic2}\vspace{-0.5em}
\end{figure*}



\end{document}